\def\eqref#1{equation~\ref{#1}}
\def\1{\bm{1}}
\DeclareMathAlphabet{\mathsfit}{\encodingdefault}{\sfdefault}{m}{sl}
\SetMathAlphabet{\mathsfit}{bold}{\encodingdefault}{\sfdefault}{bx}{n}
\newcommand{\R}{\mathbb{R}}
\DeclareMathOperator*{\argmin}{arg\,min}
\newcommand{\nleft}{\mathopen{}\mathclose\bgroup\left}
\newcommand{\nright}{\aftergroup\egroup\right}
\NewDocumentCommand{\lin}{sm}{\IfBooleanTF{#1}{\langle#2\rangle}{\nleft\langle#2\nright\rangle}}
\NewDocumentCommand{\norm}{sm}{\IfBooleanTF{#1}{\Vert#2\Vert}{\nleft\Vert#2\nright\Vert}}
\NewDocumentCommand{\abs}{sm}{\IfBooleanTF{#1}{\vert#2\vert}{\nleft\vert#2\nright\vert}}
\NewDocumentCommand{\paren}{sm}{\IfBooleanTF{#1}{(#2)}{\nleft(#2\nright)}}
\NewDocumentCommand{\braces}{sm}{\IfBooleanTF{#1}{\{#2\}}{\nleft\{#2\nright\}}}
\NewDocumentCommand{\brackets}{sm}{\IfBooleanTF{#1}{[#2]}{\nleft[#2\nright]}}
\newcommand{\normsq}[1]{\norm{#1}^2}
\newcommand{\tinysup}[1]{^{\text{\tiny $#1$}}}
\def\Linf{L_\infty}
\def\Ltwoinf{L_{2, \infty}}
\def\wk{\*w_k}
\def\xi{\*x_i}
\def\x{\*x}
\def\w{\*w}
\def\ut{\*u_{t}}
\def\fstar{f^*}
\def\adagrad/{{AdaGrad}}
\def\amsgrad/{{AMSGrad}}
\def\adam/{{Adam}}
\def\rmsprop/{{RMSProp}}
\def\adadelta/{{AdaDelta}}
\def\pytorch/{{PyTorch}}
\def\mushrooms/{{\texttt{mushrooms}}}
\def\abalone/{{\texttt{abalone}}}
\def\usps/{{\texttt{USPS}}}
\def\polyaklojasiewicz/{Polyak-{\L}ojasiewicz (PL)}
\def\*#1{\boldsymbol{#1}}
\newtheorem{lemma}{Lemma}
\newtheorem{assumption}{Assumption}
\crefname{assumption}{Assumption}{Assumptions}
\crefname{example}{Example}{Examples}
\crefname{thm}{Theorem}{Theorems}
\crefname{lem}{Lemma}{Lemmas}
\crefname{prop}{Proposition}{Propositions}
\crefname{cor}{Corollary}{Corollaries}
\title{GraB: Finding Provably Better Data Permutations \\ than Random Reshuffling}
\author{Yucheng Lu}
\author{Wentao Guo}
\author{Christopher De Sa}
\affil{Department of Computer Science, Cornell\ University \\
\texttt{\{yl2967,wg247,cmd353\}@cornell.edu}}
\date{}
\definecolor{codegreen}{rgb}{0,0.6,0}
\definecolor{codegray}{rgb}{0.5,0.5,0.5}
\definecolor{codepurple}{rgb}{0.58,0,0.82}
\definecolor{backcolour}{rgb}{0.95,0.95,0.92}
\lstdefinestyle{mystyle}{
    backgroundcolor=\color{backcolour},   
    commentstyle=\color{codegreen},
    keywordstyle=\color{magenta},
    numberstyle=\tiny\color{codegray},
    stringstyle=\color{codepurple},
    basicstyle=\ttfamily\footnotesize,
    breakatwhitespace=false,         
    breaklines=true,                 
    captionpos=b,                    
    keepspaces=true,                 
    numbers=left,                    
    numbersep=5pt,                  
    showspaces=false,                
    showstringspaces=false,
    showtabs=false,                  
    tabsize=2
}
\begin{document}

\maketitle

\begin{abstract}
    Random reshuffling, which randomly permutes the dataset each epoch, is widely adopted in model training because it yields faster convergence than with-replacement sampling. Recent studies indicate greedily chosen data orderings can further speed up convergence empirically, at the cost of using more computation and memory. However, greedy ordering lacks theoretical justification and has limited utility due to its non-trivial memory and computation overhead. In this paper, we first formulate an example-ordering framework named \emph{herding} and answer affirmatively that SGD with herding converges at the rate $O(T^{-2/3})$ on smooth, non-convex objectives, faster than the $O(n^{1/3}T^{-2/3})$ obtained by random reshuffling, where $n$ denotes the number of data points and $T$ denotes the total number of iterations. To reduce the memory overhead, we leverage discrepancy minimization theory to propose an online Gradient Balancing algorithm (GraB) that enjoys the same rate as herding, while reducing the memory usage from $O(nd)$ to just $O(d)$ and computation from $O(n^2)$ to $O(n)$, where $d$ denotes the model dimension. We show empirically\footnote{The experimental code is available at \url{https://github.com/EugeneLYC/GraB}.} on applications including MNIST, CIFAR10, WikiText and GLUE that GraB can outperform random reshuffling in terms of both training and validation performance,
    and even outperform state-of-the-art greedy ordering while reducing memory usage over $100\times$.
\end{abstract}

\section{Introduction}
Many machine learning problems can be formulated as minimizing a differentiable (loss) function $f: \mathbb{R}^d\rightarrow\mathbb{R}$ for a set of data examples $\{\*x_i\}_{i=1}^n$. To train a parameterized model, the goal is to obtain a set of target parameters $\w^* = \argmin f(\w)$, where $f(\w) = \frac{1}{n} \sum_{i=1}^{n} f(\*w; \*x_i)$,
and $f(\*w; \*x_i)$ denotes the loss incurred on the $i$-th data example $\*x_i$ (usually a mini-batch of images, sentences, etc.) with model parameters $\*w$. A typical model training, or optimization process, is to iteratively update the model parameter $\*w$ starting from some initial $\*w^{(1)}$ by running
\begin{align}
    \*w^{(t+1)} = \*w^{(t)} - \alpha \nabla f(\*w^{(t)}; \*x_{\sigma(t)}) \hspace{2em} t=1, 2, \cdots
\end{align}
where $\alpha$ denotes the step size, and $\sigma:\{1, \dots, n\}\rightarrow \{1, \dots, n\}$ denotes a permutation (ordering) from which the examples are chosen to compute the stochastic gradients\footnote{In this paper, we consistently use the term \emph{stochastic gradients} to refer to gradients computed on a single (or a set of) data example(s), even when those examples are not selected at random.}. A widely adopted ordering protocol is Random Reshuffling (RR), where the optimizer scans in an order drawn at random without replacement over the entire training dataset multiple times during training (a common name for a single such scan is an ``epoch'').
RR allows the optimizer to converge faster empirically and enjoys a better convergence rate in theory \citep{mishchenko2020random}. 
Despite RR's theoretical improvement, it has been proven that RR does not always guarantee a good ordering \citep{yun2021can,de2020random}; in fact a random permutation is far from being optimal even when optimizing a simple quadratic objective \citep{rajput2021permutation}. In light of this, a natural research question is: 
\begin{center}\emph{Can we find provably better orderings than Random Reshuffling---and do so efficiently?}\end{center}
Recent studies indicate the possibility of greedily constructing better permutations using stale estimates of each $\nabla f(\*w;\*x_j)$ from the previous epoch \citep{lu2021general,mohtashami2022characterizing}. Concretely, \citet{lu2021general} proved that for any model parameters $\*w \in \mathbb{R}^d$, if sums of consecutive stochastic gradients converge faster to the full gradient, then the optimizer will converge faster. Formally, in one epoch, any permutation $\sigma$ minimizing the term (named \emph{average gradient error})
\begin{align}
\label{equa:average_grad_error}
    \max_{k\in\{1, \ldots, n\}} \; \bigg\|  \sum_{t=1}^{k} \bigg( \underbrace{\nabla f(\*w; \*x_{\sigma(t)}) - \nabla f(\*w)}_{\text{gradient error}} \bigg) \bigg\|,
\end{align}
leads to fast convergence. Leveraging this insight, \citet{lu2021general} proposes to greedily select $\sigma(t)$ one at a time for all $t \in \{1, \ldots, n\}$ at the beginning of the $(k+1)$-th epoch, using the stochastic gradients computed \emph{during} the $k$-th epoch as an estimate. 
This strategy works well empirically, but it remains an open question how its convergence rate compares to that of RR. Moreover, the greedy ordering method intensively consumes $O(nd)$ memory to store the gradients and $O(n^2)$ computation to order, which significantly hinders its usefulness in practice. For instance, training a simple logistic regression on MNIST would easily cost more than 1 GB additional memory compared to RR.

In this paper, we address the limitations of previous order-selection approaches, culminating in proposing a new algorithm, GraB, which converges faster than RR both in theory and in practice without any blow-up in memory or computational time.
First, we address the theoretical gap by connecting the convergence of permuted-order SGD to the classic \emph{herding} problem \citep{harvey2014near}: we show how any algorithm for solving the herding problem can be run on stale stochastic gradients to select an example ordering. While the greedy selection method of previous work may fail to select a good ordering, using the best herding algorithms from the literature yields a better convergence rate: $O(T^{-2/3})$ compared to RR's $O(n^{1/3}T^{-2/3})$ on smooth non-convex problems and  $\tilde{O}(T^{-2})$ compared to RR's $\tilde{O}(nT^{-2})$ under PL condition, where $T$ is the total number of iterations.
Building on this, we leverage \emph{discrepancy minimization} theory \citep{bansal2013deterministic} to propose an online Gradient Balancing algorithm (GraB) that achieves the same $O(T^{-2/3})$ rate but only requires $O(d)$ memory and $O(n)$ computation.
Perhaps surprisingly, on multiple deep learning applications we observe GraB not only allows fast minimization of the empirical risk, but also lets the model generalize better. Our contributions in this paper can be summarized as follows:
\begin{itemize}[nosep,leftmargin=12pt]
    \item We formulate a general framework of sorting stochastic gradients with herding, and prove SGD with herding converges at a faster rate than Random Reshuffling
    (Section~\ref{sec:offline_herding}).
    \item We propose an online Gradient Balancing (GraB) algorithm that enjoys the same fast convergence rate as herding SGD while significantly reducing the memory from $O(nd)$ to $O(d)$ and computation from $O(n^2)$ to $O(n)$ (Section~\ref{sec:online_balancing}).
    \item We conduct extensive experiments on MNIST, CIFAR10, WikiText and GLUE with different models. We demonstrate that GraB converges faster in terms of both training and validation compared to Random Reshuffling and other baselines (Section~\ref{sec:experiment}).
\end{itemize}

\section{Related Work}
While traditional data ordering research mostly focuses on with-replacement samplers \citep{schmidt2017minimizing,needell2014stochastic,lu2021variance},
without-replacement sampling is more common in practice \citep{bottou2012stochastic}. Random Reshuffling (RR) \citep{ying2017performance} and the related shuffle-once (SO) method \citep{bertsekas2011incremental,gurbuzbalaban2019convergence} are among the most popular data permutation methods. 
\citet{recht2012toward} undertakes the first theoretical investigation of RR, while subsequent works like \citep{yun2021can,NEURIPS2020_42299f06} give counter examples where RR orders badly. \citet{haochen2019random,gurbuzbalaban2021random}, and \citet{mishchenko2020random} discuss extensively on the conditions needed for RR to benefit. Some recent works \citep{lu2021general,mohtashami2022characterizing} suggest constructing better data permutations than RR via a memory-intensive greedy strategy.
Concretely, \citet{mohtashami2022characterizing} proposes evaluating gradients on all the examples first to minimize Equation~(\ref{equa:average_grad_error}) before starting an epoch, applied to Federated Learning; \citet{lu2021general} provides an alternative of minimizing Equation~(\ref{equa:average_grad_error}) using stale gradients from previous epoch to estimate the gradient on each example.
\citet{rajput2021permutation} introduces an interesting variant to RR by reversing the ordering every other epoch, achieving better rates for quadratics.
Other approaches, such as curriculum learning \citep{bengio2009curriculum}, try to order the data to mimic human learning and improve generalization \citep{graves2017automated,matiisen2019teacher,soviany2022curriculum}: these approaches differ from our goal of finding good permutations for minimizing loss in a finite-sum setting.

\section{Offline Stale-Gradient Herding}
\label{sec:offline_herding}
\begin{algorithm}[t]
\small
    \begingroup
    \setlength{\abovedisplayskip}{1pt}
    \setlength{\belowdisplayskip}{1pt}
	\caption{Herding with Greedy Ordering \citep{lu2021general}}\label{alg:greedy}
	\begin{algorithmic}[1]
	\State \textbf{Input:} a group of vectors $\{\*z_i\}_{i=1}^n$.
	\State Center all the vectors: $\*z_i\leftarrow \*z_i - \frac{1}{n}\sum_{j=1}^{n}\*z_j$, $\forall i\in[n]$.
	\State Initialize an arbitrary $\sigma$, running partial sum: $\*s\leftarrow \*0$, candidate set $\Phi\leftarrow\{1, \cdots, n\}$.
	\For{$i = 1,\dots, n $}
	    \State Iterate through $\Phi$ and select $\*z_j$ from $\Phi$ that minimizes $\norm{\*s + \*z_{j}}$.
	    \State Remove $j$ from $\Phi$, update partial sum and order: $\*s\leftarrow \*s + \*z_{j}$; $\sigma(i)\leftarrow j$.
	\EndFor
	\State \Return $\sigma$.
	\end{algorithmic}
	\endgroup
\end{algorithm}
\begin{algorithm}[t]
\small
    \begingroup
    \setlength{\abovedisplayskip}{1pt}
    \setlength{\belowdisplayskip}{1pt}
	\caption{General Framework of SGD with Offline Herding}\label{alg:herding}
	\begin{algorithmic}[1]
	\State \textbf{Input:} number of epochs $K$, initialized order $\sigma_1$, initialized weight $\*w_1$, learning rate $\alpha$.
	\For{$k = 1,\dots, K $}
	    \For{$t = 1, \dots, n$}
	        \State Compute gradient $\nabla f(\*w_k^{(t)}; \*x_{\sigma_k(t)})$.
	        \State Store the gradient: $\*z_t\leftarrow \nabla f(\*w_k^{(t)}; \*x_{\sigma_k(t)})$.
	        \State Optimizer Step: $\*w_{k}^{(t+1)} \leftarrow \*w_{k}^{(t)} - \alpha\nabla f(\*w_k^{(t)}; \*x_{\sigma_k(t)})$.
	    \EndFor
	    \State Generate new order: $\sigma_{k+1} \leftarrow \texttt{Herding}(\{\*z_t\}_{t=1}^{n})$.
	    \State $\*w_{k+1}^{(1)} \leftarrow \*w_{k}^{(n+1)}$.
	\EndFor
	\State \Return $\*w_K^{(1)}$.
	\end{algorithmic}
	\endgroup
\end{algorithm}
In this section, we study the use of \emph{stale gradients}---the stochastic gradients for each example from the previous epoch, saved in memory---to construct a data permutation at the start of each epoch.
\citet{lu2021general} and \citet{mohtashami2022characterizing} propose to use greedy ordering in this way---selecting examples one at a time to minimize Equation~(\ref{equa:average_grad_error}).
Here, we first show that 
this objective is closely related to the classic \emph{herding} problem from discrepancy theory. We demonstrate with examples from the herding formulation that in the adversarial setting, greedy selection is bad at herding and can underperform basic random reshuffling. We conclude this section by proving SGD with proper herding converges faster than random reshuffling.

\textbf{Herding.}
The \emph{herding} problem originates from \citet{welling2009herding} for sampling from a
Markov random field that agrees with a given data set. Its discrete version is later formulated in \citet{harvey2014near}. Given $n$ vectors $\{\*z_i\}_{i=1}^{n}$ that are $d$-dimensional (i.e. in $\R^d$) and have norm $\norm{\*z_i}_2 \le 1$, the herding problem we study\footnote{A small distinction is \citet{harvey2014near} investigates infinite sequences while here we are studying a finite ordering, i.e. a permutation.} is the task of finding a permutation $\sigma^*:[n] \rightarrow [n]$ that minimizes
\begin{align}
\label{equation:herding_objective}
\max_{k\in\{1, \ldots, n\}} \; \bigg\| \sum\nolimits_{t=1}^{k}\bigg(\underbrace{\*z_{\sigma(t)} - \frac{1}{n} \sum\nolimits_{i=1}^{n}\*z_{i}}_{\text{vector error}}\bigg) \bigg\|_\infty.
\end{align}
State-of-the-art algorithms guarantee an $\tilde{O}(1)$ bound to this objective \citep{harvey2014near}.
It is straightforward to observe the the formulation of Equation~(\ref{equation:herding_objective}) generalizes Equation~(\ref{equa:average_grad_error}) if we replace each $\*z_i$ with the stochastic gradient computed on the example $i$. 
In other words, any algorithm that minimizes Equation~(\ref{equation:herding_objective}) can be used to minimize Equation~(\ref{equa:average_grad_error}) and find a better data permutation.
The simplest way to do this is with the greedy algorithm for herding, which we show in Algorithm~\ref{alg:greedy}:
this is essentially what the previous work was doing to select an order based on stale gradients. 
Unfortunately, despite its elegance, it was first pointed out in \citet{chelidze2010greedy} that the greedy ordering can underperform random permutation.
We adapt their result to show the following.

\textbf{Statement 1.}
    \textit{There exist $n$ vectors  in $\R^2$ such that when applying Algorithm~\ref{alg:greedy}, the objective in Equation~(\ref{equa:average_grad_error}) becomes $\Omega(n)$; a random permutation is guaranteed to achieve $O(\sqrt{n})$ on average.}


\textbf{Herding SGD with Stale Gradients.}
Based on the herding framework, a path to SGD with better data permutation becomes clear: when training models during a epoch, we can simply store all the stochastic gradients, and then herd them (with some herding algorithm) offline at the start of the next epoch to obtain the order for that epoch. This stale-gradient approach is formally described in Algorithm~\ref{alg:herding}, and could be contrasted with a fresh-gradient approach (as in \citet{mohtashami2022characterizing}) that herded with newly computed stochastic gradients at the start of each epoch---it is desirable to avoid the use of fresh gradients if possible as they double the gradient computations needed each epoch. The greedy ordering of \citet{lu2021general} can be described as running Algorithm~\ref{alg:herding} using Algorithm~\ref{alg:greedy} as the \texttt{Herding} subroutine.
However, as greedy herding with Algorithm~\ref{alg:greedy} does not have a good worst-case guarantee, we propose using some other herding algorithm instead.
There have been many algorithms proposed that can reduce the Equation~(\ref{equation:herding_objective}) to $\tilde{O}(1)$ in polynomial time.
Here, we show that this herding-objective bound is sufficient to prove convergence of herded SGD and even yield a better convergence rate than Random Reshuffling.
We defer the details of the herding algorithm to Section~\ref{sec:online_balancing}, as they do not affect the convergence rate of SGD.

We start by stating some assumptions for non-convex optimization, as well as our herding bound.
\begin{table}[t]
  \centering
  \caption{Table summarizing the theoretical results in this paper, where $n$ denotes the total number of data points (examples), $d$ denotes the model dimension and $T$ denotes the total number of iterations ($n$ times the number of epochs).}
  \label{exp:table:summary}
  \begin{tabular}{lccccccccc}
  \hline  
  & Rate (Non-convex) & Rate (PL) & Computation over RR & Storage over RR \\
  \hline
  RR & $O(n^{\frac{1}{3}}T^{-\frac{2}{3}})$ & $O(nT^{-2})$ & N/A & N/A \\
  Herding & $\tilde{O}(T^{-\frac{2}{3}})$ & $\tilde{O}(T^{-2})$ & $O(n^2)$ & $O(nd)$ \\
  GraB & $\tilde{O}(T^{-\frac{2}{3}})$ & $\tilde{O}(T^{-2})$ & $O(n)$ & $O(d)$ \\
  \hline
  \end{tabular}
\end{table}
\begin{assumption}
\label{assume:Lsmooth}
(\textbf{Smoothness}.)
For any example $\*x_j$ in the dataset (for $j\in\{1, \ldots, n\}$), the loss for $\*x_j$ is $L_{2,\infty}$-smooth and $L_\infty$-smooth meaning that, for any $\*w,\*v\in\mathbb{R}^d$, it holds that
\begin{align*}
    \norm{ \nabla f(\*w; \*x_j) - \nabla f(\*v; \*x_j) }_2 &\leq L_{2,\infty} \norm{ \*w - \*v }_\infty \;\hspace{2em}\; \text{and}\\
    \norm{ \nabla f(\*w; \*x_j) - \nabla f(\*v; \*x_j) }_\infty &\leq L_\infty \norm{ \*w - \*v }_\infty.
\end{align*}
The averaged loss function is also $L$-smooth in that, for any $\*w,\*v\in\mathbb{R}^d$, it holds that
\begin{align*}
    \norm{ \nabla f(\*w) - \nabla f(\*v) }_2 \leq L\norm{ \*w - \*v }_2.
\end{align*}
\end{assumption}
\begin{assumption}
\label{assume:PLcondition}
(\textbf{PL condition}.)
We say the loss function $f$ fulfills the Polyak-Lojasiewicz (PL) condition if there exists $\mu>0$ such that for any $\*w\in\mathbb{R}^d$,
\begin{align*}
    \frac{1}{2}\norm{\nabla f(\*w)}_2^2 \geq \mu \left(f(\*w) - f^* \right), \hspace{2em} \text{where} \hspace{1em} f^* = \inf_{\*v\in\mathbb{R}^d}f(\*v).
\end{align*}
\end{assumption}
\begin{assumption}
\label{assume:grad_error_bound}
(\textbf{Bounded Gradient Error}.)
    For any $j\in\{1, \cdots, n\}$ and any $\*w\in\mathbb{R}^d$,
\begin{align*}
    \textstyle
        \norm{ \nabla f(\*w; \*x_{j}) - \frac{1}{n}\sum_{s=1}^{n}\nabla f(\*w; \*x_{s}) }_2 \leq \varsigma.
    \end{align*}
\end{assumption}
\begin{assumption}
\label{assume:herding}
(\textbf{Herding Bound}.)
In Algorithm~\ref{alg:herding}, the subroutine \texttt{Herding} algorithm has the following property \citep{bansal2017algorithmic}: if given input vectors $\*z_1, \ldots, \*z_n \in \R^d$ that satisfy $\norm{ \*z_i }_2 \le 1$ and $\sum_{i=1}^n \*z_i = \*0$, \texttt{Herding} outputs a permutation $\sigma$ of $\{1,\ldots,n\}$ such that for all $k \in \{1,\ldots,n\}$, $\| \sum_{i=1}^k \*z_{\sigma(i)} \|_{\infty} \le H$ for some constant $H$.
\end{assumption}
\textbf{Remarks on the assumptions.}
Assumptions~\ref{assume:Lsmooth}, \ref{assume:PLcondition}, and \ref{assume:grad_error_bound} are commonly used assumptions in the study of permutation-based SGD, except that we separate different smoothness constants with respect to different norms to tighten the final bound.
The cross-norm Lipschitz constant $L_{2,\infty}$ is often used to obtain tight dimension-independent bounds \citep{li2019dimension,lu2020moniqua}. 
Note that these cross-norm assumptions can be dropped by incurring extra factors of the dimension $d$, as $L_{\infty} \le L_{2,\infty} \le \sqrt{d} L$ for any function.
Assumption~\ref{assume:herding} holds for multiple algorithms for some $H = \tilde{O}(1)$: we will discuss this in depth in Section~\ref{sec:online_balancing}, while treating $H$ as a parameter here as we want to observe the dependence on this variable in the final bound.
Under these assumptions, the convergence rate of Algorithm~\ref{alg:herding} is given in the following theorem.
\begin{restatable}{thm}{thmofflineherding}
\label{thm:offlineherding}
In Algorithm~\ref{alg:herding}, under Assumptions~\ref{assume:Lsmooth}, \ref{assume:grad_error_bound} and \ref{assume:herding},
if we set $\alpha$ to be 
\[
    \alpha = \min\left\{ \sqrt[3]{\frac{f(\*w_1) - f^*}{24 n H^2\varsigma^2 L_{2,\infty}^2 K}}, \frac{1}{8n(L+L_{2,\infty})}, \frac{1}{32nL_\infty}, \frac{1}{16HL_{2,\infty}}\right\}
\]
then it converges at the rate
\[
    \frac{1}{K} \sum_{k=1}^{K} \norm{ \nabla f(\*w_k) }^2
    \le 36 \sqrt[3]{\frac{H^2\varsigma^2 L_{2,\infty}^2 F^2}{n^2 K^2}} + \frac{\varsigma^2}{K}
    +
    \frac{32F(L + L_{2,\infty} + L_\infty)}{K}
    +
    \frac{64FHL_{2,\infty}}{nK},
\]
where $F=f(\*w_1) - f^*$. Furthermore, under the additional PL condition (Assumption~\ref{assume:PLcondition}), with $\alpha$ as
\[
    \alpha = \min\left\{ \frac{1}{n\mu}, \frac{1}{48HL_{2,\infty}}, \frac{1}{96n(L+L_{2,\infty}+L_\infty)}, \frac{2}{n\mu K}W_0\left(\frac{(f(\*w_1)-f^*+\varsigma^2)n^2\mu^3K^2}{192H^2L_{2,\infty}^2\varsigma^2}\right)\right\},
\]
where $W_0$ denotes the Lambert W function,
Algorithm~\ref{alg:herding} converges at the rate
\[
    f(\w_{K}) - f^* \leq \tilde{O}\left(\frac{H^2L_{2,\infty}^2\varsigma^2}{\mu^3n^2K^2}\right).
\]
\end{restatable}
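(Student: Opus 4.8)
The plan is to treat one epoch of Algorithm~\ref{alg:herding} as a single inexact gradient step on $f$ with effective step size $\alpha n$, and to control the inexactness with the herding guarantee. Write $\*w_k := \*w_k^{(1)}$ and $g_k^{(t)} := \nabla f(\*w_k^{(t)}; \*x_{\sigma_k(t)})$. Since $\sigma_k$ is a permutation, $\sum_{t=1}^n \nabla f(\*w_k; \*x_{\sigma_k(t)}) = n\nabla f(\*w_k)$, so the chaining $\*w_{k+1}^{(1)} = \*w_k^{(n+1)}$ gives
\begin{align*}
\*w_{k+1} = \*w_k - \alpha n \nabla f(\*w_k) - \alpha \*e_k, \qquad \*e_k := \sum_{t=1}^n \big( g_k^{(t)} - \nabla f(\*w_k; \*x_{\sigma_k(t)}) \big).
\end{align*}
Applying the descent lemma for the $L$-smooth $f$ to this update, and using Young's inequality on $\langle \nabla f(\*w_k), \*e_k\rangle$ together with $\|n\nabla f(\*w_k)+\*e_k\|^2 \le 2n^2\|\nabla f(\*w_k)\|^2+2\|\*e_k\|^2$, yields (for $\alpha n L = O(1)$) an inequality of the form $f(\*w_{k+1}) \le f(\*w_k) - \Theta(\alpha n)\,\|\nabla f(\*w_k)\|^2 + O(\alpha/n)\,\|\*e_k\|^2$.

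The crux is bounding $\|\*e_k\|$. By $L_{2,\infty}$-smoothness, $\|\*e_k\|_2 \le L_{2,\infty}\sum_{t=1}^n \|\*w_k^{(t)} - \*w_k\|_\infty$, and $\|\*w_k^{(t)} - \*w_k\|_\infty = \alpha\|\sum_{s=1}^{t-1} g_k^{(s)}\|_\infty$, so everything reduces to bounding the partial sums of stochastic gradients used within epoch $k$ in the $\infty$-norm. This is exactly where Assumption~\ref{assume:herding} enters: the order $\sigma_k$ was produced by herding the stored gradients of epoch $k-1$. Rewrite each $g_k^{(s)}$ as the corresponding stale gradient from epoch $k-1$ plus a perturbation, which by $L_\infty$-smoothness is bounded by $L_\infty$ times the (order-$\alpha$) iterate movement between the two epochs; then split the stale gradients into their mean $\bar g_{k-1}$ plus centered residuals. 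Feeding the herding routine the rescaled (unit-norm) residuals, Assumption~\ref{assume:herding} gives $\|\sum(\text{centered residuals})\|_\infty \le H R_{k-1}$, where $R_{k-1}$ bounds the norm of the centered epoch-$(k-1)$ gradients; Assumption~\ref{assume:grad_error_bound} plus a controlled within-epoch drift gives $R_{k-1} \le 2\varsigma$, while the mean term contributes $\lesssim n\|\bar g_{k-1}\|_2 \lesssim n\|\nabla f(\*w_{k-1})\| + (\text{lower order})$. Altogether $\|\*e_k\| \lesssim \alpha L_{2,\infty}\big(nH\varsigma + n^2\|\nabla f(\*w_{k-1})\|\big)$ up to lower-order terms. (Epoch $1$ has no herding guarantee, so its partial sums are bounded crudely by $O(n\varsigma)$; this is the origin of the amortized $\varsigma^2/K$ term.)

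Plugging this into the descent inequality, the $\|\nabla f(\*w_{k-1})\|^2$ part of $O(\alpha/n)\|\*e_k\|^2 \lesssim \alpha^3 L_{2,\infty}^2 n H^2\varsigma^2 + \alpha^3 L_{2,\infty}^2 n^3\|\nabla f(\*w_{k-1})\|^2$ is absorbed into the descent term of epoch $k-1$ once $\alpha = O\big(\tfrac{1}{n(L+L_{2,\infty})}\big)$, $\alpha = O\big(\tfrac{1}{nL_\infty}\big)$, and $\alpha = O\big(\tfrac{1}{HL_{2,\infty}}\big)$ (the last also being the condition under which the recursion $R_k \le \varsigma + O(\alpha H L_{2,\infty})R_{k-1}$ stays at $R_k \le 2\varsigma$); these three caps and the cube-root term below are precisely the arguments of the $\min$ in the stated step size. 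Telescoping over $k=1,\dots,K$ leaves $\Theta(\alpha n)\sum_k \|\nabla f(\*w_k)\|^2 \le F + O(K\alpha^3 n H^2 L_{2,\infty}^2\varsigma^2) + O(\varsigma^2 \cdot(\text{epoch-1}))$; dividing by $\alpha n K$ gives $\tfrac1K\sum_k\|\nabla f(\*w_k)\|^2 \lesssim \tfrac{F}{\alpha n K} + \alpha^2 H^2 L_{2,\infty}^2\varsigma^2 + \tfrac{\varsigma^2}{K}$, and the cube-root minimizer $\alpha \asymp \big(\tfrac{F}{nKH^2 L_{2,\infty}^2\varsigma^2}\big)^{1/3}$ (clipped to the above caps) produces the four terms of the non-convex bound, the clipped cases giving the $O(F(L+L_{2,\infty}+L_\infty)/K)$ and $O(FHL_{2,\infty}/(nK))$ terms. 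For the PL case, substitute $\|\nabla f(\*w_k)\|^2 \ge 2\mu(f(\*w_k)-f^*)$ to turn the per-epoch inequality into $f(\*w_{k+1})-f^* \le (1-\Theta(\alpha n\mu))(f(\*w_k)-f^*) + O(\alpha^3 n H^2 L_{2,\infty}^2\varsigma^2)$, unroll it to $f(\*w_K)-f^* \lesssim e^{-\Theta(\alpha n\mu K)}F + O(\alpha^2 H^2 L_{2,\infty}^2\varsigma^2/\mu)$, and balance the two terms via the Lambert-$W$ choice of $\alpha$ to obtain $\tilde O\big(H^2 L_{2,\infty}^2\varsigma^2/(\mu^3 n^2 K^2)\big)$.

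The main obstacle is the circular dependency inside the second step: the drift bound for epoch $k$ presupposes both that the centered epoch-$(k-1)$ gradients have norm $O(\varsigma)$ and that $\|\nabla f(\*w_{k-1})\|$ is under control, yet each of those rests on the drift being small in epoch $k-1$, hence on epoch $k-2$, and so on. Making this rigorous requires a simultaneous induction over epochs that establishes uniform bounds on the iterate drift, on the rescaling radii $R_k$, and (implicitly) on the gradient norms, with the small-step-size conditions chosen exactly so that the induction contracts rather than blows up. The secondary, more mechanical difficulty is the bookkeeping of which past iterate each stale gradient was evaluated at, which turns each $g_k^{(s)} - (\text{stale gradient})$ into a telescoping chain of $O(\alpha)$ perturbations that must each be charged against the budget.
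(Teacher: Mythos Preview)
Your proposal is correct and follows essentially the same route as the paper: a per-epoch descent lemma controlled by the within-epoch drift $\Delta_k := \max_m \|\*w_k^{(m)}-\*w_k\|_\infty$, with $\Delta_k$ bounded by writing the partial sums as (herded stale gradients) $+$ (smoothness-controlled perturbations), then summing and optimizing $\alpha$.

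One remark on the ``main obstacle'' you flag at the end: the circularity is milder than you suggest, and the paper dissolves it without any simultaneous induction on gradient norms. Two observations make this work. First, the recursion is set up directly on $\Delta_k$ rather than on your rescaling radii $R_k$: after normalizing and applying the herding bound, the paper obtains
\[
\Delta_k \le 2\alpha H\varsigma + \big(8\alpha n L_\infty + 4\alpha H L_{2,\infty}\big)\Delta_{k-1} + 2\alpha n\,\|\nabla f(\*w_k)\|_\infty,
\]
a plain linear recursion whose coefficient on $\Delta_{k-1}$ is $\le 1/2$ under the step-size caps, so it unrolls geometrically with no a priori control on anything else. Second, the $\|\nabla f(\*w_k)\|$ term is never bounded in advance: it is carried symbolically through the sum $\sum_k \Delta_k^2$, and when that sum is substituted back into the descent inequality it produces a term $48\alpha^2 n^2 L_{2,\infty}^2\cdot \tfrac1K\sum_k\|\nabla f(\*w_k)\|^2$ on the right, which is simply moved to the left (its coefficient is $<1/2$ by the same step-size caps). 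So there is no induction over epochs on gradient norms, and the ``rescaling radius'' is not $2\varsigma$ but $\varsigma + 2L_{2,\infty}\Delta_{k-1}$, which is what actually feeds the recursion. A minor related point: the paper centers the partial sums around $\nabla f(\*w_k)$ (current epoch) rather than $\nabla f(\*w_{k-1})$, which lets the gradient term be absorbed into the \emph{same} epoch's descent without the index shift you describe; your variant also works after summing, but is slightly less clean.
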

\textbf{Improved Rate Compared to RR.}
Theorem~\ref{thm:offlineherding} gives us a better rate than random reshuffling, in that there is no dependence on the number of examples $n$ in the dominant term. If we let $T=nK$ be the total number of steps, then Theorem~\ref{thm:offlineherding} shows Algorithm~\ref{alg:herding} converges at rate $O(\frac{1}{T^{2/3}} + \frac{n}{T})$. In contrast, random reshuffling gives a rate of $O(\frac{n^{1/3}}{T^{2/3}} + \frac{n}{T})$ as given in \citet{mishchenko2020random}, Theorem 4. Similarly, herding improves the rate from $\tilde{O}( \frac{n}{T^2} )$ to $\tilde{O}( \frac{1}{T^2} )$ in the PL case. This gives us an affirmative answer to our question from the intro: a faster rate \emph{can} be obtained even if we herd with stale gradients estimates, without computing additional gradients.
\section{Enabling Memory-Efficient Herding with Balancing}
\label{sec:background}
\begin{figure*}[t!]
  \centering
  \subfigure[Balancing and then reordering.]{\includegraphics[width=0.51\textwidth]{./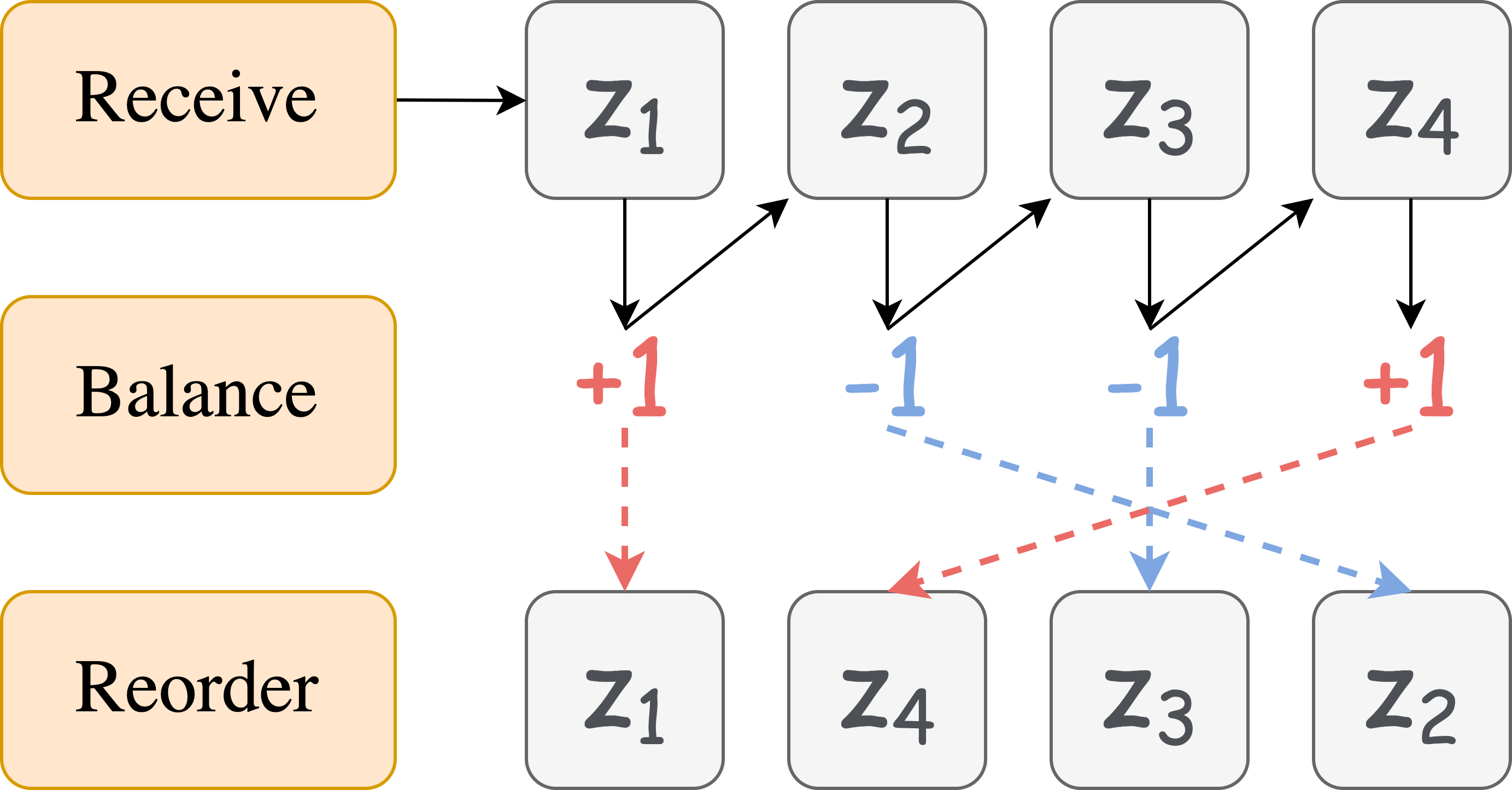}}
  \subfigure[Toy Example of Random vs. GraB.]{\includegraphics[width=0.4\textwidth]{./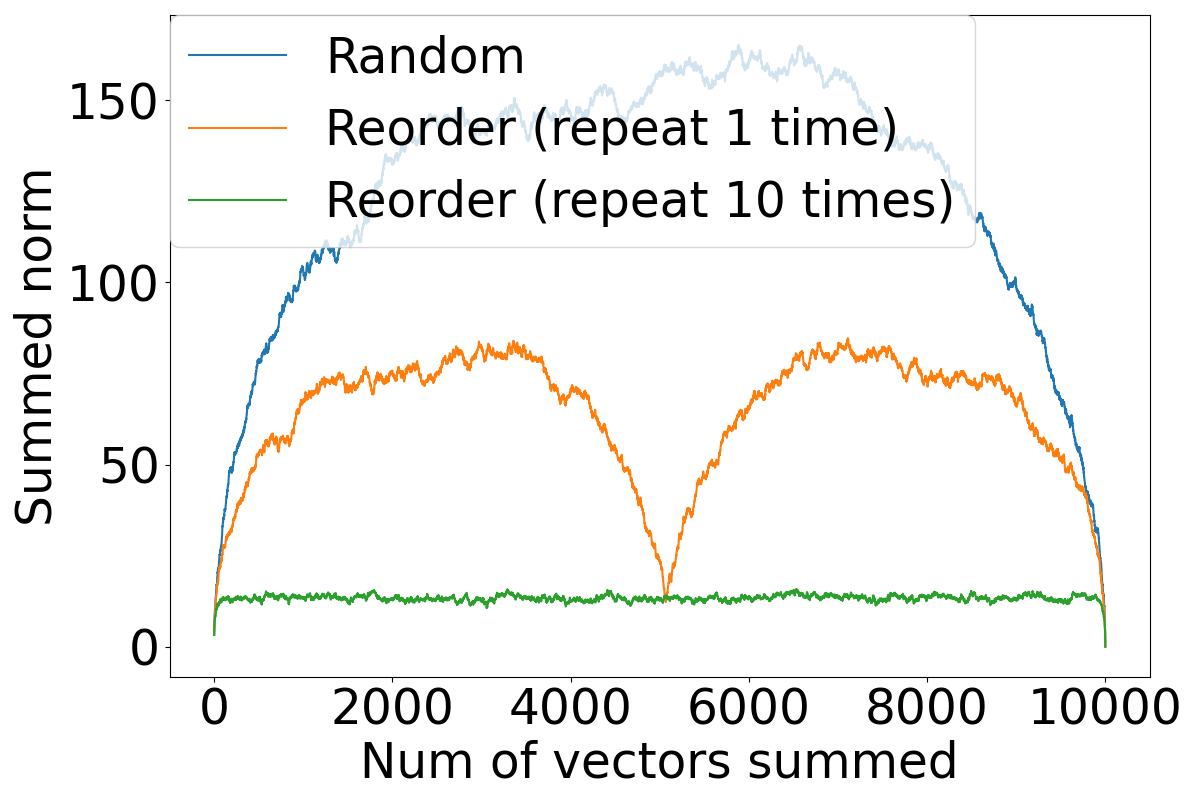}}
  \caption{(a) illustrates how Algorithm~\ref{alg:reorder} reorders the vectors with balancing -- the new order is obtained by concatenating \emph{original} order of the examples with $+1$, followed by the \emph{reverse} order of the examples with $-1$; (b) demonstrates a toy example on $n=10000$ vectors $\{\*z_i\}_{i=1}^{n}$ where $\*z_i$ is randomly sampled from $[0,1]\tinysup{128}$. For a given order $\sigma$, we plot the norm of the prefix sums $\| {\sum_{t=1}^k\left( \*z_{\sigma(t)} - 1/n\sum_{s=1}^{n}\*z_s \right)}\|$ for all $k=1,\dots,n$.}
  \label{fig:illustration}
\end{figure*}
In Section~\ref{sec:offline_herding} we introduce the herding problem, and prove that with proper herding, SGD converges at a faster rate than Random Reshuffling. However, a downside of offline herding SGD is that it requires storing all the vectors (i.e., in the case of SGD, all the stochastic gradients) which incurs $O(nd)$ memory overhead.
In this section, we present background on the \emph{vector balancing problem}, the solutions of which will enable us to solve the herding problem in an online fashion.

\textbf{Background: Online Vector Balancing.}
The problem of \emph{online vector balancing} was first formulated in \citet{spencer1977balancing}: given $n$ vectors $\{\*z_i\}_{i=1}^n\in\mathbb{R}^d$, arriving one at a time, the goal is to assign a sign $\epsilon_i\in\{-1,+1\}$ to each vector upon receiving it so as to minimize  $\max_{k\in[n]} \; \| \sum_{i=1}^{k}\epsilon_i\*z_i \|_\infty$.
Over the last few decades, many efforts have been spent on bounding this $\ell_\infty$-norm objective. Specifically, \citet{aru2016balancing} and \citet{bansal2020line} shows it can be made to be on the order of $\tilde{O}(1)$ when the coordinates of each vector are \emph{independent}. \citet{bansal2020online} gives a \texttt{poly}$(\log(n), d)$ bound when there are dependencies among coordinates. The adversarial setting with interval discrepancy is discussed in \citep{jiang2019online}.
Other works focus on balancing variants such as stochastic arrival \citep{bansal2021online} or box discrepancy \citep{matouvsek2015combinatorial}.

\begin{algorithm}[t]
\small
    \begingroup
    \setlength{\abovedisplayskip}{1pt}
    \setlength{\belowdisplayskip}{1pt}
	\caption{Reordering Vectors based on Balanced Signs \citep{harvey2014near}}\label{alg:reorder}
	\begin{algorithmic}[1]
	\State \textbf{Input:} a group of signs $\{\epsilon_i\}_{i=1}^n$ for $\{\*z_i\}_{i=1}^n$ from a balancing algorithm, initial order $\sigma$
	\State Initialize two order-sensitive lists $L_{\text{positive}}\leftarrow [\hspace{0.1em}]$, $L_{\text{negative}}\leftarrow [\hspace{0.1em}]$.
	\For{$i = 1,\dots, n $}
	    \State Append $\sigma(i)$ to $L_{\text{positive}}$ if $\epsilon_{i}$ is $+1$ else append it to $L_{\text{negative}}$.
	\EndFor
	\State \Return new order $\sigma'=\text{concatenate}(L_{\text{positive}}, \text{reverse}(L_{\text{negative}}))$.
	\end{algorithmic}
	\endgroup
\end{algorithm}

\textbf{Solving Herding via Balancing.}
While the objectives of balancing and herding are different, an algorithm for balancing can be used as a subroutine to do herding as well. Consider the simple case of vectors $\*z_1, \ldots, \*z_n$ in some order, where $\sum\nolimits_{i=1}^n\*z_i=\*0$, and suppose we have obtained a group of signs $\{\epsilon_i\}_{i=1}^n$ via some balancing algorithm: can we use this to determine an order of $\{\*z_i\}_{i=1}^{n}$ that can solve the herding problem? 
\citet{harvey2014near,chobanyan2015maximum}, and \citet{chobanyan2016inequalities} approach this problem by a clever \emph{reordering} that works as in Algorithm~\ref{alg:reorder}. Concretely, a new order is obtained by first taking all the vectors with positive signs in order, followed by all the vectors with negative signs in reversed order from the original sequence. This process is visualized in Figure~\ref{fig:illustration}. \citet{harvey2014near} shows that this balancing-and-then-reordering has the following effect on the herding objective in Equation~(\ref{equation:herding_objective}).
\begin{restatable}{thm}{thmherdingbalancing}
\label{thm:herdingbalancing}
(\citep{harvey2014near}, Theorem 10) Consider $n$ vectors $\*z_1, \ldots, \*z_n \in\mathbb{R}^d$ that fulfill $\sum_{i=1}^{n}\*z_i=\*0$ and $\norm{\*z_i}\leq 1$. Suppose that for some permutation $\sigma$, the herding objective is bounded by $\max_k \| \sum_{i=1}^{k}\*z_{\sigma(i)} \|_\infty \leq H$ for some constant $H$. Suppose we have computed some signs $\epsilon_1, \ldots, \epsilon_n$ such that the vector-balancing objective $\max_k \| \sum_{i=1}^{k}\epsilon_i\*z_{\sigma(i)}\|_\infty \leq A$ for some constant $A$. Then if we let $\sigma'$ be the permutation output by Algorithm~\ref{alg:reorder}, the herding objective on this new permutation is bounded by $\max_k\| \sum_{i=1}^{k}\*z_{\sigma'(i)} \|_\infty \leq (A+H)/2$.
\end{restatable}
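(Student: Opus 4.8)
The plan is to reduce to the case where $\sigma$ is the identity permutation by relabelling the vectors, so that we are handed $\*z_1,\ldots,\*z_n$ with $\sum_{i=1}^n\*z_i=\*0$, $\norm{\*z_i}\le 1$, prefix sums satisfying $\norm{\sum_{i=1}^k\*z_i}_\infty\le H$, and signs $\epsilon_1,\ldots,\epsilon_n$ with $\norm{\sum_{i=1}^k\epsilon_i\*z_i}_\infty\le A$ for every $k$. Write $P=\{i:\epsilon_i=+1\}$ and $N=\{i:\epsilon_i=-1\}$. By construction, Algorithm~\ref{alg:reorder} outputs the order $\sigma'$ that first lists the elements of $P$ in increasing order and then lists the elements of $N$ in \emph{decreasing} order, so the goal is to bound $\norm{\sum_{i=1}^k\*z_{\sigma'(i)}}_\infty$ for every $k$.

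Next I would enumerate the prefixes of $\sigma'$. Each one is of exactly one of two shapes. Type (a): an initial segment of the first block, with index set $P\cap[1,m]$ for some threshold $m$, so its prefix sum is $\sum_{i\in P,\,i\le m}\*z_i$. Type (b): the entire first block together with an initial segment of the reversed second block, with index set $P\cup\{i\in N:i\ge m\}$. Here is the one point worth isolating: because $\sum_{i=1}^n\*z_i=\*0$ and $P\cup N=[n]$, we have $\sum_{i\in P}\*z_i=-\sum_{i\in N}\*z_i$, and hence the type-(b) prefix sum equals $-\sum_{i\in N,\,i<m}\*z_i$. In other words, reversing the negative block and invoking the zero-sum condition converts a \emph{suffix} sum over $N$ into a \emph{prefix} sum over $N$ (up to sign); this is exactly what makes the two cases symmetric and amenable to the same estimate.

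Then I would bound both families of partial sums using a single identity: for any threshold $\ell$,
\[
    2\sum_{i\in P,\,i\le\ell}\*z_i=\sum_{i\le\ell}\*z_i+\sum_{i\le\ell}\epsilon_i\*z_i,
    \qquad
    2\sum_{i\in N,\,i\le\ell}\*z_i=\sum_{i\le\ell}\*z_i-\sum_{i\le\ell}\epsilon_i\*z_i,
\]
obtained by adding and subtracting the herding prefix sum $\sum_{i\le\ell}\*z_i$ and the balancing prefix sum $\sum_{i\le\ell}\epsilon_i\*z_i$. Taking $\norm{\cdot}_\infty$ and applying the triangle inequality with the two hypotheses yields $\norm{\sum_{i\in P,\,i\le\ell}\*z_i}_\infty\le(H+A)/2$ and, identically, $\norm{\sum_{i\in N,\,i\le\ell}\*z_i}_\infty\le(H+A)/2$. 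Since every prefix sum of $\sigma'$ is, up to sign, one of these two quantities (type (a) is the first, type (b) is the negative of the second with $\ell=m-1$), we conclude $\max_k\norm{\sum_{i=1}^k\*z_{\sigma'(i)}}_\infty\le(A+H)/2$.

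The routine part is the bookkeeping: verifying that the prefixes of $\sigma'$ are precisely the index sets described, that the degenerate cases $P=\emptyset$ or $N=\emptyset$ and the empty and full prefixes cause no trouble, and that $m$ and $\ell$ range over the right values. The genuine content is the observation in the second paragraph—that the zero-sum condition is what lets the reversal of the $N$-block turn its partial sums back into honest prefix sums so the single identity above covers both blocks. I do not expect a real obstacle beyond getting these index ranges straight; once they are fixed, the norm estimates are immediate.
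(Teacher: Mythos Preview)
Your proposal is correct and matches the argument the paper uses (the theorem is cited from \citet{harvey2014near} without a standalone proof, but the identical decomposition appears inside the proof of Lemma~\ref{lemma:dm:unit_seq}): split the prefix-plus-signed-prefix identity to bound $\sum_{i\in P,\,i\le\ell}\*z_i$ and $\sum_{i\in N,\,i\le\ell}\*z_i$ by $(H+A)/2$, then use the zero-sum condition to reduce second-block prefixes to first-block-style prefix sums over $N$. Your bookkeeping on the shape of the prefixes of $\sigma'$ is accurate, and nothing is missing.
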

Theorem~\ref{thm:herdingbalancing} shows that calling Algorithm~\ref{alg:reorder} allows us to construct a new ordering from balancing, which reduces the current herding bound by a constant factor if $A\leq H$. This means that if we have a balancing algorithm that is guaranteed to reduce the balancing objective to less than $A$, by applying this ``reordering'' many times, we can also reduce the herding objective to be less than $A$: this approach is used in many offline herding algorithms that achieve $\tilde{O}(1)$ bounds. 

\section{GraB: SGD with Gradient Balancing for Good Data Permutation}
\label{sec:online_balancing}
Building on the balance-then-reorder paradigm from the previous section, in this section we propose an algorithm named GraB that improves SGD with herding by choosing the ordering \emph{online}---without storing gradients---thus using only $O(d)$ memory and $O(n)$ computation. However, the conditions of applying the balancing-reordering framework bring up additional challenges in this online setting:
\begin{itemize}[nosep,leftmargin=12pt]
    \item \textbf{Challenge I.} To apply Theorem~\ref{thm:herdingbalancing}, we will need to pre-center all the vectors (as in line 2 of Algorithm~\ref{alg:greedy}) to ensure they sum to zero. This cannot be done online, as we need to make decisions on-the-fly before seeing all the vectors.
    \item \textbf{Challenge II.} The reordering strategy, as shown in Theorem~\ref{thm:herdingbalancing}, only reduces the herding bound by a constant factor, which does not give us the accelerated rate directly since applying Theorem~\ref{thm:offlineherding} would require Equation~(\ref{equation:herding_objective}) to be on the order of $\tilde{O}(1)$. We can repeatedly call the balancing-reordering subroutines, but that would again require storing all the stochastic gradients.
\end{itemize}
To overcome challenge I, we apply a two-step stale gradient estimate: for any $k$, we use the running average of stale gradients to ``center'' stochastic gradient in epoch $k+1$ (this centering is itself stale, and does not guarantee the vectors sum to $0$, but this is still enough to prove convergence). Then, the online balancing-reordering subroutine will determine the proper order to use in epoch $k+2$. To address challenge II, we leverage a noisy reordering process: let $\sigma_k$ denote the order to use in epoch $k$, then the next order $\sigma_{k+1}$ will be obtained from $\sigma_k$. The intuition here is that if we train the model with enough epochs, the reordering will in general push the herding bound towards $A$ in Theorem~\ref{thm:herdingbalancing} without storing any additional vectors. We present the full GraB algorithm in Algorithm~\ref{alg:dm}.

We proceed to provide a convergence guarantee for Algorithm~\ref{alg:dm}. Since now we are working with balancing rather than herding, in the convergence analysis we replace Assumption~\ref{assume:herding} with the following one related to balancing.
\begin{assumption}
\label{assume:dm}
(\textbf{Balancing Bound.})
There exists a constant $A > 0$ such that in Algorithm~\ref{alg:dm}, if the subroutine \texttt{Balancing} is given input vectors $\*z_1, \ldots, \*z_n \in \R^d$ that satisfy $\norm{ \*z_i }_2 \le 1$, then \texttt{Balancing} outputs signs $\epsilon_i\in\{-1, +1\}$ such that $\| \sum_{i=1}^k \epsilon_i \*z_{\sigma(i)} \|_{\infty} \le A$ for all $k \in \{1,\ldots,n\}$.
\end{assumption}
\begin{algorithm}[t!]
\small
    \begingroup
    \setlength{\abovedisplayskip}{1pt}
    \setlength{\belowdisplayskip}{1pt}
	\caption{SGD with Online \textbf{Gra}dient \textbf{B}alancing (\textbf{GraB})}\label{alg:dm}
	\begin{algorithmic}[1]
	\State \textbf{Input:} number of epochs $K$, initialized order $\sigma_1$, initialized weight $\*w_1$, stale mean $\*m_1=\*0$, step size $\alpha$.
	\For{$k = 1,\dots, K $}
	    \State Initialized indices and running average: $l\leftarrow 1$; $r\leftarrow n$; $\*s\leftarrow \*0$; $\*m_{k+1}\leftarrow \*0$.
	    \For{$t = 1, \dots, n$}
	        \State Compute gradient $\nabla f(\*w_k^{t}; \*x_{\sigma_k(t)})$.
	        \State Compute centered gradient and update stale mean: 
	        \begin{align*}
	        \*g_{k,t}\leftarrow & \nabla f(\*w_k^{t}; \*x_{\sigma_k(t)}) - \*m_k \\
	        \*m_{k+1} \leftarrow & \*m_{k+1} + \nabla f(\*w_k^{t}; \*x_{\sigma_k(t)}) / n
	        \end{align*}
	        \State Compute sign for the current gradient: $\epsilon_{k,t}\leftarrow\texttt{Balancing}(\*s, \*g_{k,t})$
	        \If{$\epsilon_{k,t}=+1$}
	            \State $\*s \leftarrow \*s + \*g_{k,t}$; $\sigma_{k+1}(l)\leftarrow \sigma_{k}(t)$; $l\leftarrow l + 1$.
	        \Else
	            \State $\*s \leftarrow \*s - \*g_{k,t}$; $\sigma_{k+1}(r)\leftarrow \sigma_{k}(t)$; $r\leftarrow r - 1$.
	        \EndIf
	        \State Optimizer Step: $\*w_{k}^{(t+1)} \leftarrow \*w_{k}^{(t)} - \alpha\nabla f(\*w_k^{t}; \*x_{\sigma_k(t)})$.
	    \EndFor
	    \State $\*w_{k+1}^{(1)} \leftarrow \*w_{k}^{(n+1)}$.
	\EndFor
	\State \Return $\*w_K^{(1)}$.
	\end{algorithmic}
	\endgroup
\end{algorithm}

\begin{algorithm}[t]
	\caption{Balancing without normalization.}\label{alg:naive_balancing}
	\begin{algorithmic}[1]
	\State \textbf{Input:} current sum $\*s$, vector $\*v$.
	\State $\epsilon\leftarrow +1$ \textbf{if} $\norm{\*s+\*v} < \norm{\*s-\*v}$ \textbf{else} $\epsilon\leftarrow -1$.
	\State \Return $\epsilon$.
	\end{algorithmic}
\end{algorithm}

\begin{algorithm}[t]
	\caption{Probabilistic Balancing with Logarithm Bound. \citep{alweiss2021discrepancy}}\label{alg:balancing}
	\begin{algorithmic}[1]
	\State \textbf{Input:} current sum $\*s$, received vector $\*z$, hyperparameter $c$.
	\If{$|\*s, \*z|>c$ or $\norm{\*s}_\infty>c$}
	    \State \textbf{Fail.}
	\EndIf
	\State $\epsilon\leftarrow +1$ with probability $\frac{1}{2} - \frac{\langle \*s, \*z \rangle}{2c}$ and $\epsilon\leftarrow -1$ with probability $\frac{1}{2} + \frac{\langle \*s, \*z \rangle}{2c}$.
	\State \Return $\epsilon$.
	\end{algorithmic}
\end{algorithm}
We defer discussing the complexity of $A$ to the later part of this section, and first present the convergence guarantee for GraB (Algorithm~\ref{alg:dm}) as follows.
\begin{restatable}{thm}{thmonlinedm}
\label{thm:onlinedm}
In Algorithm~\ref{alg:dm}, under Assumptions~\ref{assume:Lsmooth}, \ref{assume:grad_error_bound} and \ref{assume:dm},
if we set $\alpha$ to be 
\[
    \alpha = \min\left\{ \sqrt[3]{\frac{f(\*w_1) - f^*}{32 n A^2\varsigma^2 L_{2,\infty}^2 K}}, \frac{1}{nL}, \frac{1}{26(n+A)L_{2,\infty}}, \frac{1}{260nL_\infty} \right\}
\]
then it converges at the rate
\[
    \frac{1}{K}\sum_{k=1}^{K}\norm{ \nabla f(\*w_k) }^2 \leq 11 \sqrt[3]{\frac{H^2\varsigma^2 L_{2,\infty}^2 F^2}{n^2 K^2}}
    +
    \frac{\varsigma^2}{K}
     +
    \frac{65 (F (L + L_{2,\infty} + L_\infty)}{K}
    +
    \frac{8FAL_{2,\infty}}{nK}.
\]
where $F=f(\*w_1) - f^*$. Furthermore, under the additional PL condition (Assumption~\ref{assume:PLcondition}) with $\alpha$ as
\[
    \alpha = \min\left\{ \frac{1}{n\mu}, \frac{1}{nL}, \frac{1}{52(n+A)L_{2,\infty}}, \frac{1}{520nL_\infty}, \frac{2}{n\mu K}W_0\left(\frac{(f(\*w_1)-f^*+\varsigma^2)n^2\mu^3K^2}{256A^2L_{2,\infty}^2\varsigma^2}\right)\right\},
\]
where $W_0$ denotes the Lambert W function,
it converges at the rate
\[
    f(\w_{K}) - f^* \leq \tilde{O}\left(\frac{A^2L_{2,\infty}^2\varsigma^2}{\mu^3n^2K^2}\right).
\]
\end{restatable}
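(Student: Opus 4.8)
The plan is to run essentially the same argument that proves Theorem~\ref{thm:offlineherding}, with one essential modification: the offline proof invokes the herding bound $H$ from Assumption~\ref{assume:herding} as a black box, whereas here we only have the \emph{balancing} bound $A$ from Assumption~\ref{assume:dm}. So I would first establish an \emph{effective herding bound} of order $\tilde{O}(A)$ for the orders $\sigma_k$ that Algorithm~\ref{alg:dm} actually produces, and then feed this into the offline-style computation verbatim. (This is also why the step-size caps now involve $A$ rather than $H$, and why the leading bias term scales with $A^2$ — the ``$H^2$'' in the first displayed rate should be read as $A^2$.)

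\textbf{Per-epoch descent.} First I would write one epoch of Algorithm~\ref{alg:dm} as the single update $\*w_k^{(n+1)} = \*w_k^{(1)} - \alpha\sum_{t=1}^n \nabla f(\*w_k^{(t)};\*x_{\sigma_k(t)})$ and apply $L$-smoothness of $f$, producing a descent inequality with a cross term $-\alpha\langle\nabla f(\*w_k^{(1)}),\sum_t\nabla f(\*w_k^{(t)};\*x_{\sigma_k(t)})\rangle$ and a quadratic term of order $\alpha^2 L\|\sum_t\nabla f(\*w_k^{(t)};\*x_{\sigma_k(t)})\|^2$. Decomposing each in-epoch gradient as $\nabla f(\*w_k^{(t)};\*x_{\sigma_k(t)}) = [\nabla f(\*w_k^{(t)};\*x_{\sigma_k(t)}) - \nabla f(\*w_k^{(1)};\*x_{\sigma_k(t)})] + [\nabla f(\*w_k^{(1)};\*x_{\sigma_k(t)}) - \nabla f(\*w_k^{(1)})] + \nabla f(\*w_k^{(1)})$, the cross term becomes $-\alpha n\|\nabla f(\*w_k^{(1)})\|^2$ plus errors controlled, via Assumption~\ref{assume:Lsmooth}, by the in-epoch drift $\|\*w_k^{(t)}-\*w_k^{(1)}\|_\infty$ and — crucially — by the prefix-sum quantity $\Psi_k := \max_{m\le n}\|\sum_{s=1}^m(\nabla f(\*w_k^{(1)};\*x_{\sigma_k(s)}) - \nabla f(\*w_k^{(1)}))\|_\infty$. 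The drift is itself $\alpha$ times a sum of the same three pieces, so a short recursion (using that $\alpha$ is small, i.e.\ the $\tfrac{1}{nL}$ and $\tfrac{1}{260nL_\infty}$ caps) closes it in terms of $\|\nabla f(\*w_k^{(1)})\|$, $\varsigma$, and $\Psi_k$.

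\textbf{The crux: an effective herding bound from online balancing.} What remains is to show $\Psi_k = \tilde{O}(A\varsigma)$ up to $\alpha$-small terms. The order $\sigma_k$ was built by Algorithm~\ref{alg:reorder} from $\sigma_{k-1}$, the signs $\{\epsilon_{k-1,t}\}$, and the stale-mean-centered gradients $\*g_{k-1,t} = \nabla f(\*w_{k-1}^{(t)};\*x_{\sigma_{k-1}(t)}) - \*m_{k-1}$; by Assumption~\ref{assume:dm}, $\|\sum_{t\le m}\epsilon_{k-1,t}\*g_{k-1,t}\|_\infty\le A$ for all $m$. These vectors do not sum to $\*0$ — their sum equals $n(\*m_k-\*m_{k-1}) = O(\alpha n\varsigma)$ — so I would use a mildly perturbed version of Theorem~\ref{thm:herdingbalancing}: the reorder turns balancing bound $A$ and old herding bound $H_{k-1}$ (of $\sigma_{k-1}$ against $\*g_{k-1,\cdot}$) into a new herding bound $\le (A+H_{k-1})/2 + \|\sum_t\*g_{k-1,t}\|_\infty$ for $\sigma_k$ against $\*g_{k-1,\cdot}$; iterating this contraction from an arbitrary $\sigma_1$ shows the transient from the initial order decays geometrically (it contributes only a lower-order term, and can if needed be folded into the Lyapunov function). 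Finally I transfer from ``$\*g_{k-1,\cdot}$ under $\sigma_k$'' to ``$\nabla f(\*w_k^{(1)};\cdot) - \nabla f(\*w_k^{(1)})$ under $\sigma_k$'': the two differ in each prefix sum by (i) $m\,\|\*m_{k-1}-\nabla f(\*w_k^{(1)})\|$, handled by re-centering since $\nabla f(\*w_k^{(1)}) = \tfrac1n\sum_j\nabla f(\*w_k^{(1)};\*x_j)$, leaving a drift residual, and (ii) $L_{2,\infty}$-smoothness times $\|\*w_{k-1}^{(t)}-\*w_k^{(1)}\|_\infty$, a drift over at most two epochs. Collecting, one gets a coupled recursion roughly of the form $\Psi_k \lesssim A\varsigma + \alpha n L_{2,\infty}(\Psi_{k-1} + n\|\nabla f(\*w_{k-1}^{(1)})\|) + (\text{small})$; the caps $\tfrac{1}{26(n+A)L_{2,\infty}}$ and $\tfrac{1}{nL}$ are precisely what make the $\Psi$-feedback contractive and keep the resulting $\|\nabla f\|$-coupling harmless inside the descent inequality.

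\textbf{Finishing.} Substituting $\Psi_k = \tilde{O}(A\varsigma)$ into the per-epoch inequality reproduces the offline computation with $H$ replaced by $A$: summing over $k=1,\dots,K$, telescoping $f(\*w_1) - f(\*w_K^{(1)}) \le F$, dividing by $\alpha n K$, and choosing $\alpha$ as stated (the cube-root term balances the $\alpha^2$-order bias $A^2\varsigma^2 L_{2,\infty}^2$ against $F/(\alpha K)$; the remaining caps force the $\alpha$-linear and $\alpha$-free error terms to be $O(1/K)$) yields $\tfrac1K\sum_k\|\nabla f(\*w_k)\|^2 = \tilde{O}\!\big(\sqrt[3]{A^2\varsigma^2 L_{2,\infty}^2 F^2/(n^2K^2)}\big) + O(1/K) = \tilde{O}(T^{-2/3}+n/T)$. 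For the PL case I would instead use Assumption~\ref{assume:PLcondition} to turn the per-epoch inequality into the contraction $f(\*w_{k+1}^{(1)}) - f^* \le (1-\alpha n\mu)(f(\*w_k^{(1)})-f^*) + O(\alpha^3 n A^2\varsigma^2 L_{2,\infty}^2)$, unroll it over $K$ epochs, and pick the Lambert-$W$ step size to balance the geometric-decay term against the accumulated noise, giving $f(\*w_K) - f^* = \tilde{O}(A^2 L_{2,\infty}^2\varsigma^2/(\mu^3 n^2 K^2))$. The main obstacle is the crux step: showing that the \emph{two} layers of staleness in GraB — the stale mean used to center, and the moving weights at which the balanced gradients were computed — perturb the effective herding bound only by $\alpha$-small amounts, so that a single online reorder per epoch genuinely substitutes for the offline herding oracle; everything else is a careful but routine adaptation of the proof of Theorem~\ref{thm:offlineherding}.
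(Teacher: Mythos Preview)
Your proposal is correct and follows essentially the same approach as the paper: both reuse the per-epoch descent inequality (Lemma~\ref{lemma:taylor_thm}) from the offline proof, and both derive an effective herding bound by iterating the $(A+H)/2$ contraction of Theorem~\ref{thm:herdingbalancing} with two perturbation terms---one for the non-zero sum of the stale-centered gradients and one for the epoch-to-epoch change in the balanced vectors---which are exactly the paper's $b_k$ and $a_k n$ in Lemma~\ref{lemma:dm:unit_seq}. The only cosmetic difference is that the paper keeps its herding quantity $r_k$ in stale coordinates (prefix sums of the $\*g_{k-1,\cdot}$ themselves) and does the stale-to-fresh accounting later when bounding $\Delta_k$ (Lemma~\ref{lemma:proof:GraB:maximum-dev}), whereas you define $\Psi_k$ at the fresh point $\*w_k^{(1)}$ and do the transfer up front; the computations coincide, and your observation that the ``$H^2$'' in the displayed rate is a typo for $A^2$ is correct.
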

Comparing Theorem~\ref{thm:offlineherding} and \ref{thm:onlinedm}, we can observe GraB (Algorithm~\ref{alg:dm}) achieves essentially the same fast asymptotic convergence rate as SGD with herding (Algorithm~\ref{alg:herding}).

\textbf{Asymptotic complexity for $A$ and $H$.}
So far we have covered the main design of GraB. A remaining yet unanswered question is how do $A$ in Assumption~\ref{assume:herding} and $H$ in Assumption~\ref{assume:dm} relate to other factors exactly, and what are the concrete algorithms to achieve small $A$ and $H$. 
Some recent progress in the theory community shows that there exists an online algorithm (Algorithm~\ref{alg:balancing}) that guarantees with high probability that $A$ is on the order of $\tilde{O}(1)$ as shown in the following theorem.
\begin{restatable}{thm}{thmbalancing}
\label{thm:balancing}
(\citet{alweiss2021discrepancy})
In Algorithm~\ref{alg:balancing}, if all the vectors have $\| \*z_t \| \le 1$ and the parameter $c$ is set $c=30\log(nd/\delta)$, then with probability $1-\delta$, the maximum partial signed sum is bounded by $\max_t \, \| \sum_{j=1}^{t}\epsilon_j\*z_j \|_\infty \leq c = O(\log(nd)) = \tilde{O}(1).$
\end{restatable}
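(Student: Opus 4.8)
The plan is to reconstruct the self-balancing-walk analysis of \citet{alweiss2021discrepancy}. Write $\*x_t = \sum_{j=1}^t \epsilon_j \*z_j$ for the partial signed sum after step $t$ (with $\*x_0=\*0$), and let $\mathcal{F}_{t-1}$ collect everything revealed before step $t$. The key feature of the rule in \Cref{alg:balancing} is that, conditioned on $\mathcal{F}_{t-1}$ and on the just-arrived $\*z_t$, we have $\E[\epsilon_t] = -\langle \*x_{t-1}, \*z_t\rangle / c$; so the walk carries a \emph{restoring drift} whose strength grows with how far $\*x_{t-1}$ already sticks out along $\*z_t$. To sidestep the partial definedness of the rule — it is only a valid distribution when $|\langle \*x_{t-1},\*z_t\rangle|\le c$, and the algorithm ``fails'' otherwise — I would analyze the stopped walk: let $\tau$ be the first step at which the failure test ($|\langle \*s,\*z\rangle|>c$ or $\norm{\*s}_\infty>c$) would trip, work with $\*x_{t\wedge\tau}$, and aim to show $\Pr[\tau\le n]\le\delta$. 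Since a failure is never triggered exactly when $\norm{\*x_t}_\infty\le c$ for all $t$ (and the inner-product clause stays inactive, handled below), this simultaneously yields the stated bound $\max_{t}\norm{\*x_t}_\infty\le c$.

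The engine is an exponential potential $\Phi_t = \sum_{k=1}^{d}\paren{e^{\lambda(\*x_t)_k}+e^{-\lambda(\*x_t)_k}} = 2\sum_{k}\cosh(\lambda(\*x_t)_k)$ for a small absolute constant $\lambda$, so $\Phi_0 = 2d$. Using $\E[\epsilon_t]=-\langle\*x_{t-1},\*z_t\rangle/c$ and the identity $p e^{\theta}+(1-p)e^{-\theta}=\cosh\theta+(2p-1)\sinh\theta$, one computes
\begin{align*}
 \E[\Phi_t\mid\mathcal{F}_{t-1},\*z_t]
 &= 2\sum_{k}\cosh(\lambda(\*z_t)_k)\cosh(\lambda(\*x_{t-1})_k) \\
 &\qquad - \frac{2\langle\*x_{t-1},\*z_t\rangle}{c}\sum_{k}\sinh(\lambda(\*z_t)_k)\sinh(\lambda(\*x_{t-1})_k).
\end{align*}
I would bound the first (``diffusion'') sum via $\cosh y\le 1+y^2$ and $\norm{\*z_t}_2\le1$, which contributes at most $\Phi_{t-1}\paren{1+O(\lambda^2)}$, and show the second (``drift'') sum is negative and — once $\norm{\*x_{t-1}}_\infty$ is of order $\lambda c$ or larger — of at least matching magnitude $\Omega(\lambda^2)\Phi_{t-1}$, using $\sinh(\lambda(\*z_t)_k)=\lambda(\*z_t)_k(1+O(\lambda^2))$, the coordinate-wise fact $(\*x_{t-1})_k\sinh(\lambda(\*x_{t-1})_k)\ge0$, and that before $\tau$ the factor $\langle\*x_{t-1},\*z_t\rangle/c$ is a genuine $O(1/c)$ with the right sign. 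The target is that $\Phi_{t\wedge\tau}$ is a supermartingale above a polynomial threshold: $\E[\Phi_{t\wedge\tau}\mid\mathcal{F}_{t-1}]\le\Phi_{(t-1)\wedge\tau}$ once $\Phi_{(t-1)\wedge\tau}\ge\mathrm{poly}(d)$, with a controlled additive slip otherwise. Mere multiplicative slack $(1+\lambda^2/c)$ per step would be fatal (it compounds to $e^{\Omega(\lambda^2 n/\log(nd/\delta))}$ over $n$ steps), so the drift must genuinely absorb the diffusion when the walk is far out.

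Given this threshold-supermartingale property, a maximal (Ville-type) inequality controls $\sup_{t\le n}\Phi_{t\wedge\tau}$: $\Pr[\sup_{t\le n}\Phi_{t\wedge\tau}\ge 2d/\delta']\le\delta'$ up to the additive slips summed over the $n$ steps. On the complement, $e^{\lambda\norm{\*x_{t\wedge\tau}}_\infty}\le\Phi_{t\wedge\tau}\le\mathrm{poly}(nd/\delta)$, hence $\norm{\*x_{t\wedge\tau}}_\infty\le\lambda^{-1}\log\mathrm{poly}(nd/\delta)=O(\log(nd/\delta))$; taking the constant in $c=30\log(nd/\delta)$ large enough makes this $\le c$, so the $\norm{\*s}_\infty$ branch never trips. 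For the $|\langle\*s,\*z\rangle|$ branch, since an adaptive adversary could make $|\langle\*x_{t-1},\*z_t\rangle|$ as large as $\norm{\*x_{t-1}}_2$, I would either fold an $\ell_2$-flavored potential into the same argument or run a companion concentration bound on $\langle\*x_{t-1},\*z_t\rangle$ across the $n$ steps — this is where the extra $n$ inside the logarithm enters — and absorb it into the same $\mathrm{poly}(nd/\delta)$. Combining these gives $\Pr[\tau\le n]\le\delta$, which is the theorem.

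The main obstacle is the increment estimate of the second step: showing that the second-order growth of the $\cosh$-potential from the fresh increment $\epsilon_t\*z_t$ is actually \emph{cancelled} — not merely bounded — by the first-order restoring drift, uniformly over adversarial $\*z_t$ with $\norm{\*z_t}_2\le1$ and all admissible $\*x_{t-1}$. The difficulty is structural: the diffusion is a coordinate-wise sum, whereas the drift is rescaled by the single scalar $\langle\*x_{t-1},\*z_t\rangle$ coupling all coordinates, so aligning signs and the $1/c$ factor — and pinning down exactly how far out the walk must be before the cancellation kicks in — is the delicate part, and it is precisely what fixes the (generous) constant $30$ and the $\log(nd/\delta)$ scale of $c$.
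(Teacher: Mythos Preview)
The paper does not supply its own proof of this statement: \Cref{thm:balancing} is quoted from \citet{alweiss2021discrepancy} and used only as a black box to instantiate \Cref{assume:dm} with $A=\tilde O(1)$. There is therefore nothing in the paper's appendix to compare your proposal against.

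That said, your outline does track the architecture of the self-balancing-walk argument of that reference: the $\cosh$ potential, the drift identity $\E[\epsilon_t\mid\mathcal F_{t-1},\*z_t]=-\langle\*x_{t-1},\*z_t\rangle/c$, the stopped process before failure, and a Ville-type maximal inequality are the right moving parts, and your conditional-expectation display is correct. One simplification you can make: in the setting relevant to this paper the vectors are \emph{oblivious} inputs (\Cref{assume:dm} takes the $\*z_i$ as given, and within an epoch of \Cref{alg:dm} the centered gradients $\*g_{k,t}$ are fixed before any sign is drawn), so your adaptive-adversary worry about $|\langle\*x_{t-1},\*z_t\rangle|$ reaching $\norm{\*x_{t-1}}_2$, and the companion $\ell_2$ potential you float to handle it, are not needed here. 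The substantive gap you yourself flag --- that the single scalar drift must offset a \emph{coordinate-wise} diffusion, and you cannot linearize $\sinh(\lambda(\*x_{t-1})_k)$ since $\*x_{t-1}$ may be large --- is exactly the heart of the Alweiss--Liu--Sawhney analysis and is not closed by the Taylor heuristics you list. Your plan is correct in shape but stops at the doorstep of that lemma; for the purposes of this paper, citing the result (as the authors do) suffices.
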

This theorem gives us a high-probability bound on $A$ that can easily be converted to a high-probability statement that the result of Theorem~\ref{thm:onlinedm} holds with $A$ replaced by $\tilde{O}(1)$.
Theorem~\ref{thm:balancing} also yields a good offline herding algorithm and a bound on $H$:
recall from Theorem~\ref{thm:herdingbalancing} that given any herding bound $H$ on an ordering, running Algorithm~\ref{alg:reorder} once allows us to obtain a new herding bound $(A+H)/2$. Given Theorem~\ref{thm:balancing} that a $\tilde{O}(1)$ bound is guaranteed on $A$, then if we repeatedly run Algorithm~\ref{alg:reorder} as described in Section~\ref{sec:background}, we can eventually push the herding bound $H$ towards $A$, which yields $H = \tilde{O}(1)$. This can be done with probability $1$ offline by simply restarting on failure; however, as stated in Section~\ref{sec:offline_herding}, this still requires storing all the vectors.

\textbf{On the normalization of vectors.} 
Note that a typical \emph{balancing} subroutine (e.g. Algorithm~\ref{alg:balancing}) requires normalized vectors $\norm{\*z_i}\leq 1. \forall i$. This, in practice, implies we will need to estimate a large enough constant to normalize gradients when we run GraB. To alleviate this issue, we can instead adopt a balancing subroutine that is invariant to the normalizer, for instance, Algorithm~\ref{alg:naive_balancing}. As shown in the next section, this naive balancing algorithm suffices for non-trivial improvements.
\section{Experiment}
\label{sec:experiment}

In this section, we evaluate GraB and other baseline algorithms on multiple machine learning applications. We compare GraB (Algorithm~\ref{alg:dm}) with the following baselines: Random Reshuffling (RR), Shuffle Once (SO) \citep{mishchenko2020random}, FlipFlop \citep{rajput2021permutation} and Greedy Ordering\footnote{For FlipFlop, we follow the exact implementation code from \url{https://github.com/shashankrajput/flipflop}; and for Greedy Ordering, we strictly follow \url{https://github.com/EugeneLYC/qmc-ordering}.} \citep{lu2021general}. All the experiments run on an instance configured with a 4-core Intel(R) Xeon(R) 2.50GHz CPU, 32GB memory and an 
NVIDIA GeForce RTX 2080 Ti GPU.
Although we used Algorithm~\ref{alg:balancing} to show a $\tilde{O}(1)$ bound can be obtained for herding, in our experiments, we adopt the simple balancing algorithm of Algorithm~\ref{alg:naive_balancing}, which performs well in practice. More results with other balancing algorithms can be found in the Appendix.

\textbf{Model and Dataset.}
We adopt the following model training tasks for evaluation: (1) training logistic regression on MNIST, (2) training LeNet on CIFAR10 \citep{krizhevsky2010convolutional}, (3) training LSTM on wikitext-2 \citep{merity2017regularizing}, and (4) finetuning BERT-Tiny \citep{turc2019} on GLUE \citep{devlin2018bert}. Detailed information regarding models, datasets and hyperparameters can be found in Appendix~\ref{sec:appendix:experimental_details}.

\textbf{Convergence Speedup.}
\begin{figure*}[t!]
  \centering
  \subfigure[Logistic Regression on MNIST. GraB reduces the memory from 224.6MB (Greedy) to 0.06MB.]{
  \includegraphics[width=0.245\textwidth]{./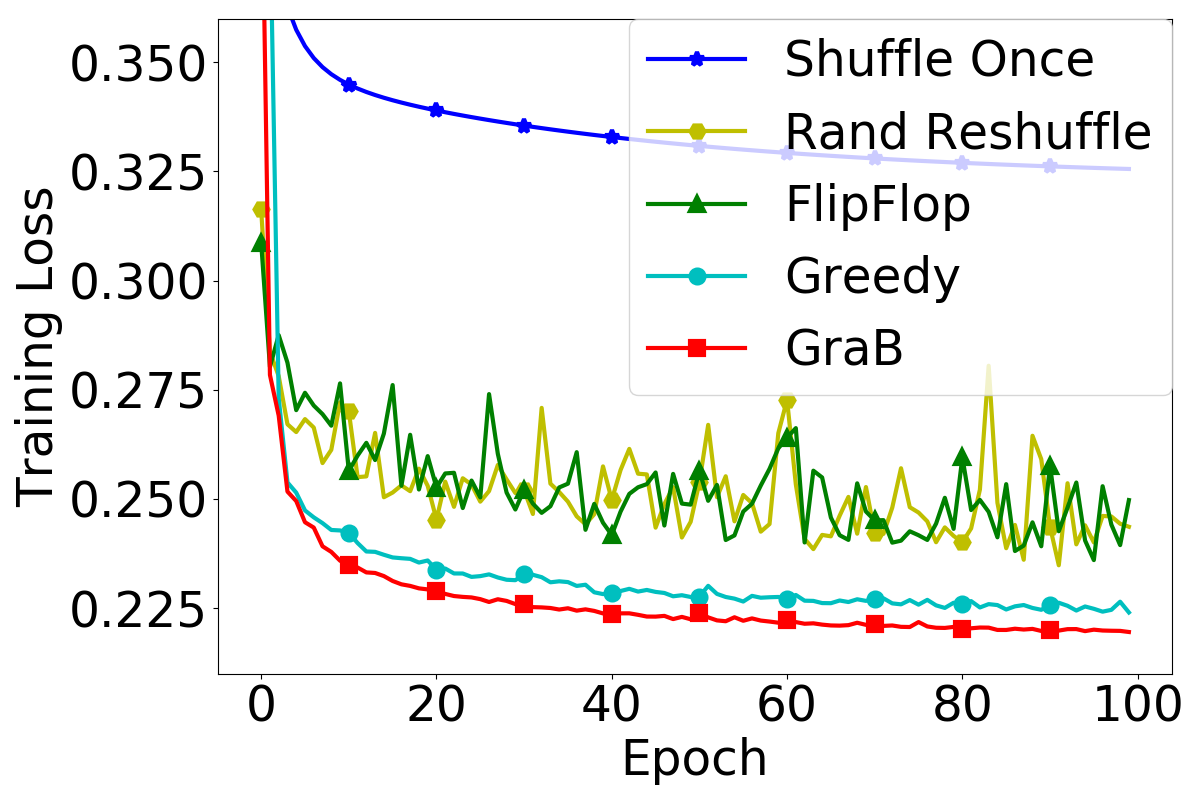}
  \includegraphics[width=0.245\textwidth]{./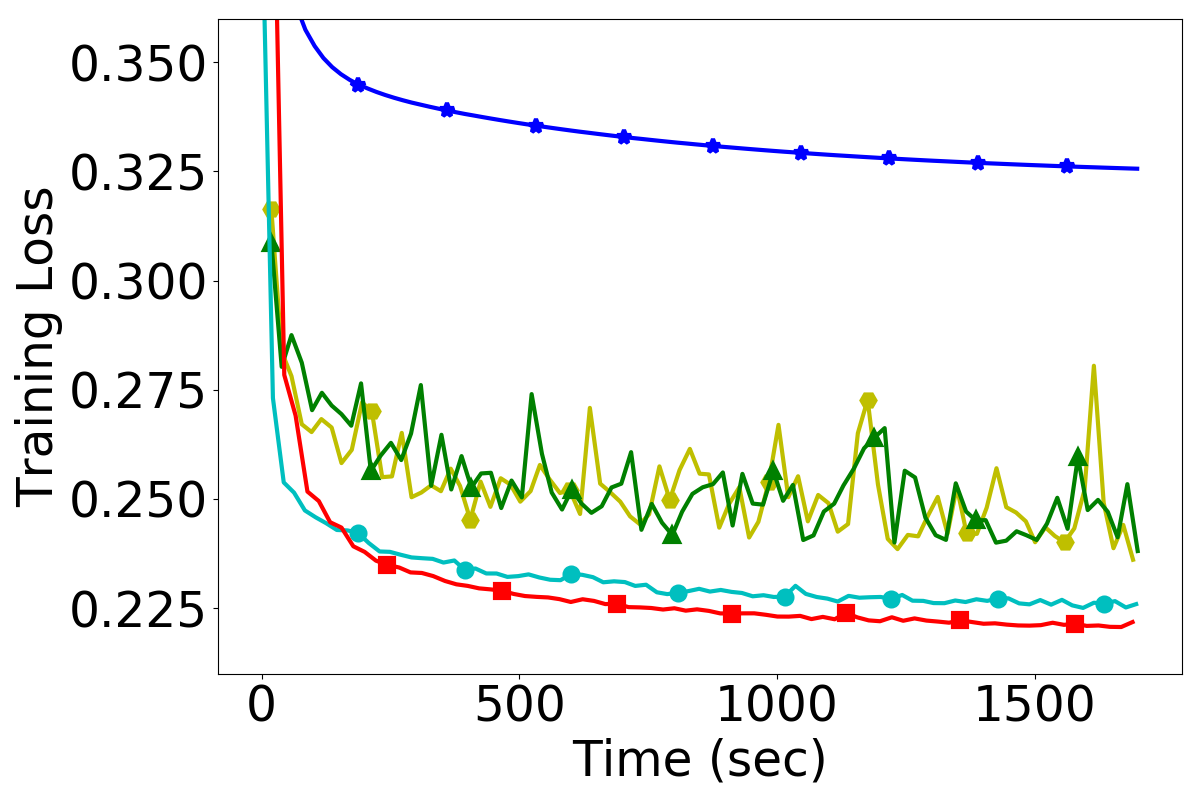}
  \includegraphics[width=0.245\textwidth]{./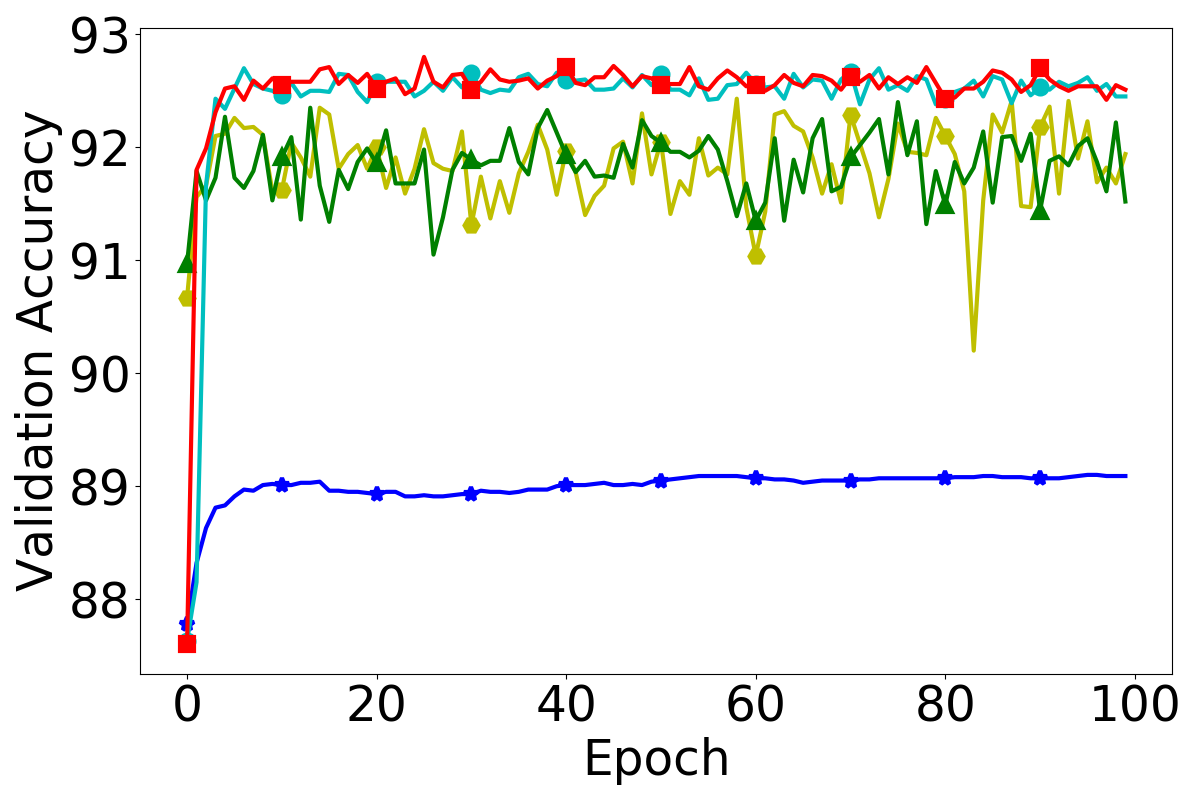}
  \includegraphics[width=0.245\textwidth]{./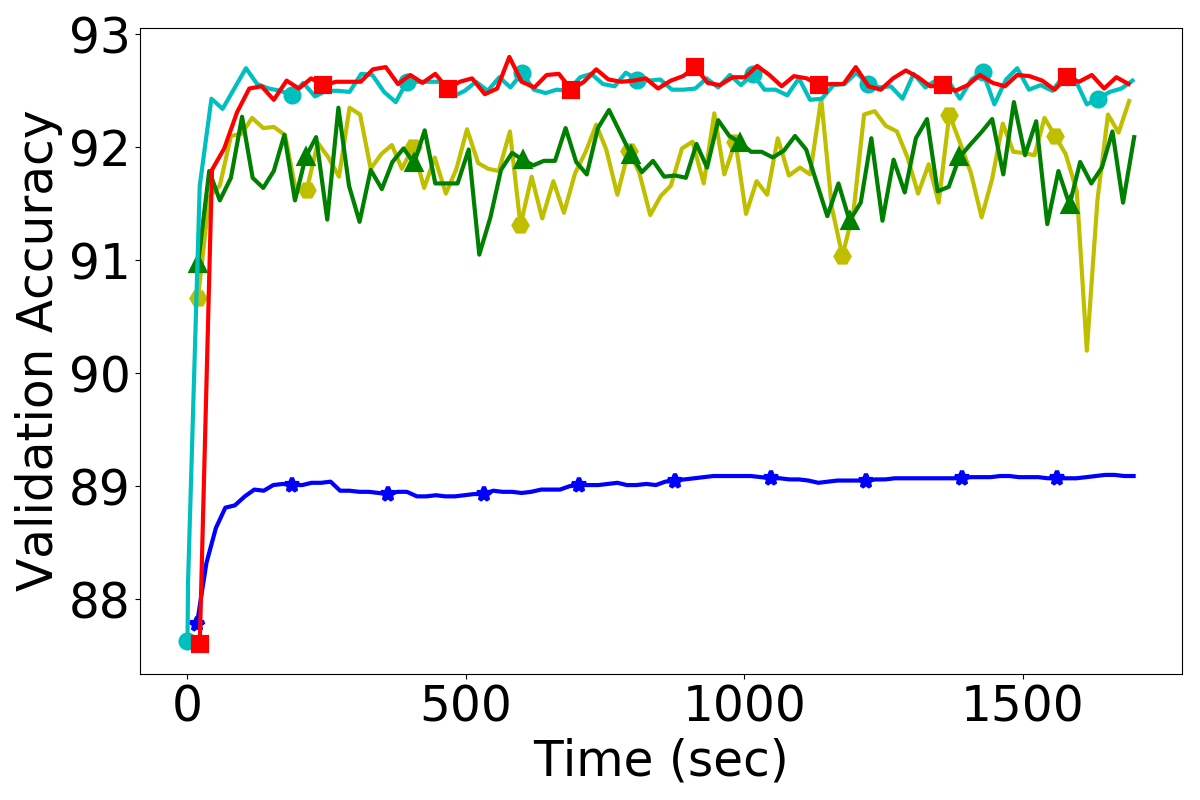}
  }
  \subfigure[LeNet on CIFAR10. GraB reduces the memory from 1.44GB (Greedy) to 0.46MB.]{
  \includegraphics[width=0.245\textwidth]{./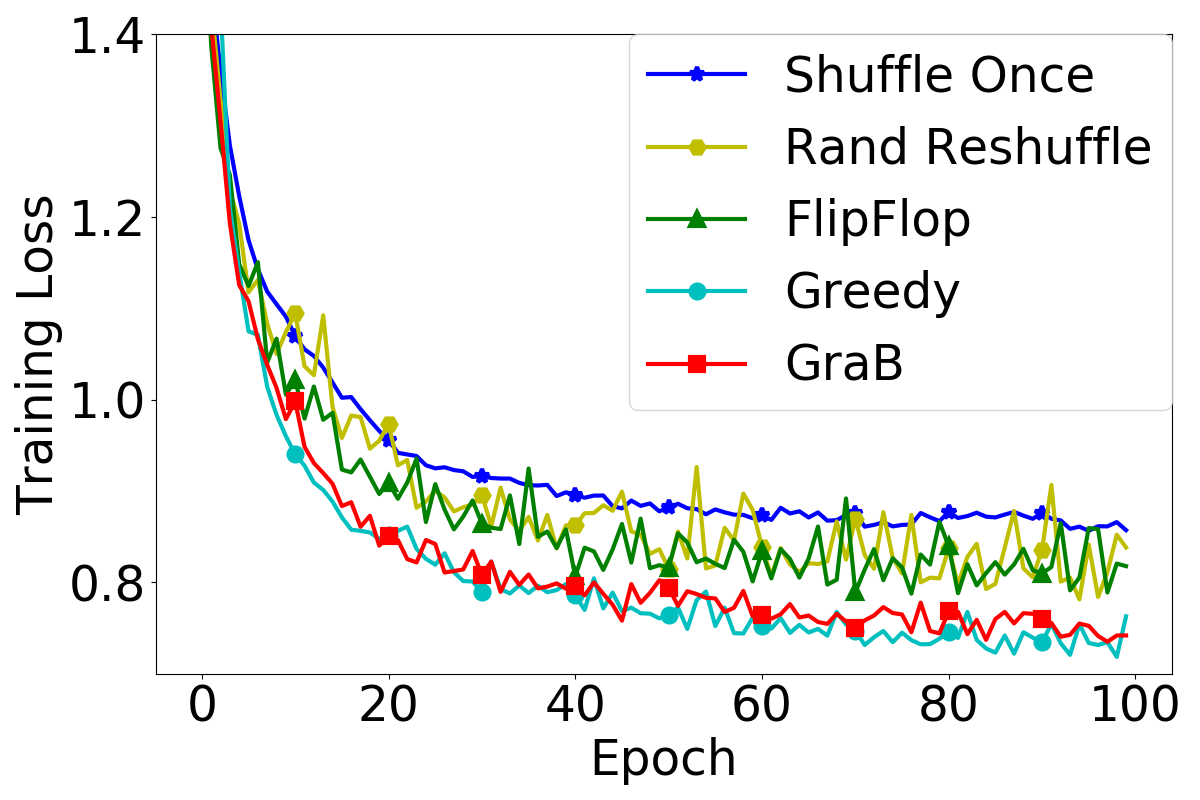}
  \includegraphics[width=0.245\textwidth]{./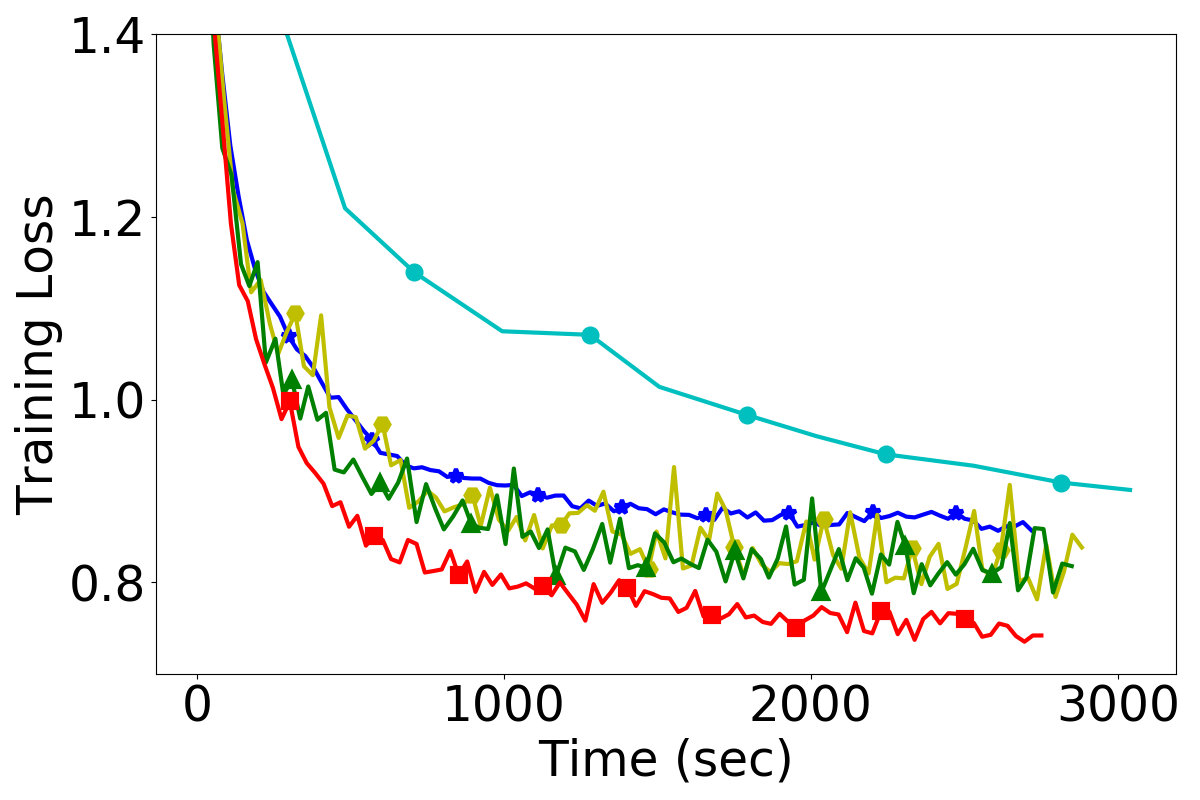}
  \includegraphics[width=0.245\textwidth]{./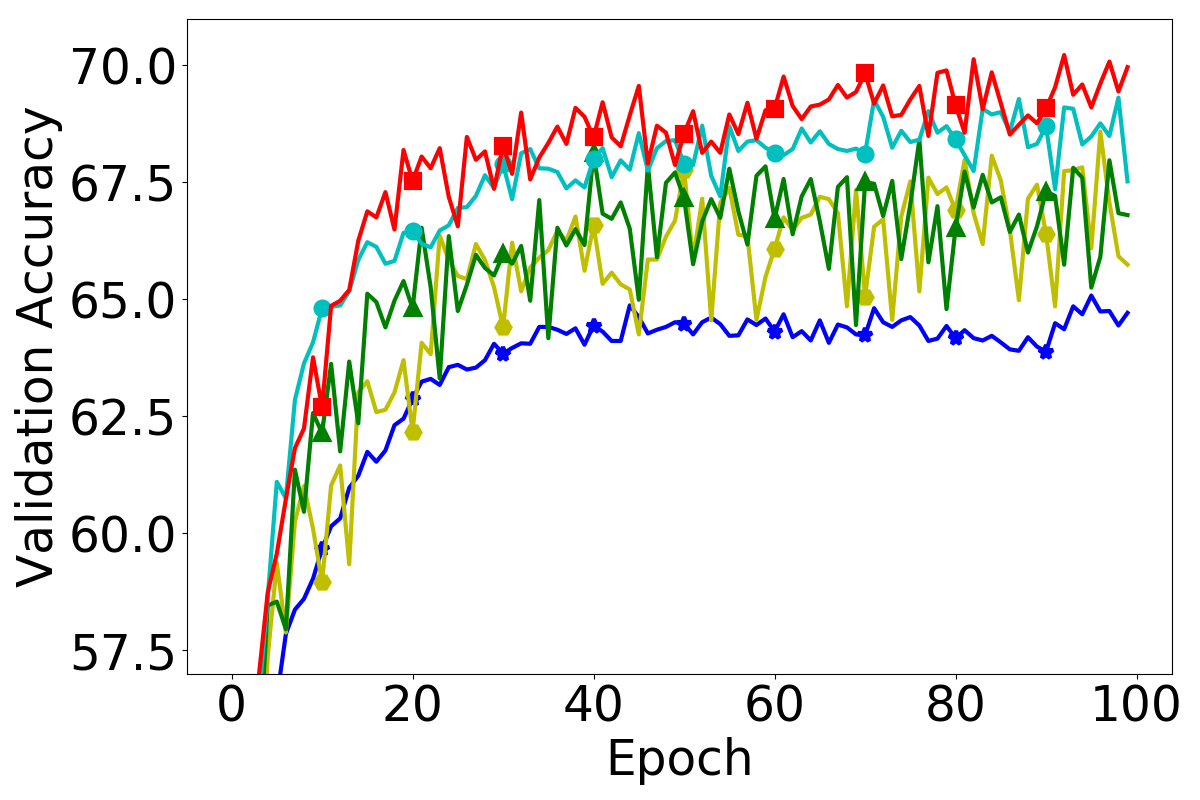}
  \includegraphics[width=0.245\textwidth]{./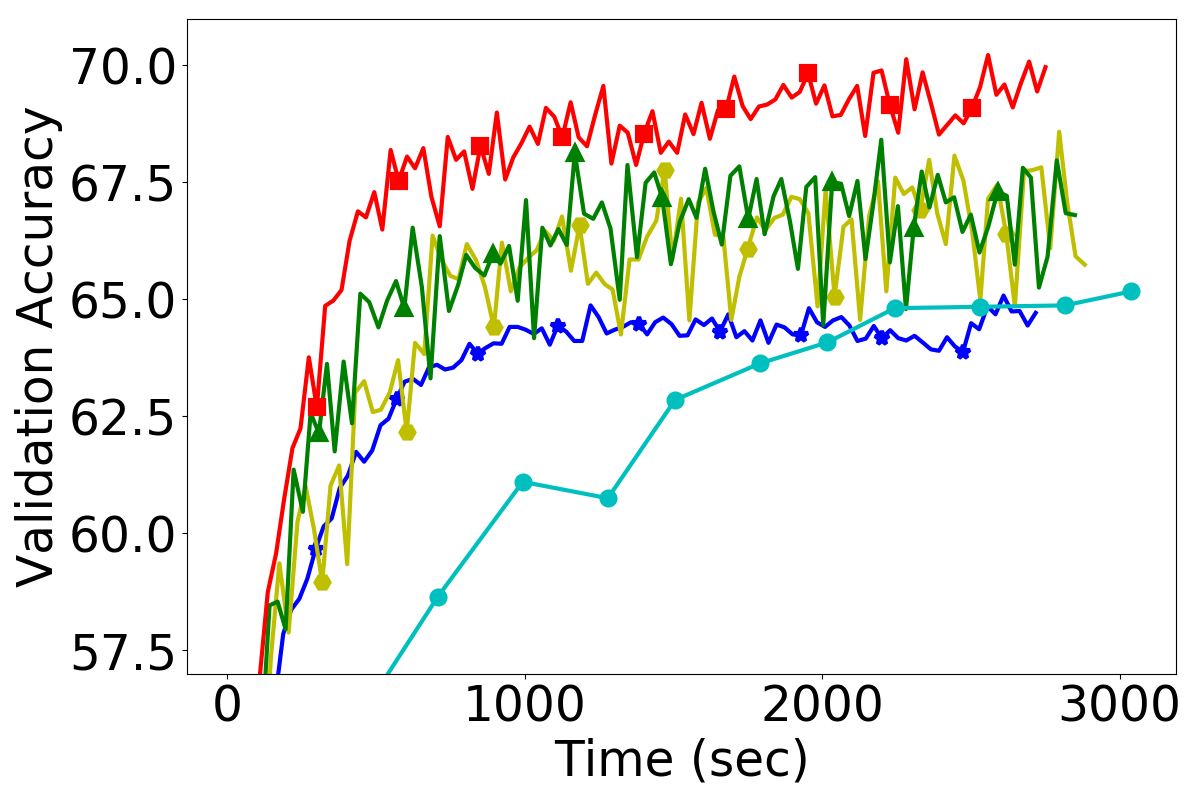}}
  \subfigure[LSTM on WikiText-2. Greedy Ordering incurs Out Of Memory (OOM) error.]{
  \includegraphics[width=0.245\textwidth]{./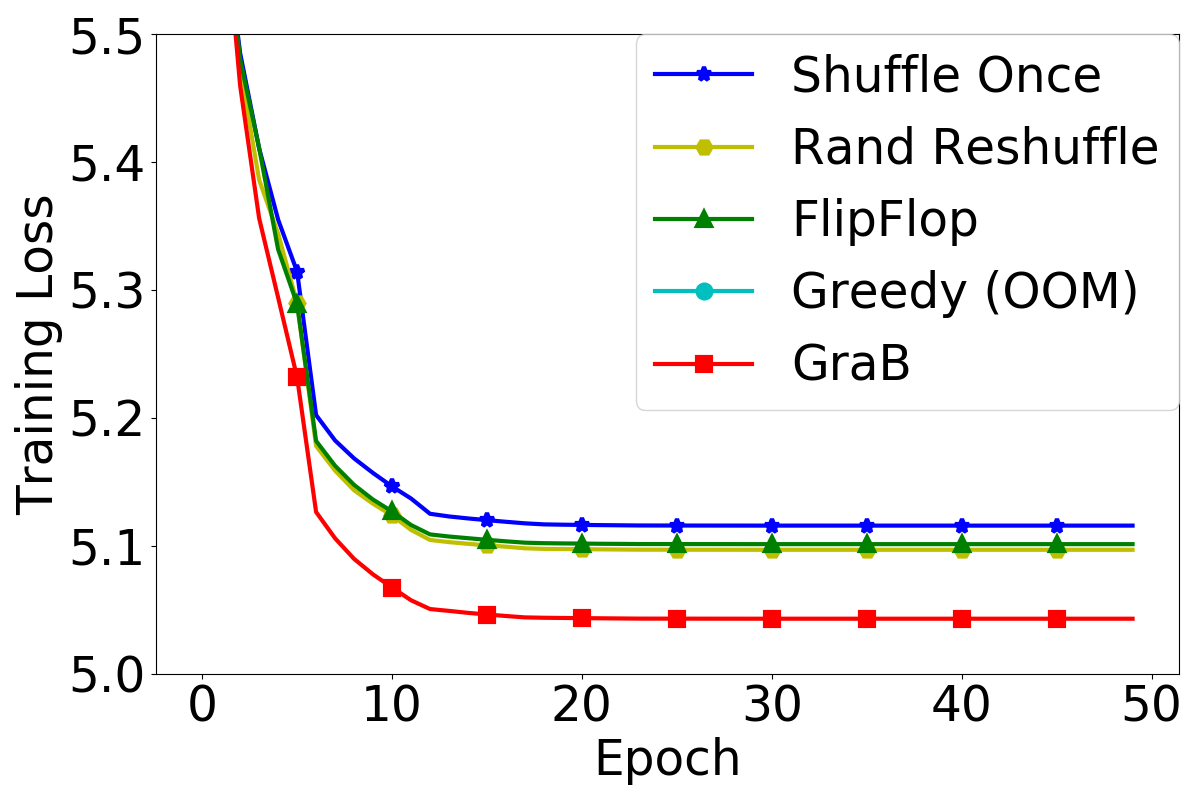}
  \includegraphics[width=0.245\textwidth]{./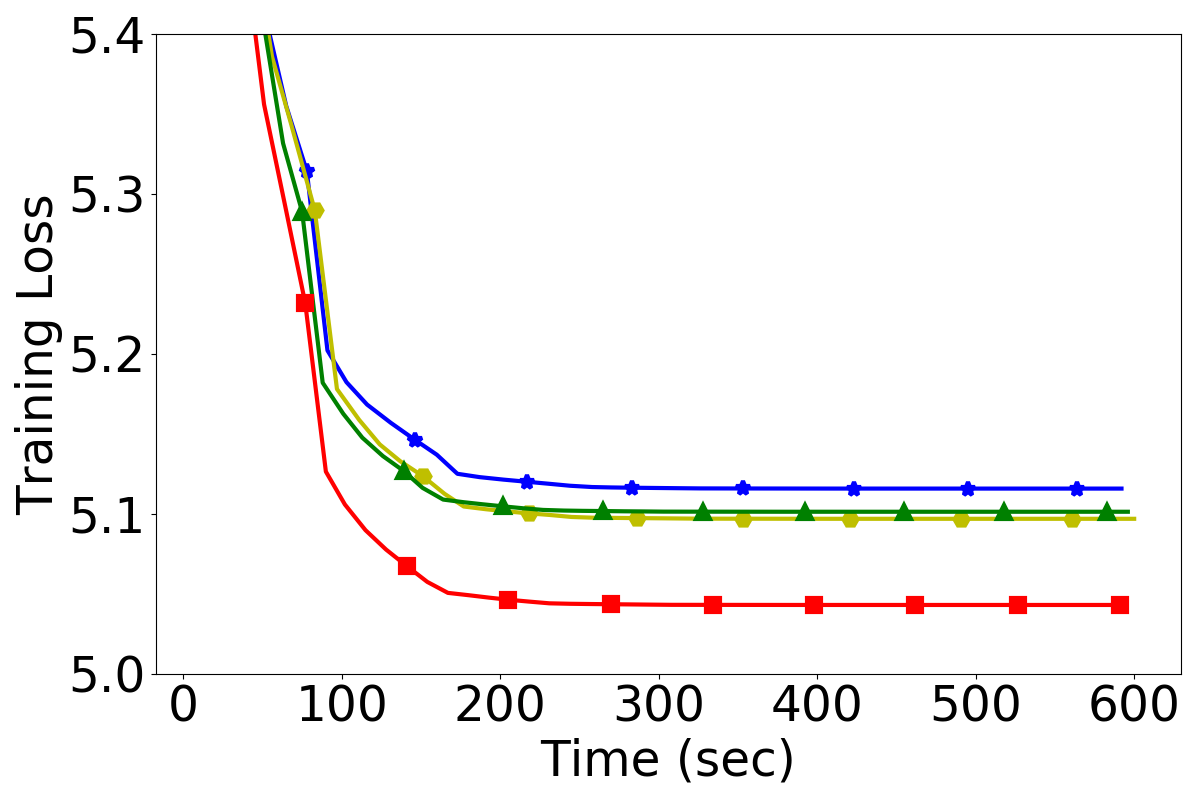}
  \includegraphics[width=0.245\textwidth]{./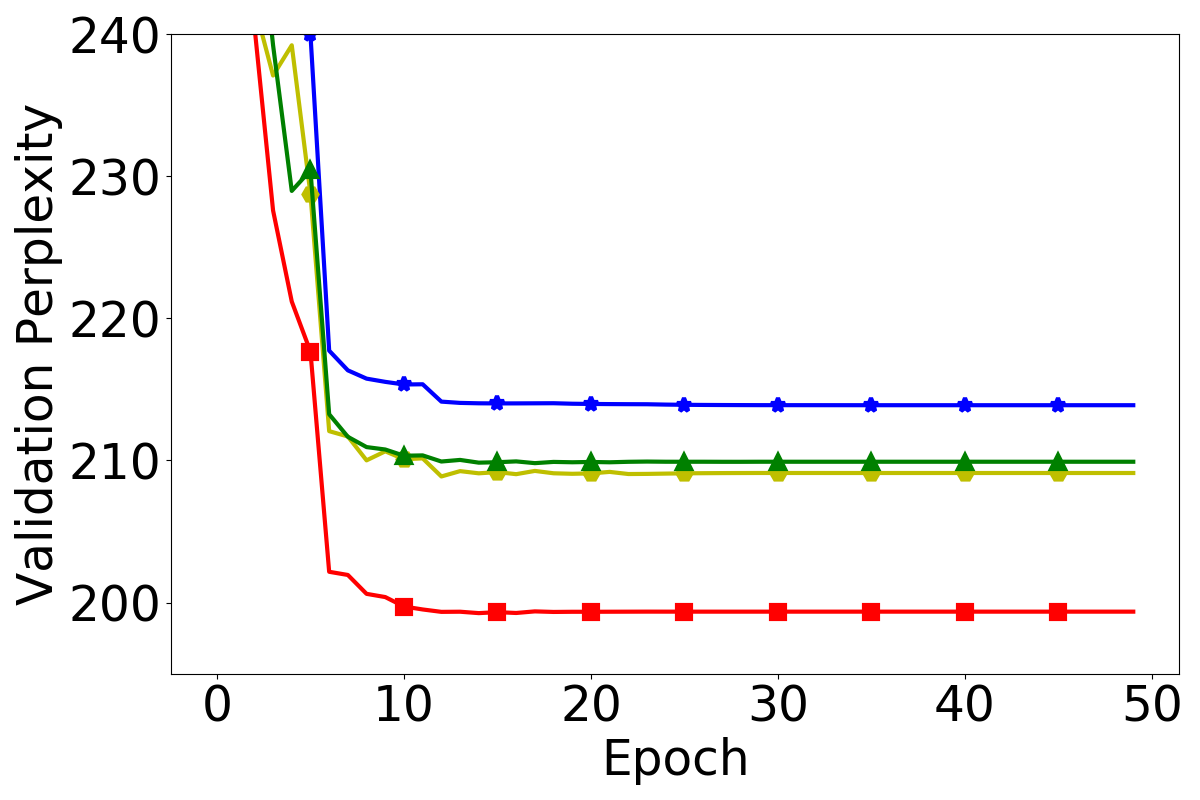}
  \includegraphics[width=0.245\textwidth]{./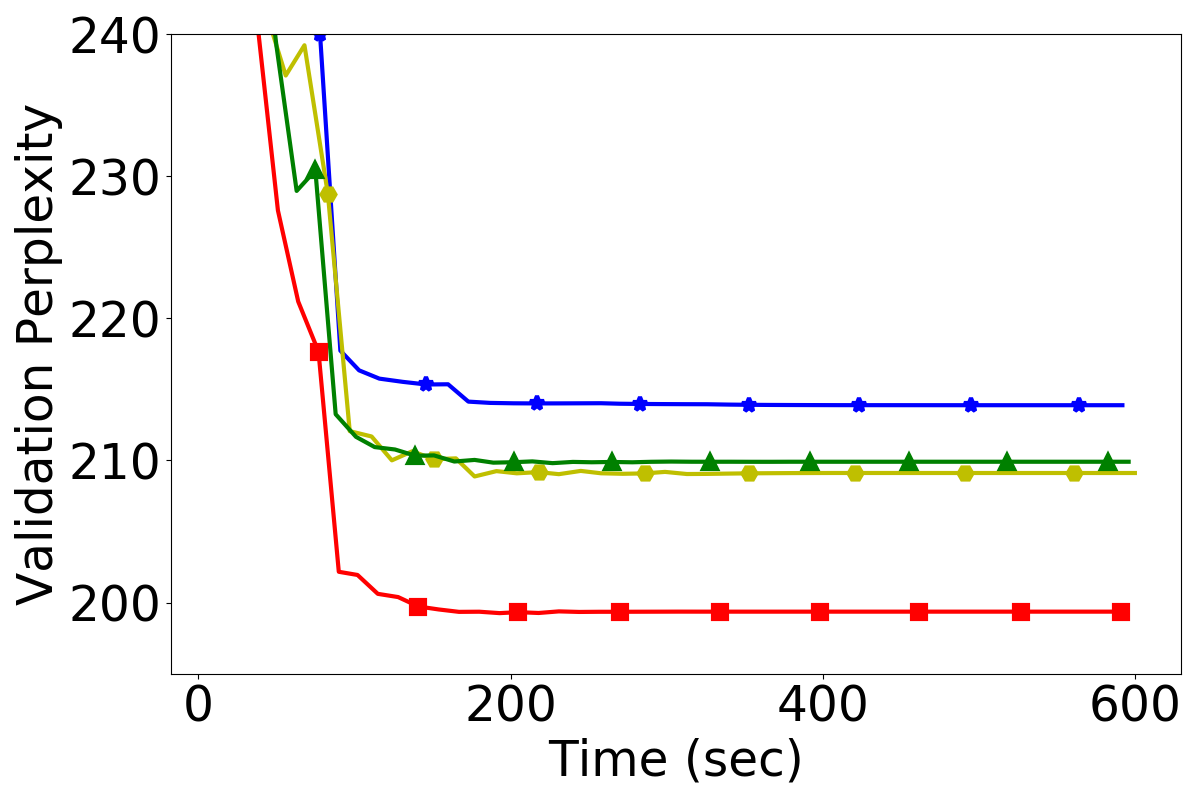}}
  \subfigure[Finetune BERT-Tiny on two GLUE tasks (left: QNLI; right: SST-2). Greedy Ordering incurs Out Of Memory (OOM) error on both tasks, and thus is not shown.]{
  \includegraphics[width=0.245\textwidth]{./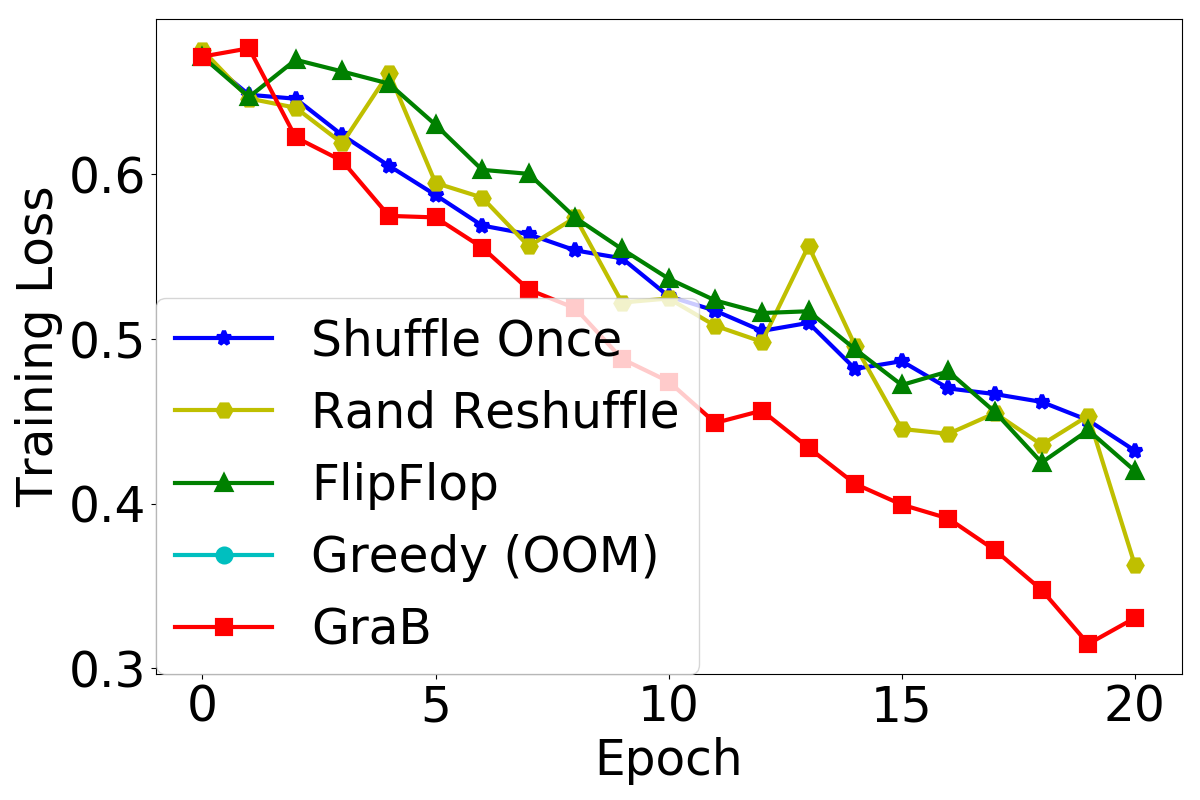}
  \includegraphics[width=0.245\textwidth]{./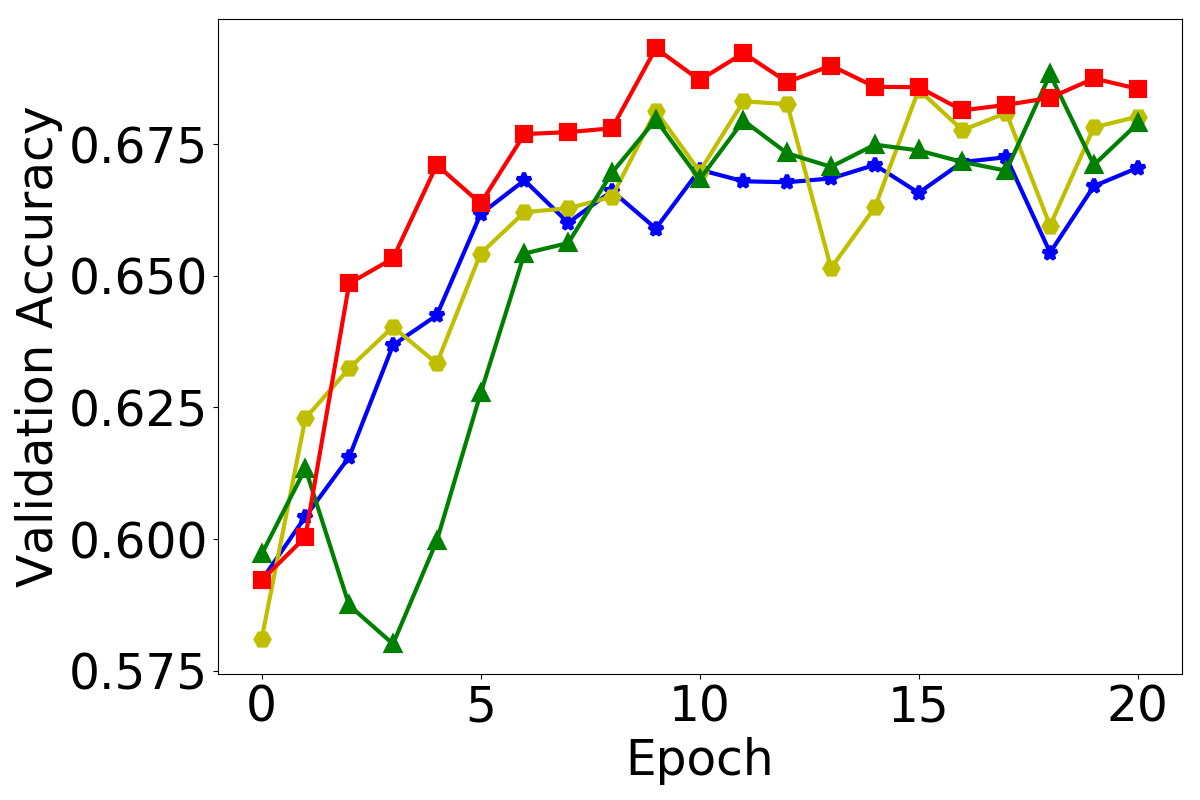}
  \includegraphics[width=0.245\textwidth]{./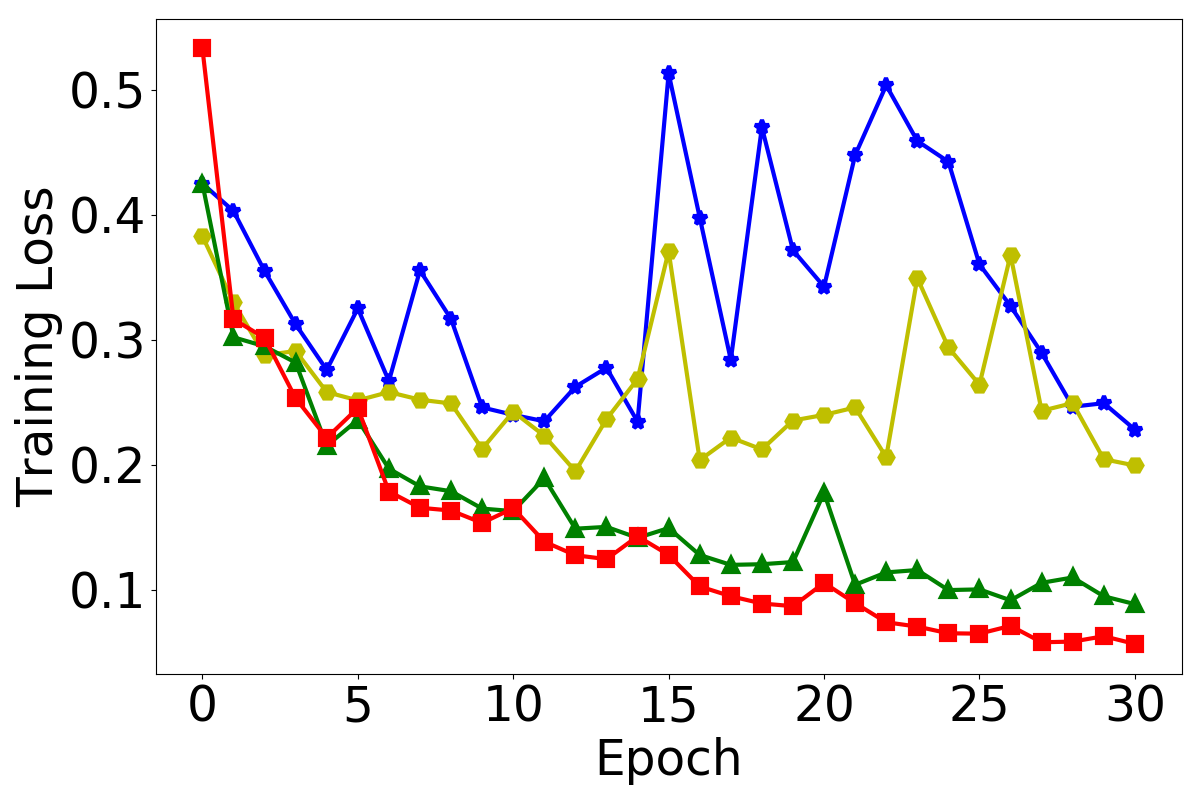}
  \includegraphics[width=0.245\textwidth]{./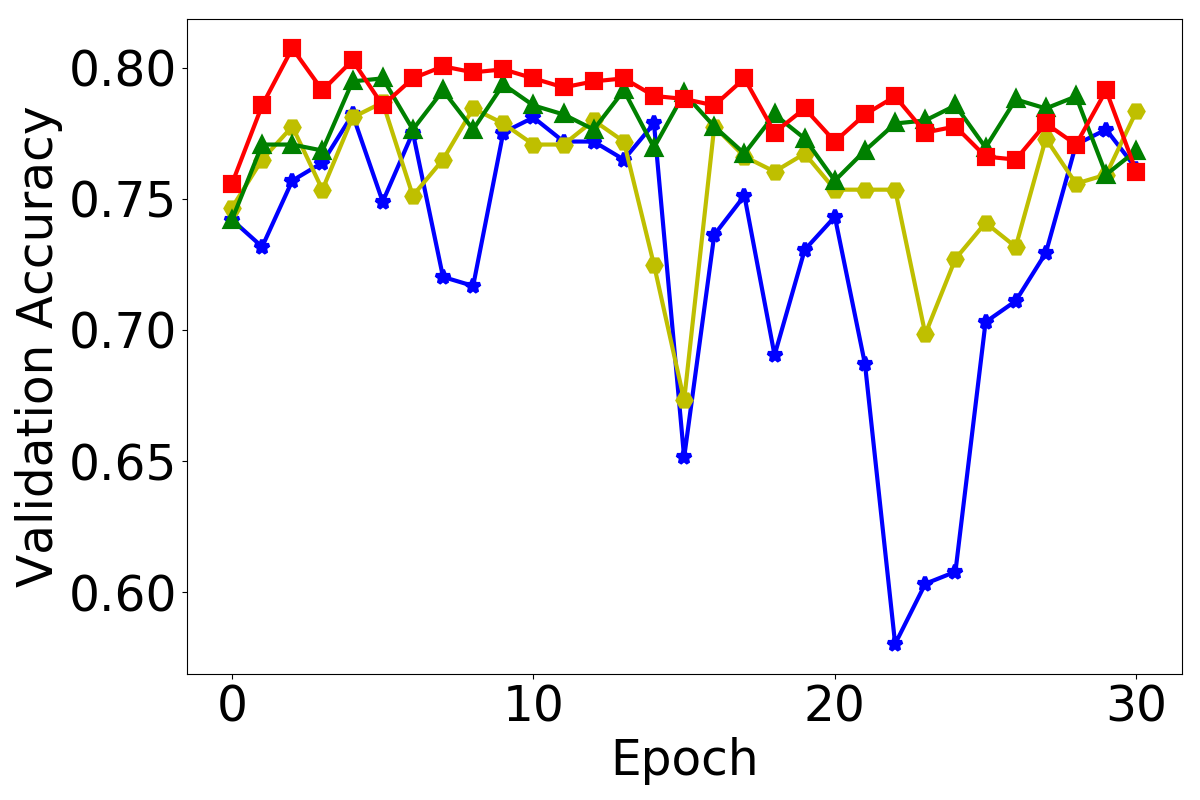}}
  \caption{Training/Validation on MNIST, CIFAR10, WikiText-2 and GLUE with different models.}
  \label{exp:fig:convergence}
\end{figure*}
The convergence plots for each algorithms are summarized in Figure~\ref{exp:fig:convergence}. We observe that in general, GraB is able to converge faster in terms of both training and validation loss on various tasks. Of all the baseline algorithms, Greedy Ordering is able to achieve comparable convergence speed with respect to epochs to GraB. However, Greedy Ordering consumes much more memory and wall-clock time as shown in the figure. In the BERT training case, we observe the Greedy Ordering runs \emph{Out Of Memory (OOM)}, causing an error on our machine. 
Based on simple calculations, GraB only requires less than 1\% the memory used by Greedy Ordering for all the tasks.
It is worth pointing out that throughout the experiments, we do not tune hyperparameters for GraB but let it reuse the hyperparameters from RR (details in the Appendix). This implies GraB can provide in-place improvement without any additional tuning in practice, and a well-tuned GraB could benefit with even larger margin.
Shuffle Once (SO) performs worst in all the cases, and FlipFlop is generally the same as RR, which is aligned with the conclusion made in
\citet{rajput2021permutation}. 

\begin{figure*}[t!]
  \centering
  \subfigure[LeNet/CIFAR10 (non-convex).]{
  \includegraphics[width=0.24\textwidth]{./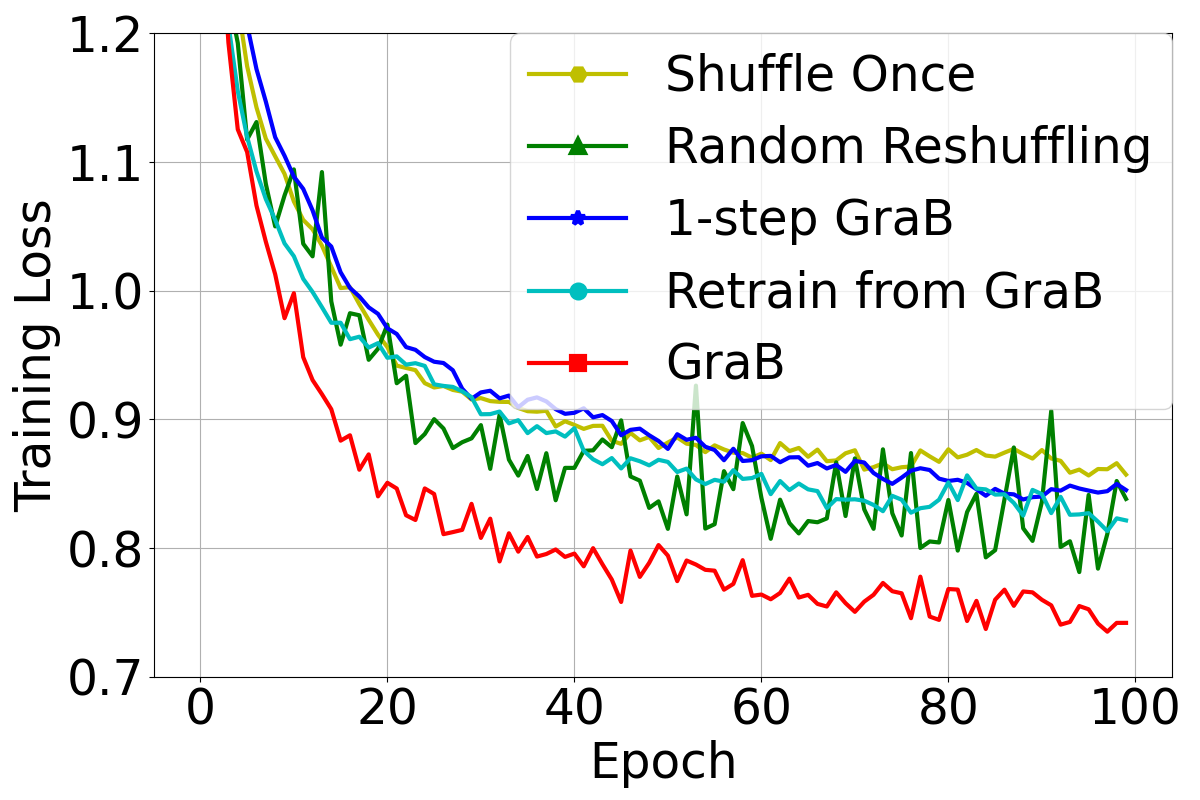}
  \includegraphics[width=0.24\textwidth]{./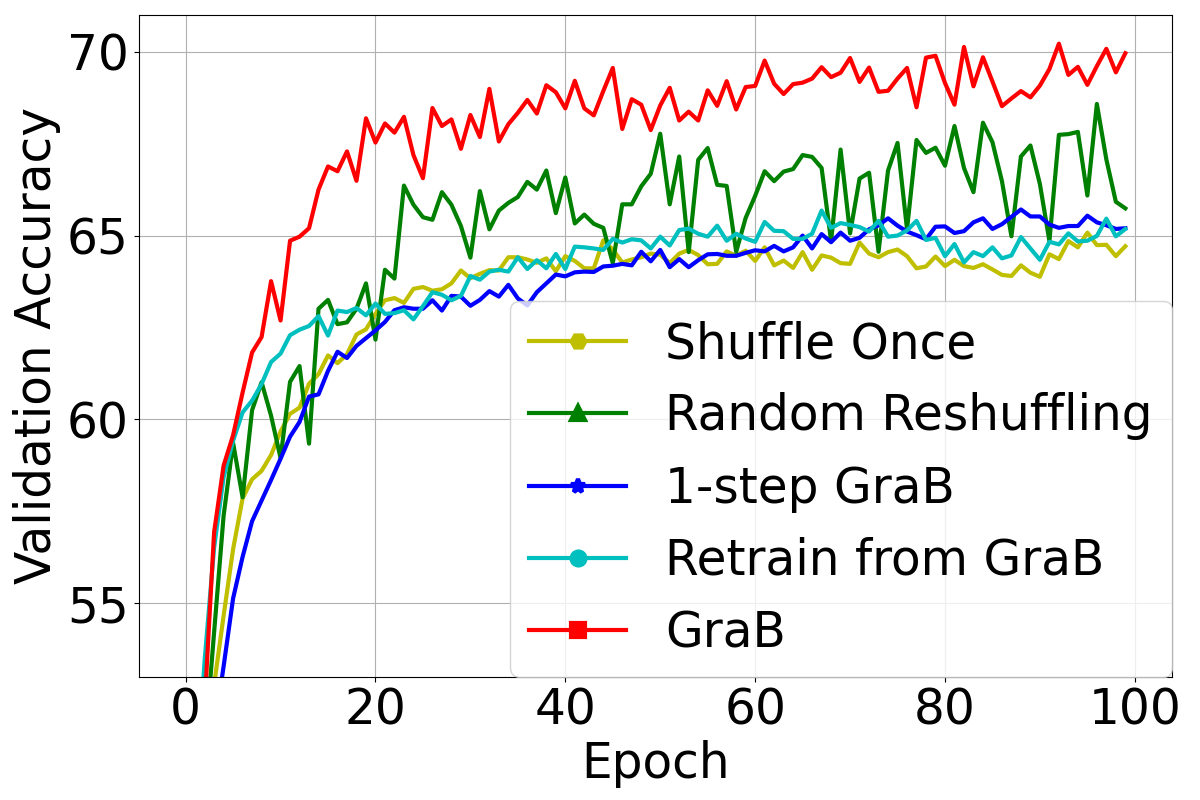}}
  \subfigure[logistic regression/MNIST (convex).]{
  \includegraphics[width=0.24\textwidth]{./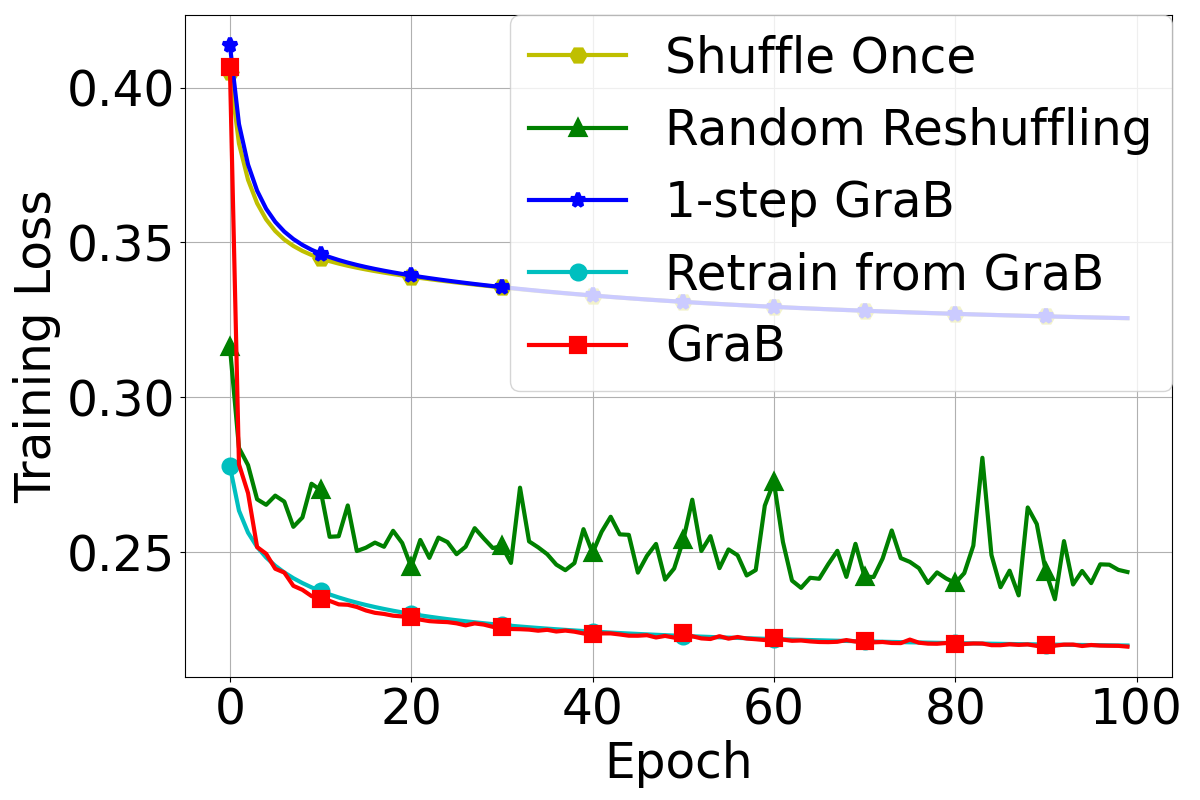}
  \includegraphics[width=0.24\textwidth]{./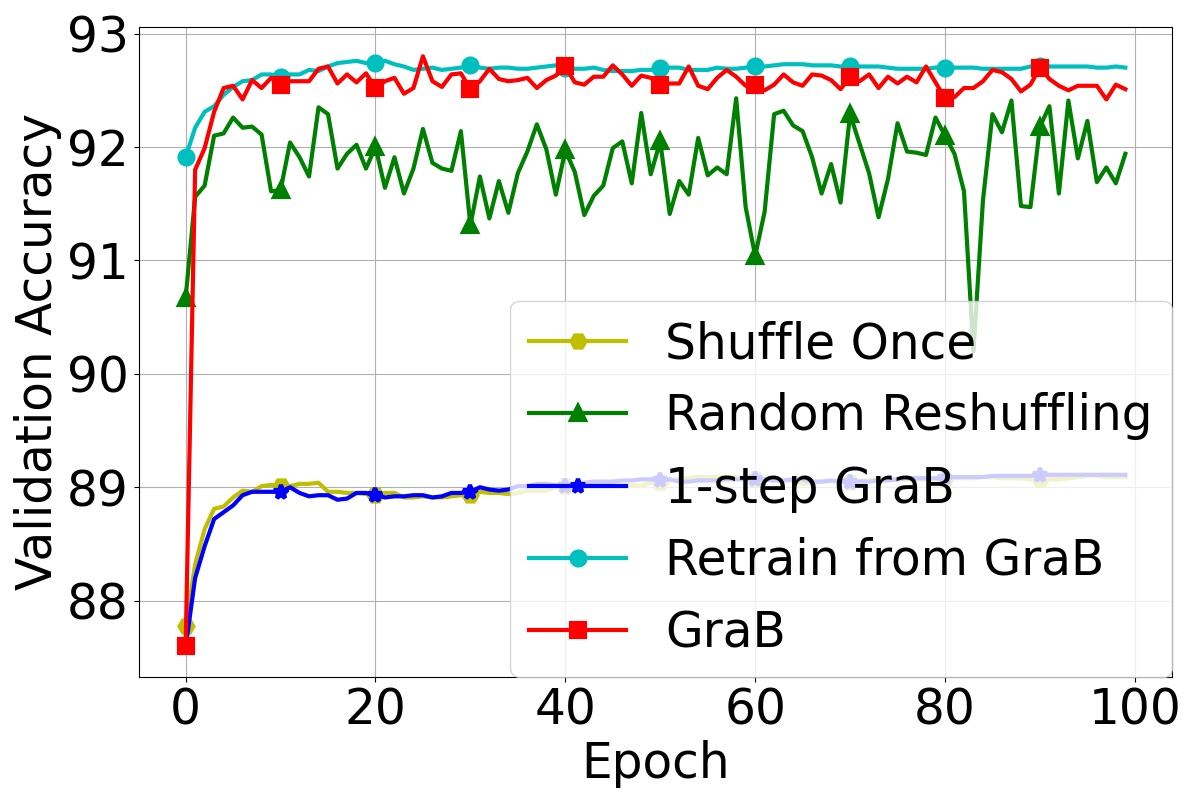}}
  \caption{Ablation Study of GraB with fixed orders. \textit{1-step GraB} refers to the algorithm that uses GraB only in the first epoch to obtain an order, and then use it as a fixed order for the rest of the training. \textit{Retrain from GraB} refers to the algorithm that uses a fixed order obtained at the end of the epoch from a GraB full run (the latter one is not for practice, only for understanding GraB with fixed orders). The main takeaway here is that for convex problems, we can find a fixed example order that outperforms both RR and SO.}
  \label{exp:fig:more_order}
\end{figure*}

\textbf{Ablation Study: are good permutations fixed?}
In GraB, the example ordering $\sigma$ is carried and improved over epochs, which is motivated from minimizing the herding bound. This raises another question of whether the found permutation changes at later stage of training, and if so, can we find a fixed permutation that outperforms the RR in practice?
Here we conduct an ablation study on two fixed order strategies with GraB. We include the following two variants:
\begin{itemize}[nosep,leftmargin=12pt]
    \item \textit{1-step GraB}: it only runs GraB for 1 epoch to obtain an ordering, and then use it as a fixed order for the rest of the training.
    \item \textit{Retrain from GraB}: First launch a full run of GraB, and then use the ordering in the final epoch as a fixed order in a new run.
\end{itemize}
We test these new variants on two tasks: LeNet on CIFAR10 (non-convex) and logistic regression on MNIST (convex). 
The training loss and validation accuracy curves can be found in Figure~\ref{exp:fig:more_order}.
We observe that on both tasks, the \textit{1-step GraB} does not work well, which is aligned with our theory and motivation in Section~\ref{sec:online_balancing} Challenge II. On the other hand, \textit{Retrain from GraB} achieves the comparable performance than original GraB on the convex task, but not on the non-convex one. This is mainly because a good ordering is going to depend on the local optimum the algorithm is approaching and for a convex problem, there is only one such optimum. It again verifies that GraB finds better orderings compared to baseline Random Reshuffling and Shuffle Once.

\textbf{On the granularity of example ordering.}
Throughout this paper, we have discussed how to find good data permutations via per-example gradients. In practice, however, per-example gradients are usually not easy to obtain since data is usually loaded in batches, and many ML libraries (e.g. PyTorch) directly accumulate gradients over the batch. A direct workaround is to fix the data within batches and reorder the batches (treating them as coarse-grained examples). This, however, would compromise the benefits of ordering since the total number of examples $n$ is reduced by a factor of batch size, while the statistical improvement of herding (GraB) is in the order of $O(n^{-1/3})$. To alleviate this, we provide two alternatives: (1) Use ML frameworks that support quick per-example gradients computation (e.g. JAX\footnote{\url{https://jax.readthedocs.io/en/latest/jax-101/04-advanced-autodiff.html\#per-example-gradients}}); (2) Leverage gradient accumulation steps as used in large language model training, i.e., use smaller batch sizes in the code but perform \texttt{optimizer.step} once every few steps so that we can obtain the finer-grained gradients on-the-fly while still optimizing the models with the desired batch size. This is the default method we used throughout the experiments, we will include more details in the appendix.

\section{Conclusion}
In this paper, we formulate a herding framework for data-ordering in SGD. We prove SGD with herding finds better data orderings, and converges faster than random reshuffling on smooth non-convex and PL objectives. We propose an online gradient balancing algorithm named GraB that finds a better ordering with little compute or memory overhead. We substantiate our theory and the usefulness of GraB on multiple machine learning applications.

\section*{Acknowledgement}
This work is supported by NSF-2046760. Yucheng is supported by Meta PhD Fellowship. The authors would like to thank Si Yi Meng and anonymous reviewers from NeurIPS 2022 for providing valuable feedbacks
on this paper.

\bibliography{refs}

\begin{thebibliography}{43}
\providecommand{\natexlab}[1]{#1}
\providecommand{\url}[1]{\texttt{#1}}
\expandafter\ifx\csname urlstyle\endcsname\relax
  \providecommand{\doi}[1]{doi: #1}\else
  \providecommand{\doi}{doi: \begingroup \urlstyle{rm}\Url}\fi

\bibitem[Mishchenko et~al.(2020)Mishchenko, Khaled, and
  Richt{\'{a}}rik]{mishchenko2020random}
Konstantin Mishchenko, Ahmed Khaled, and Peter Richt{\'{a}}rik.
\newblock Random reshuffling: Simple analysis with vast improvements.
\newblock In \emph{Advances in Neural Information Processing Systems}, 2020.

\bibitem[Yun et~al.(2021)Yun, Sra, and Jadbabaie]{yun2021can}
Chulhee Yun, Suvrit Sra, and Ali Jadbabaie.
\newblock Open problem: Can single-shuffle {SGD} be better than reshuffling
  {SGD} and {GD}?
\newblock In \emph{Conference on Learning Theory}, 2021.

\bibitem[De~Sa(2020{\natexlab{a}})]{de2020random}
Christopher~M De~Sa.
\newblock Random reshuffling is not always better.
\newblock \emph{Advances in Neural Information Processing Systems},
  33:\penalty0 5957--5967, 2020{\natexlab{a}}.

\bibitem[Rajput et~al.(2021)Rajput, Lee, and
  Papailiopoulos]{rajput2021permutation}
Shashank Rajput, Kangwook Lee, and Dimitris Papailiopoulos.
\newblock Permutation-based sgd: Is random optimal?
\newblock \emph{arXiv preprint arXiv:2102.09718}, 2021.

\bibitem[Lu et~al.(2021{\natexlab{a}})Lu, Meng, and De~Sa]{lu2021general}
Yucheng Lu, Si~Yi Meng, and Christopher De~Sa.
\newblock {A General Analysis of Example-Selection for Stochastic Gradient
  Descent}.
\newblock In \emph{International Conference on Learning Representations},
  2021{\natexlab{a}}.

\bibitem[Mohtashami et~al.(2022)Mohtashami, Stich, and
  Jaggi]{mohtashami2022characterizing}
Amirkeivan Mohtashami, Sebastian Stich, and Martin Jaggi.
\newblock Characterizing \& finding good data orderings for fast convergence of
  sequential gradient methods.
\newblock \emph{arXiv preprint arXiv:2202.01838}, 2022.

\bibitem[Harvey and Samadi(2014)]{harvey2014near}
Nick Harvey and Samira Samadi.
\newblock {Near-Optimal Herding}.
\newblock In \emph{Proceedings of The 27th Conference on Learning Theory},
  volume~35, pages 1165--1182, 2014.

\bibitem[Bansal and Spencer(2013)]{bansal2013deterministic}
Nikhil Bansal and Joel Spencer.
\newblock Deterministic discrepancy minimization.
\newblock \emph{Algorithmica}, 67\penalty0 (4):\penalty0 451--471, 2013.

\bibitem[Schmidt et~al.(2017)Schmidt, Roux, and Bach]{schmidt2017minimizing}
Mark Schmidt, Nicolas~Le Roux, and Francis~R. Bach.
\newblock Minimizing finite sums with the stochastic average gradient.
\newblock \emph{Mathematical Programming}, 162\penalty0 (1-2):\penalty0
  83--112, 2017.

\bibitem[Needell et~al.(2014)Needell, Ward, and Srebro]{needell2014stochastic}
Deanna Needell, Rachel Ward, and Nathan Srebro.
\newblock {Stochastic Gradient Descent, Weighted Sampling, and the Randomized
  Kaczmarz algorithm}.
\newblock In \emph{Advances in Neural Information Processing Systems}, pages
  1017--1025, 2014.

\bibitem[Lu et~al.(2021{\natexlab{b}})Lu, Park, Chen, Wang, De~Sa, and
  Foster]{lu2021variance}
Yucheng Lu, Youngsuk Park, Lifan Chen, Yuyang Wang, Christopher De~Sa, and Dean
  Foster.
\newblock Variance reduced training with stratified sampling for forecasting
  models.
\newblock In \emph{Proceedings of the International Conference on Machine
  Learning}, pages 7145--7155. PMLR, 2021{\natexlab{b}}.

\bibitem[Bottou(2012)]{bottou2012stochastic}
L{\'e}on Bottou.
\newblock Stochastic gradient descent tricks.
\newblock In \emph{Neural networks: Tricks of the trade}, pages 421--436.
  Springer, 2012.

\bibitem[Ying et~al.(2017)Ying, Yuan, Vlaski, and Sayed]{ying2017performance}
Bicheng Ying, Kun Yuan, Stefan Vlaski, and Ali~H. Sayed.
\newblock On the performance of random reshuffling in stochastic learning.
\newblock In \emph{2017 Information Theory and Applications Workshop (ITA)},
  pages 1--5. IEEE, 2017.

\bibitem[Bertsekas(2011)]{bertsekas2011incremental}
Dimitri~P. Bertsekas.
\newblock {Incremental Gradient, Subgradient, and Proximal Methods for Convex
  Optimization: A Survey}.
\newblock In \emph{{Optimization for Machine Learning}}. The MIT Press, 2011.

\bibitem[G{\"{u}}rb{\"{u}}zbalaban et~al.(2019)G{\"{u}}rb{\"{u}}zbalaban,
  Ozdaglar, and Parrilo]{gurbuzbalaban2019convergence}
Mert G{\"{u}}rb{\"{u}}zbalaban, Asuman~E. Ozdaglar, and Pablo~A. Parrilo.
\newblock Convergence rate of incremental gradient and incremental {Newton}
  methods.
\newblock \emph{{SIAM} Journal on Optimization}, 29\penalty0 (4):\penalty0
  2542--2565, 2019.

\bibitem[Recht and R{\'{e}}(2012)]{recht2012toward}
Benjamin Recht and Christopher R{\'{e}}.
\newblock Toward a noncommutative arithmetic-geometric mean inequality:
  Conjectures, case-studies, and consequences.
\newblock In \emph{Conference on Learning Theory}, volume~23, pages
  11.1--11.24, 2012.

\bibitem[De~Sa(2020{\natexlab{b}})]{NEURIPS2020_42299f06}
Christopher De~Sa.
\newblock Random reshuffling is not always better.
\newblock In \emph{Advances in Neural Information Processing Systems},
  2020{\natexlab{b}}.

\bibitem[HaoChen and Sra(2019)]{haochen2019random}
Jeff~Z. HaoChen and Suvrit Sra.
\newblock Random shuffling beats {SGD} after finite epochs.
\newblock In \emph{Proceedings of the International Conference on Machine
  Learning}, volume~97, pages 2624--2633, 2019.

\bibitem[G{\"u}rb{\"u}zbalaban et~al.(2021)G{\"u}rb{\"u}zbalaban, Ozdaglar, and
  Parrilo]{gurbuzbalaban2021random}
Mert G{\"u}rb{\"u}zbalaban, Asu Ozdaglar, and Pablo~A Parrilo.
\newblock Why random reshuffling beats stochastic gradient descent.
\newblock \emph{Mathematical Programming}, 186\penalty0 (1):\penalty0 49--84,
  2021.

\bibitem[Bengio et~al.(2009)Bengio, Louradour, Collobert, and
  Weston]{bengio2009curriculum}
Yoshua Bengio, J{\'e}r{\^o}me Louradour, Ronan Collobert, and Jason Weston.
\newblock Curriculum learning.
\newblock In \emph{Proceedings of the 26th annual international conference on
  machine learning}, pages 41--48, 2009.

\bibitem[Graves et~al.(2017)Graves, Bellemare, Menick, Munos, and
  Kavukcuoglu]{graves2017automated}
Alex Graves, Marc~G Bellemare, Jacob Menick, Remi Munos, and Koray Kavukcuoglu.
\newblock Automated curriculum learning for neural networks.
\newblock In \emph{international conference on machine learning}, pages
  1311--1320. PMLR, 2017.

\bibitem[Matiisen et~al.(2019)Matiisen, Oliver, Cohen, and
  Schulman]{matiisen2019teacher}
Tambet Matiisen, Avital Oliver, Taco Cohen, and John Schulman.
\newblock Teacher--student curriculum learning.
\newblock \emph{IEEE transactions on neural networks and learning systems},
  31\penalty0 (9):\penalty0 3732--3740, 2019.

\bibitem[Soviany et~al.(2022)Soviany, Ionescu, Rota, and
  Sebe]{soviany2022curriculum}
Petru Soviany, Radu~Tudor Ionescu, Paolo Rota, and Nicu Sebe.
\newblock Curriculum learning: A survey.
\newblock \emph{International Journal of Computer Vision}, pages 1--40, 2022.

\bibitem[Welling(2009)]{welling2009herding}
Max Welling.
\newblock Herding dynamical weights to learn.
\newblock In \emph{Proceedings of the 26th Annual International Conference on
  Machine Learning}, pages 1121--1128, 2009.

\bibitem[Chelidze et~al.(2010)Chelidze, Chobanyan, Giorgobiani, and
  Kvaratskhelia]{chelidze2010greedy}
George Chelidze, Sergei Chobanyan, George Giorgobiani, and Vakhtang
  Kvaratskhelia.
\newblock Greedy algorithm fails in compact vector summation.
\newblock \emph{Bull. Georg. Natl. Acad. Sci}, 4\penalty0 (2), 2010.

\bibitem[Bansal and Garg(2017)]{bansal2017algorithmic}
Nikhil Bansal and Shashwat Garg.
\newblock Algorithmic discrepancy beyond partial coloring.
\newblock In \emph{Proceedings of the 49th Annual ACM SIGACT Symposium on
  Theory of Computing}, pages 914--926, 2017.

\bibitem[Li and De~Sa(2019)]{li2019dimension}
Zheng Li and Christopher~M De~Sa.
\newblock Dimension-free bounds for low-precision training.
\newblock \emph{Advances in Neural Information Processing Systems}, 32, 2019.

\bibitem[Lu and De~Sa(2020)]{lu2020moniqua}
Yucheng Lu and Christopher De~Sa.
\newblock Moniqua: Modulo quantized communication in decentralized sgd.
\newblock In \emph{International Conference on Machine Learning}, pages
  6415--6425. PMLR, 2020.

\bibitem[Spencer(1977)]{spencer1977balancing}
Joel Spencer.
\newblock Balancing games.
\newblock \emph{Journal of Combinatorial Theory, Series B}, 23\penalty0
  (1):\penalty0 68--74, 1977.

\bibitem[Aru et~al.(2016)Aru, Narayanan, Scott, and
  Venkatesan]{aru2016balancing}
Juhan Aru, Bhargav Narayanan, Alex Scott, and Ramarathnam Venkatesan.
\newblock Balancing sums of random vectors.
\newblock \emph{arXiv preprint arXiv:1610.05221}, 2016.

\bibitem[Bansal and Spencer(2020)]{bansal2020line}
Nikhil Bansal and Joel~H Spencer.
\newblock On-line balancing of random inputs.
\newblock \emph{Random Structures \& Algorithms}, 57\penalty0 (4):\penalty0
  879--891, 2020.

\bibitem[Bansal et~al.(2020)Bansal, Jiang, Singla, and Sinha]{bansal2020online}
Nikhil Bansal, Haotian Jiang, Sahil Singla, and Makrand Sinha.
\newblock Online vector balancing and geometric discrepancy.
\newblock In \emph{Proceedings of the 52nd Annual ACM SIGACT Symposium on
  Theory of Computing}, pages 1139--1152, 2020.

\bibitem[Jiang et~al.(2019)Jiang, Kulkarni, and Singla]{jiang2019online}
Haotian Jiang, Janardhan Kulkarni, and Sahil Singla.
\newblock Online geometric discrepancy for stochastic arrivals with
  applications to envy minimization.
\newblock \emph{arXiv preprint arXiv:1910.01073}, 2019.

\bibitem[Bansal et~al.(2021)Bansal, Jiang, Meka, Singla, and
  Sinha]{bansal2021online}
Nikhil Bansal, Haotian Jiang, Raghu Meka, Sahil Singla, and Makrand Sinha.
\newblock Online discrepancy minimization for stochastic arrivals.
\newblock In \emph{Proceedings of the 2021 ACM-SIAM Symposium on Discrete
  Algorithms (SODA)}, pages 2842--2861. SIAM, 2021.

\bibitem[Matou{\v{s}}ek and Nikolov(2015)]{matouvsek2015combinatorial}
Jir{\'\i} Matou{\v{s}}ek and Aleksandar Nikolov.
\newblock Combinatorial discrepancy for boxes via the gamma\_2 norm.
\newblock In \emph{31st International Symposium on Computational Geometry (SoCG
  2015)}. Schloss Dagstuhl-Leibniz-Zentrum fuer Informatik, 2015.

\bibitem[Chobanyan et~al.(2015)Chobanyan, Levental, and
  Salehi]{chobanyan2015maximum}
S~Chobanyan, Shlomo Levental, and Habib Salehi.
\newblock Maximum inequalities for rearrangements of summands and assignments
  of signs.
\newblock \emph{Theory of Probability \& Its Applications}, 59\penalty0
  (4):\penalty0 677--684, 2015.

\bibitem[Chobanyan(2016)]{chobanyan2016inequalities}
Sergei Chobanyan.
\newblock Inequalities on rearrangements of summands with applications in as
  convergence of functional series.
\newblock In \emph{VII International Joint Conference of Georgian Mathematical
  Union \& Georgian Mechanical Union}, page~56, 2016.

\bibitem[Alweiss et~al.(2021)Alweiss, Liu, and Sawhney]{alweiss2021discrepancy}
Ryan Alweiss, Yang~P Liu, and Mehtaab Sawhney.
\newblock Discrepancy minimization via a self-balancing walk.
\newblock In \emph{Proceedings of the 53rd Annual ACM SIGACT Symposium on
  Theory of Computing}, pages 14--20, 2021.

\bibitem[Krizhevsky and Hinton(2010)]{krizhevsky2010convolutional}
Alex Krizhevsky and Geoff Hinton.
\newblock Convolutional deep belief networks on cifar-10.
\newblock \emph{Unpublished manuscript}, 40\penalty0 (7):\penalty0 1--9, 2010.

\bibitem[Merity et~al.(2017)Merity, Keskar, and Socher]{merity2017regularizing}
Stephen Merity, Nitish~Shirish Keskar, and Richard Socher.
\newblock Regularizing and optimizing lstm language models.
\newblock \emph{arXiv preprint arXiv:1708.02182}, 2017.

\bibitem[Turc et~al.(2019)Turc, Chang, Lee, and Toutanova]{turc2019}
Iulia Turc, Ming-Wei Chang, Kenton Lee, and Kristina Toutanova.
\newblock Well-read students learn better: On the importance of pre-training
  compact models.
\newblock \emph{arXiv preprint arXiv:1908.08962v2}, 2019.

\bibitem[Devlin et~al.(2018)Devlin, Chang, Lee, and Toutanova]{devlin2018bert}
Jacob Devlin, Ming-Wei Chang, Kenton Lee, and Kristina Toutanova.
\newblock Bert: Pre-training of deep bidirectional transformers for language
  understanding.
\newblock \emph{arXiv preprint arXiv:1810.04805}, 2018.

\bibitem[LeCun et~al.(1998)LeCun, Bottou, Bengio, and
  Haffner]{lecun1998gradient}
Yann LeCun, L{\'e}on Bottou, Yoshua Bengio, and Patrick Haffner.
\newblock Gradient-based learning applied to document recognition.
\newblock \emph{Proceedings of the IEEE}, 86\penalty0 (11):\penalty0
  2278--2324, 1998.

\end{thebibliography}

\newpage
\appendix
\section{Experimental Details}
\label{sec:appendix:experimental_details}

\textbf{Setup.}
We first provide the details to models and datasets used in the experiments:
\begin{enumerate}[nosep,leftmargin=12pt]
    \item \textbf{Logistic Regression on MNIST}: The MNIST consists of handwritten digits 0-9, it has a training set of $60,000$ examples, and a test set of $10,000$ examples. It contains $28\times 28=784$ features and $10$ classes. The model is of dimension $d=784\times 10 + 10=7850$ (with bias).
    \item \textbf{LeNet on CIFAR10.}
    The CIFAR10 dataset consists of $60000$ $32\times 32$ colour images in $10$ classes, with $6000$ images per class. There are $50000$ training images and $10000$ test images. The dataset is divided into five training batches and one test batch, each with $10000$ images. The test batch contains exactly $1000$ randomly-selected images from each class. LeNet is a classic convoluntional neural network proposed by \citep{lecun1998gradient}.
    \item \textbf{LSTM on WikiText-2.} In this task we train a 2-layer LSTM on the wikitext-2 dataset, which contains 2 million words. We set the embedding size to be 32, number of hidden unit to be 32 and number of head to be 2. We adopt the learning rate schedule from PyTorch example repo\footnote{\url{https://github.com/pytorch/examples/tree/main/word_language_model}}. We set the sequence length (bptt) to be 35.
    \item \textbf{BERT on GLUE.}
    The General Language Understanding Evaluation (GLUE) benchmark is a collection of resources for training, evaluating, and analyzing natural language understanding systems. GLUE consists of 11 different tasks. In the main paper, we evaluate on the SST-2 and QNLI. For this task, we adopt the BERT-Tiny model released by Google Research\footnote{\url{https://github.com/google-research/bert}.}. BERT-Tiny contains  $768$ hidden layers. For each task, we set maximum sequence length to be $32$ and enable padding.
\end{enumerate}

\textbf{Hyperparameters.}
We tune the hyperparameters for all the algorithms except GraB within a given range. Then we reuse the hyperparameters for RR in GraB. This implies GraB can potentially provide in-place benefit without additional tuning. We use momentum SGD (with its default value 0.9) for all the tasks. The hyperparameters (ranges) for each task are as follows:
\begin{enumerate}[nosep,leftmargin=12pt]
    \item \textbf{MNIST.}: \texttt{LR}$\in\{0.1, 0.01, 0.001, 0.0001\}$; \texttt{BSZ}=$64$; \texttt{GCC}=$32$; \texttt{WD}=$0.0001$.
    \item \textbf{CIFAR10.}
    \texttt{LR}$\in\{0.1, 0.01, 0.001, 0.0001\}$; \texttt{BSZ}=$16$; \texttt{GCC}=$2$; \texttt{WD}=$0.0001$.
    \item \textbf{WikiText-2.} We set momentum to be 0.9 and let the learning rate follow \texttt{ReduceLROnPlateau} from Pytorch with variable: \{\texttt{mode}='min', \texttt{factor}=0.1, \texttt{patience}=5, \texttt{threshold}=5\}. The initial learning rate is set to be 5.
    \item \textbf{GLUE.}
    On the SST-2 we adopt
    \texttt{BSZ}=$1$; \texttt{GCC}=$1$; \texttt{WD}=$0.01$; \texttt{LR}$\in\{0.005, 0.001, 0.0005, 0.0001\}$. On the QNLI task we adopt
    \texttt{BSZ}=$2$; \texttt{GCC}=$2$; \texttt{WD}=$0.01$; \texttt{LR}$\in\{0.005, 0.001, 0.0005, 0.0001\}$.
\end{enumerate}
\texttt{LR} stands for learning rate, \texttt{BSZ} stands for batch size, \texttt{GCC} stands for the gradient accumulation steps and \texttt{WD} stands for the weight decay.

\textbf{Gradient Accumulation.}
As illustrated in the paper, one workaround to obtain fine-grained gradients (subgradients over one example, or a group of examples of size smaller than batch size) is to leverage the gradient accumulation step, especially in frameworks that do not support per-example gradient computation like PyTorch.

\begin{lstlisting}[language=Python, caption=A simple workaround to obtain fine-grained gradients in some ML frameworks such as PyTorch.]
for epoch in range(num_epochs):
    epoch_step = 0
    for batch in grab_ordered_batches:
        epoch_step += 1
        optimizer.zero_grad()
        grad = backward(batch)
        ... # GraB related steps
        grad_buffer.add_(grad / grad_accumulation_step)
        if epoch_step % grad_accumulation_step == 0:
            optimizer.apply_gradient(grad_buffer)
            grad_buffer.zero_()
\end{lstlisting}

\begin{figure*}[h!]
  \centering
  \subfigure[$d=100$, $n=10000$.]{
  \includegraphics[width=0.31\textwidth]{./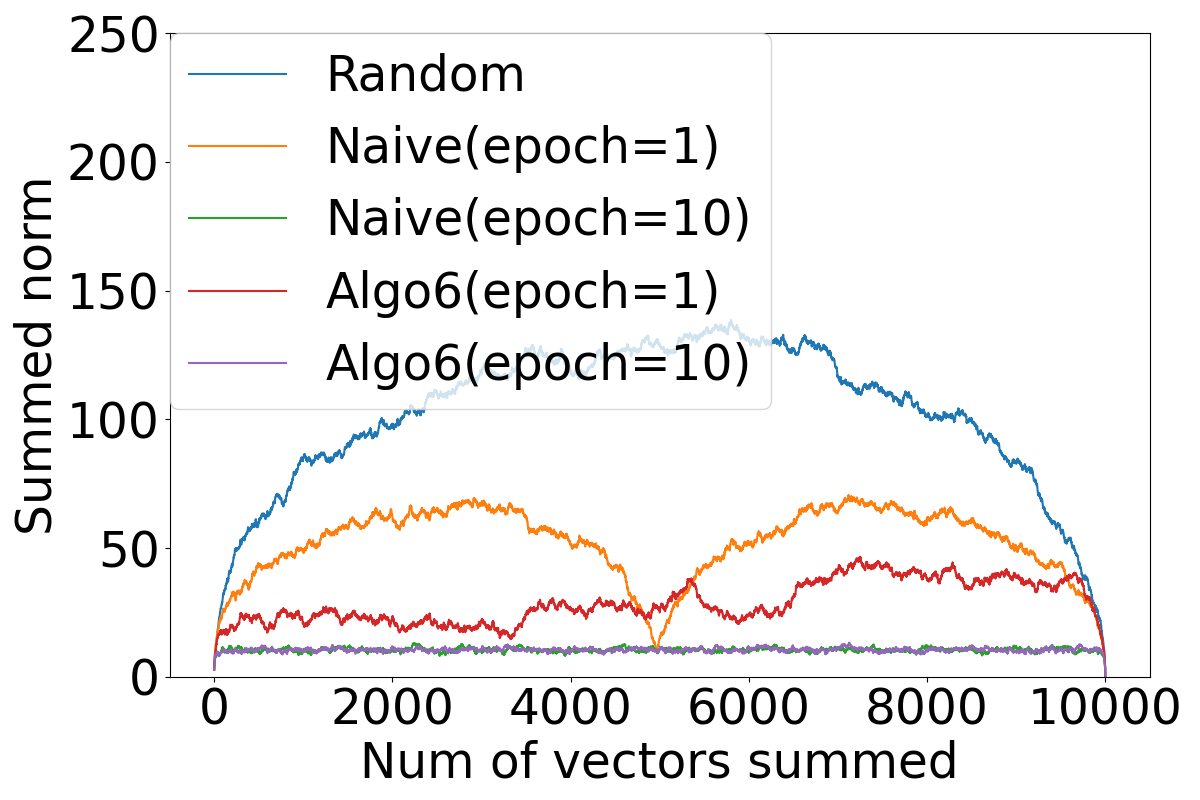}
  }
  \subfigure[$d=1000$, $n=10000$.]{
  \includegraphics[width=0.31\textwidth]{./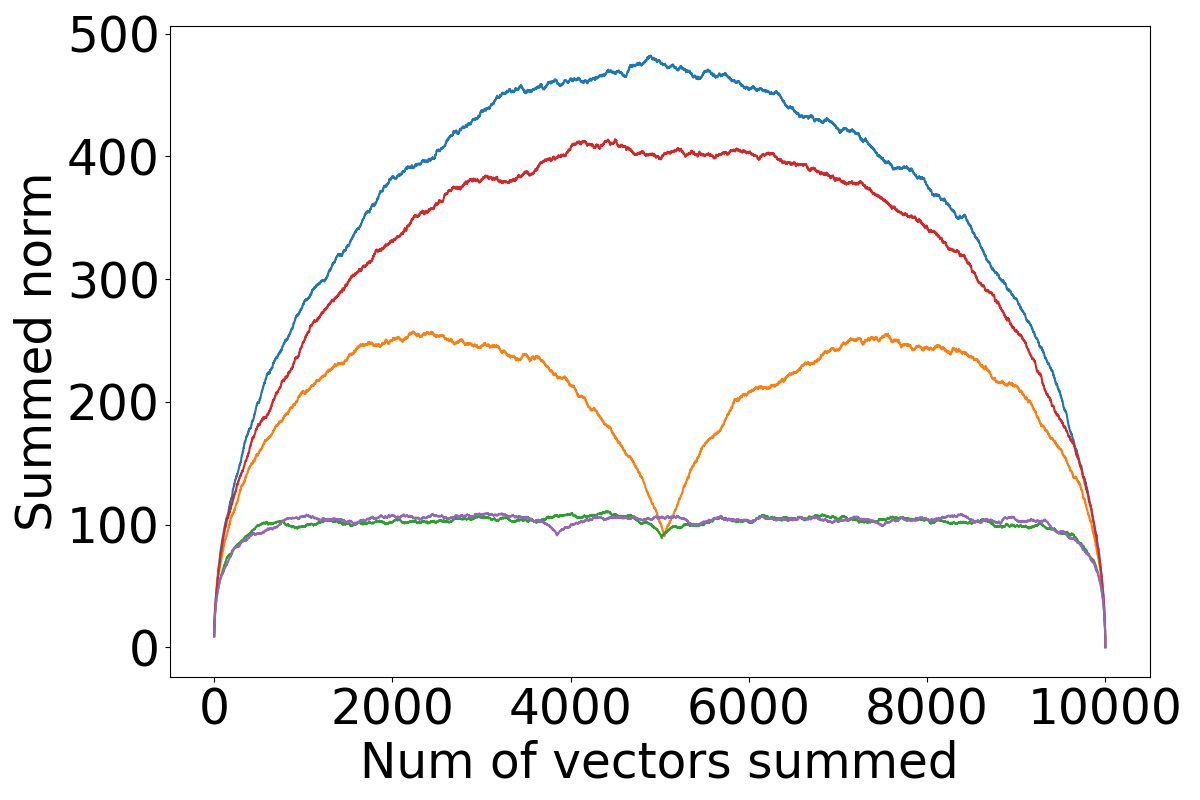}
  }
  \subfigure[$d=10000$, $n=10000$.]{
  \includegraphics[width=0.31\textwidth]{./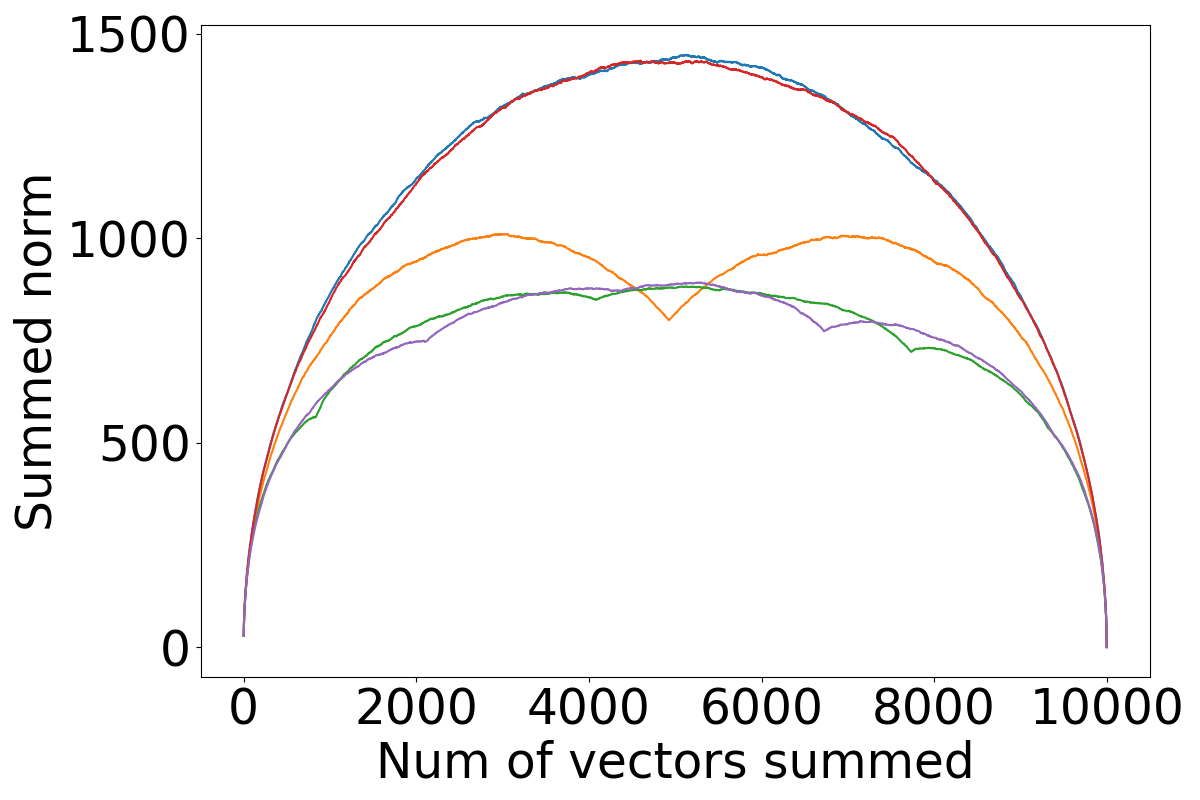}
  }
  \caption{Evaluating the herding bound of Algorithm~\ref{alg:naive_balancing} and Algorithm~\ref{alg:balancing}. The epoch denotes the number of times that recursively call the algorithm.}
  \label{exp:fig:balancing}
\end{figure*}

\textbf{The effect of different balancing algorithms.}
We extend the experiment from Figure~\ref{fig:illustration} and run Algorithm~\ref{alg:naive_balancing} and Algorithm~\ref{alg:balancing} for different epochs. Figure~\ref{exp:fig:balancing} summarizes the results: although the two algorithms differ when single time sorting is used, but they obtain similar results when called repeatedly (10 times). Note that the running of Algorithm~\ref{alg:balancing} requires tuning a hyperparameter $c$ and generate a series of random numbers. To avoid these overhead, in practice we recommend using Algorithm~\ref{alg:naive_balancing}. Another interesting observation is that in terms of $\ell$-2 norm, the naive balancing (Algorithm~\ref{alg:naive_balancing}) outperforms Algorithm~\ref{alg:balancing} in high-dimensional setting (b)(c) in Figure~\ref{exp:fig:balancing} when epoch$=1$.

\section{Theoretical Analysis}
In this section, we provide the detailed proof for all the theorems in the main paper.
Throughout the proof, for simplicity, we denote $\*w_{k+1} = \*w_{k+1}^{(1)} = \*w_{k}^{(n+1)}$ for all the $k\geq 1$. Additionally, we define
the maximum backward deviation within an epoch to be 
\begin{align*}
    \Delta_k = \max_{m=2,\dots,n+1}\norm{\*w_k^{(m)} - \*w_k}_\infty \quad \text{for all} \quad  k = 1,\dots, K.
\end{align*}
For any $k$, we let $\sigma_{k}^{-1}(i) = \braces{t=1,\cdots,n\,|\,\sigma_{k}(t)=i}$ to be the step when $i$-th example is visited.
Unless otherwise specified, we use $\norm{\cdot}$ to denote the $\ell_2$-norm.

\subsection{Details to the greedy statement}
\begin{proof}
\citep{chelidze2010greedy}
We first construct a group of 2-d vectors. Without the loss of generality, we let $n$ divides $2$. And let $n/2$ vectors be $[1, 1]^\top$ and the other $n/2$ vectors be $[4, -2]^\top$. Algorithm~\ref{alg:greedy} will always select $[1, 1]^\top$ in the first $n/2$ steps based on the current sum being $[m, m]^\top$, $\forall m\leq n/2$. We show this with induction. Note that when $t=1$, $[1, 1]^\top$ is selected. Then suppose in the $k$-th selection the current sum is $[k, k]^\top$, the algorithm will again select $[1, 1]^\top$ since $2(k+1)^2 < (k+4)^2 + (k-2)^2$.
This makes the herding objective $\Omega(n)$ with Algorithm~\ref{alg:greedy}.

On the other hand, consider using a random permutation. Let random variable $X_t$ denote the value of herding objective at $t$-th selection. Then we know that $X_t$ is a 2-d martingale. So that from Azuma-Hoeffding Inequality we know $\|X_k\|\leq O(\sqrt{n})$ for all $k$.
\end{proof}

\subsection{Proof to Theorem~\ref{thm:offlineherding}}

\thmofflineherding*

\begin{proof}
From Lemma~\ref{lemma:taylor_thm}, we get
\begin{align*}
    \frac{1}{K}\sum_{k=1}^{K}\norm{ \nabla f(\*w_k) }^2 \leq \frac{2(f(\*w_1) - f^*)}{\alpha nK} + \frac{L_{2,\infty}^2}{K}\sum_{k=1}^{K}\max_t\norm{ \*w_k - \*w_k^{(t)} }_\infty^2.
\end{align*}
On the other hand, from Lemma~\ref{lemma:proof:herding:maximum-dev}, we obtain
\begin{align*}
    \sum_{k=1}^{K}\Delta_k^2 \leq 16\alpha^2n^2\varsigma^2 + 48\alpha^2H^2\varsigma^2K + 48\alpha^2n^2\sum_{k=1}^{K}\norm{\nabla f(\*w_k)}_\infty^2.
\end{align*}
Combining them together gives us,
\begin{align*}
    \frac{1}{K}\sum_{k=1}^{K}\norm{ \nabla f(\*w_k) }^2 \leq & \frac{2(f(\*w_1) - f^*)}{\alpha nK} + \frac{L_{2,\infty}^2}{K} \left( 16\alpha^2n^2\varsigma^2 + 48\alpha^2H^2\varsigma^2K + 48\alpha^2n^2\sum_{k=1}^{K}\norm{\nabla f(\*w_k)}_\infty^2 \right) \\
\leq & \frac{2(f(\*w_1) - f^*)}{\alpha nK} + \frac{16\alpha^2n^2\varsigma^2L_{2,\infty}^2}{K} + 48\alpha^2H^2\varsigma^2L_{2,\infty}^2 + \frac{48\alpha^2n^2L_{2,\infty}^2}{K}\sum_{k=1}^{K}\norm{\nabla f(\*w_k)}_\infty^2.
\end{align*}
Note that for any $\*x\in\mathbb{R}^d$, $\|\*x\|_\infty \leq \|\*x\|_2$.
And so the last term can by bounded by its $\ell_2$-norm. Moving it to the left side of the inequality gives us
\begin{align*}
    \frac{1-48\alpha^2n^2L_{2,\infty}^2}{K}\sum_{k=1}^{K}\norm{ \nabla f(\*w_k) }^2 \leq \frac{2(f(\*w_1) - f^*)}{\alpha nK} + \frac{16\alpha^2n^2\varsigma^2L_{2,\infty}^2}{K} + 48\alpha^2H^2\varsigma^2L_{2,\infty}^2.
\end{align*}
Finally, we choose $\alpha$ so that
\[
    \alpha = \min\left\{ \sqrt[3]{\frac{f(\*w_1) - f^*}{24 n H^2\varsigma^2 L_{2,\infty}^2 K}}, \frac{1}{8n(L+L_{2,\infty})}, \frac{1}{32nL_\infty}, \frac{1}{16HL_{2,\infty}}\right\}
\]
\begin{align*}
    \frac{1}{K}\sum_{k=1}^{K}\norm{ \nabla f(\*w_k) }^2 \leq & \frac{4(f(\*w_1) - f^*)}{\alpha nK} + \frac{32\alpha^2n^2\varsigma^2L_{2,\infty}^2}{K} + 96\alpha^2H^2\varsigma^2L_{2,\infty}^2 \\
\leq & 36 \sqrt[3]{\frac{H^2\varsigma^2 L_{2,\infty}^2 (f(\*w_1) - f^*)^2}{n^2 K^2}}
    +
    \frac{\varsigma^2}{K}
    \\ 
& +
    \frac{32 (f(\*w_1) - f^*) (L + L_{2,\infty} + L_\infty)}{K}
    +
    \frac{64(f(\*w_1) - f^*)HL_{2,\infty}}{nK}.
\end{align*}
Denoting $F = f(\*w_1) - f^*$, we finally obtain
This gives
\[
    \frac{1}{K} \sum_{k=1}^{K} \norm{ \nabla f(\*w_k) }^2
    \le 36 \sqrt[3]{\frac{H^2\varsigma^2 L_{2,\infty}^2 F^2}{n^2 K^2}} + \frac{\varsigma^2}{K}
    +
    \frac{32F(L + L_{2,\infty} + L_\infty)}{K}
    +
    \frac{64FHL_{2,\infty}}{nK}.
\]
This gives us the bound for general smooth-convex case. We proceed to prove the PL case:

From Lemma~\ref{lemma:proof:herding:maximum-dev} we have the following relation:
\begin{align*}
    \Delta_k \leq 2\alpha H \varsigma + (8\alpha nL_\infty + 4\alpha HL_{2,\infty}) \Delta_{k-1} + 2\alpha n \norm{\nabla f(\*w_k)}_\infty, \forall k\geq 2.
\end{align*}
Square on both sides
\begin{align*}
    \Delta_{k}^2 \leq 3\alpha^2(4H \Ltwoinf + 8n \Linf)^2\Delta_{k-1}^2 + 12\alpha^2H^2 \varsigma^2 + 12\alpha^2n^2\norm{ \nabla f(\w_{k}) }^2.
\end{align*}
Summing from $k=1$ to $K-1$,
\begin{align*}
    & \sum_{k=1}^{K-1}\rho^{K-1-k}\Delta_{k}^2\\
= & \rho^{K-2}\Delta_{1}^2 + \sum_{k=2}^{K-1}\rho^{K-1-k}\Delta_{k}^2\\
    \leq & \rho^{K-2}\Delta_{1}^2 + \sum_{k=2}^{K-1}\rho^{K-1-k}\left(3\alpha^2(4H \Ltwoinf + 8n \Linf)^2\Delta_{k-1}^2 + 12\alpha^2H^2 \varsigma^2 + 12\alpha^2n^2\norm{ \nabla f(\w_{k}) }^2 \right)\\
\leq & \rho^{K-2}\Delta_{1}^2 + 3\alpha^2(4H \Ltwoinf + 8n \Linf)^2\sum_{k=2}^{K-1}\rho^{K-1-k}\Delta_{k-1}^2 + 12\alpha^2H^2 \varsigma^2\sum_{i=0}^{\infty}\rho^i \\
    & + 12\alpha^2n^2\sum_{k=2}^{K-1}\rho^{K-1-k}\norm{ \nabla f(\w_{k}) }^2 \\
    \leq & \rho^{K-2}\Delta_{1}^2 + 3\alpha^2\rho^{-1}(4H \Ltwoinf + 8n \Linf)^2\sum_{k=2}^{K-1}\rho^{K-1-(k-1)}\Delta_{k-1}^2 + \frac{12\alpha^2H^2 \varsigma^2}{1-\rho} \\
& + 12\alpha^2n^2\sum_{k=2}^{K-1}\rho^{K-1-k}\norm{ \nabla f(\w_{k}) }^2.
\end{align*}
Recall from Lemma~\ref{lemma:proof:herding:maximum-dev} that \begin{align*}
    \Delta_1^2 \leq 8\alpha^2n^2\norm{ \nabla f(\*w_1)}_\infty^2 + 8\alpha^2n^2\varsigma^2,
\end{align*}
this gives us,
\begin{align*}
    \sum_{k=1}^{K-1}\rho^{K-1-k}\Delta_{k}^2
\leq & \rho^{K-2}(8\alpha^2n^2\norm{ \nabla f(\*w_1)}_\infty^2 + 8\alpha^2n^2\varsigma^2) + 3\alpha^2\rho^{-1}(4H \Ltwoinf + 8n \Linf)^2\sum_{k=1}^{K-1}\rho^{K-1-k}\Delta_{k}^2 \\
    & + \frac{12\alpha^2H^2 \varsigma^2}{1-\rho} + 12\alpha^2n^2\sum_{k=2}^{K-1}\rho^{K-1-k}\norm{ \nabla f(\w_{k}) }^2.
\end{align*}
Using the fact that
\[
     \alpha = \min\left\{ \frac{1}{n\mu}, \frac{1}{48HL_{2,\infty}}, \frac{1}{96n(L+L_{2,\infty}+L_\infty)}\right\}
\]
Solve it, we obtain
\begin{align*}
    \sum_{k=1}^{K-1}\rho^{K-1-k}\Delta_{k}^2
\leq 64\rho^{K}\alpha^2n^2\varsigma^2 + \frac{24\alpha^2H^2 \varsigma^2}{1-\rho} + 24\alpha^2n^2\sum_{k=1}^{K-1}\rho^{K-1-k}\norm{ \nabla f(\w_{k}) }^2.
\end{align*}
First from Lemma~\ref{lemma:taylor_thm:PL} we get
\begin{align*}
    f(\w_{K}) - f^* \leq & \rho^K(f(\*w_1) - f^*) + \frac{\alpha n}{2} \Ltwoinf^2 \sum_{k=1}^{K-1}\rho^{K-1-k}\Delta_k^2 - \frac{\alpha n}{4} \sum_{k=1}^{K-1}\rho^{K-1-k}\normsq{\nabla f(\*w_k)} \\
\leq & \rho^K(f(\*w_1) - f^*) + 32\rho^K\alpha^3n^3L_{2,\infty}^2\varsigma^2 + \frac{24\alpha^2H^2L_{2,\infty}^2\varsigma^2}{\mu},
\end{align*}
where in the last step, we apply the learning rate bound that 
\begin{align*}
    \alpha \leq \frac{1}{96n(L+L_{2,\infty}+L_\infty)} \leq \frac{1}{96nL_{2,\infty}}.
\end{align*}
The RHS of the inequality can be further bounded by \begin{align*}
    & (1-\alpha n\mu/2)^K(f(\*w_1)-f^*+\varsigma^2) + \frac{24\alpha^2H^2L_{2,\infty}^2\varsigma^2}{\mu} \\
\leq & (f(\*w_1)-f^*+\varsigma^2)\exp(-\alpha n\mu K/2) + 24\alpha^2H^2L_{2,\infty}^2\varsigma^2\mu^{-1}.
\end{align*}
Take derivative with respect to $\alpha$ and set it to zero, we obtain
\begin{align*}
    \alpha = \frac{2}{n\mu K}W_0\left(\frac{(f(\*w_1)-f^*+\varsigma^2)n^2\mu^3K^2}{192H^2L_{2,\infty}^2\varsigma^2}\right),
\end{align*}
and
\begin{align*}
    f(\w_{K}) - f^* \leq & \frac{288H^2L_{2,\infty}^2\varsigma^2}{n^2\mu^3K^2} \cdot W_0\left(\frac{(f(\*w_1)-f^*+\varsigma^2)n^2\mu^3K^2}{192H^2L_{2,\infty}^2\varsigma^2}\right)\left[ 1 + W_0\left(\frac{(f(\*w_1)-f^*+\varsigma^2)n^2\mu^3K^2}{192H^2L_{2,\infty}^2\varsigma^2}\right)\right] \\
    = & \tilde{O}\left(\frac{H^2L_{2,\infty}^2\varsigma^2}{\mu^3n^2K^2}\right),
\end{align*}
where $W_0(\cdot)$ denotes the Lambert W function.
That completes the proof.
\end{proof}
\subsection{Proof to Theorem~\ref{thm:onlinedm}}
\thmonlinedm*

\begin{proof}
From Lemma~\ref{lemma:taylor_thm}, we get
\begin{align*}
    \frac{1}{K}\sum_{k=1}^{K}\norm{ \nabla f(\*w_k) }^2 \leq \frac{2(f(\*w_1) - f^*)}{\alpha nK} + \frac{L_{2,\infty}^2}{K}\sum_{k=1}^{K}\max_t\norm{ \*w_k - \*w_k^{(t)} }_\infty^2.
\end{align*}
On the other hand, from Lemma~\ref{lemma:proof:GraB:maximum-dev}, we obtain
\begin{align*}
    \sum_{k=1}^{K}\Delta_k^2 \leq 120\alpha^2n^2\varsigma^2 +  64\alpha^2A^2\varsigma^2K + 48\alpha^2n^2\sum_{k=1}^{K}\norm{ \nabla f(\*w_{k}) }_\infty^2.
\end{align*}
Combining them together gives us,
\begin{align*}
    & \frac{1}{K}\sum_{k=1}^{K}\norm{ \nabla f(\*w_k) }^2 \\
    \leq & \frac{2(f(\*w_1) - f^*)}{\alpha nK} + \frac{L_{2,\infty}^2}{K} \left( 120\alpha^2n^2\varsigma^2 +  64\alpha^2A^2\varsigma^2K + 48\alpha^2n^2\sum_{k=1}^{K}\norm{ \nabla f(\*w_{k}) }_\infty^2 \right) \\
\leq & \frac{2(f(\*w_1) - f^*)}{\alpha nK} + \frac{120\alpha^2n^2\varsigma^2L_{2,\infty}^2}{K} + 64\alpha^2A^2\varsigma^2L_{2,\infty}^2 + \frac{48\alpha^2n^2L_{2,\infty}^2}{K}\sum_{k=1}^{K}\norm{\nabla f(\*w_k)}^2.
\end{align*}
Given
\begin{align*}
    \alpha = \min\left\{ \sqrt[3]{\frac{f(\*w_1) - f^*}{32 n A^2\varsigma^2 L_{2,\infty}^2 K}}, \frac{1}{nL}, \frac{1}{26(n+A)L_{2,\infty}}, \frac{1}{260nL_\infty} \right\},
\end{align*}
we get
\begin{align*}
    \frac{1}{K}\sum_{k=1}^{K}\norm{ \nabla f(\*w_k) }^2 \leq & \frac{4(f(\*w_1) - f^*)}{\alpha nK} + \frac{240\alpha^2n^2\varsigma^2L_{2,\infty}^2}{K} + 128\alpha^2A^2\varsigma^2L_{2,\infty}^2 \\
\leq & 11 \sqrt[3]{\frac{H^2\varsigma^2 L_{2,\infty}^2 (f(\*w_1) - f^*)^2}{n^2 K^2}}
    +
    \frac{\varsigma^2}{K}
    \\ 
& +
    \frac{65 (f(\*w_1) - f^*) (L + L_{2,\infty} + L_\infty)}{K}
    +
    \frac{8(f(\*w_1) - f^*)AL_{2,\infty}}{nK}.
\end{align*}
This gives us the bound for general smooth-convex case. 
We proceed to prove the PL case:
From Lemma~\ref{lemma:dm:PL} we know
\begin{align*}
    \sum_{k=1}^{K-1}\rho^{K-1-k}\Delta_k^2 \leq & 1024\alpha^2\rho^{K}n^2\varsigma^2 + \frac{120\alpha^2A^2\varsigma^2}{1-\rho} + 64\alpha^2n^2\sum_{k=1}^{K-1}\rho^{K-1-k}\norm{ \nabla f(\*w_{k}) }_\infty^2.
\end{align*}
On the other hand, from Lemma~\ref{lemma:taylor_thm:PL} we get
\begin{align*}
    f(\w_{K}) - f^* \leq & \rho^K(f(\*w_1) - f^*) + \frac{\alpha n}{2} \Ltwoinf^2 \sum_{k=1}^{K-1}\rho^{K-1-k}\Delta_k^2 - \frac{\alpha n}{4} \sum_{k=1}^{K-1}\rho^{K-1-k}\normsq{\nabla f(\*w_k)} \\
\leq & \rho^K(f(\*w_1) - f^*) + 512\rho^K\alpha^3n^3L_{2,\infty}^2\varsigma^2 + \frac{120\alpha^2A^2L_{2,\infty}^2\varsigma^2}{\mu},
\end{align*}
where in the last step, we apply the learning rate bound that 
\begin{align*}
    \alpha \leq \frac{1}{52nL_{2,\infty}}.
\end{align*}
The RHS of the inequality can be further bounded by \begin{align*}
    & (1-\alpha n\mu/2)^K(f(\*w_1)-f^*+\varsigma^2) + \frac{120\alpha^2A^2L_{2,\infty}^2\varsigma^2}{\mu} \\
\leq & (f(\*w_1)-f^*+\varsigma^2)\exp(-\alpha n\mu K/2) + 120\alpha^2A^2L_{2,\infty}^2\varsigma^2\mu^{-1}.
\end{align*}
Take derivative with respect to $\alpha$ and set it to zero, we obtain
\begin{align*}
    \alpha = \frac{2}{n\mu K}W_0\left(\frac{(f(\*w_1)-f^*+\varsigma^2)n^2\mu^3K^2}{256A^2L_{2,\infty}^2\varsigma^2}\right),
\end{align*}
and
\begin{align*}
    f(\w_{K}) - f^* \leq & \frac{320A^2L_{2,\infty}^2\varsigma^2}{n^2\mu^3K^2} \cdot W_0\left(\frac{(f(\*w_1)-f^*+\varsigma^2)n^2\mu^3K^2}{256A^2L_{2,\infty}^2\varsigma^2}\right)\left[ 1 + W_0\left(\frac{(f(\*w_1)-f^*+\varsigma^2)n^2\mu^3K^2}{256A^2L_{2,\infty}^2\varsigma^2}\right)\right] \\
    = & \tilde{O}\left(\frac{A^2L_{2,\infty}^2\varsigma^2}{\mu^3n^2K^2}\right),
\end{align*}
where $W_0(\cdot)$ denotes the Lambert W function.
That completes the proof.
\end{proof}

\subsection{Technical Lemmas}
\begin{lemma}
\label{lemma:taylor_thm}
In Algorithm~\ref{alg:herding} and Algorithm~\ref{alg:dm}, if $\alpha nL <1$ holds and Assumption~\ref{assume:Lsmooth} (except PL condition), \ref{assume:grad_error_bound} and \ref{assume:herding} hold, then
\begin{align*}
    \frac{1}{K}\sum_{k=1}^{K}\norm{ \nabla f(\*w_k) }^2 \leq \frac{2(f(\*w_1) - f^*)}{\alpha nK} + \frac{L_{2,\infty}^2}{K}\sum_{k=1}^{K}\max_t\norm{ \*w_k - \*w_k^{(t)} }_\infty^2.
\end{align*}
\end{lemma}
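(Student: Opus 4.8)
The plan is to run the textbook smoothness descent argument one epoch at a time, taking care with the constants so that the factor $2$ in the statement comes out exactly. Since the optimizer update is identical in Algorithm~\ref{alg:herding} and Algorithm~\ref{alg:dm}, one epoch moves the iterate by $\*w_{k+1} - \*w_k = -\alpha\sum_{t=1}^{n}\nabla f(\*w_k^{(t)};\*x_{\sigma_k(t)})$; I would abbreviate $\hat{\*g}_k := \frac1n\sum_{t=1}^{n}\nabla f(\*w_k^{(t)};\*x_{\sigma_k(t)})$, so that $\*w_{k+1}-\*w_k = -\alpha n\hat{\*g}_k$. First I would invoke the $L$-smoothness of the averaged loss $f$ (the last inequality in Assumption~\ref{assume:Lsmooth}) in its descent-lemma form, $f(\*w_{k+1}) \le f(\*w_k) - \alpha n\lin{\nabla f(\*w_k),\hat{\*g}_k} + \frac{L\alpha^2 n^2}{2}\norm{\hat{\*g}_k}^2$.

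The crucial step is to expand the inner product with the polarization identity rather than a lossy Young's inequality: $-\alpha n\lin{\nabla f(\*w_k),\hat{\*g}_k} = -\frac{\alpha n}{2}\norm{\nabla f(\*w_k)}^2 - \frac{\alpha n}{2}\norm{\hat{\*g}_k}^2 + \frac{\alpha n}{2}\norm{\nabla f(\*w_k)-\hat{\*g}_k}^2$. Substituting this into the descent inequality and grouping the $\norm{\hat{\*g}_k}^2$ terms gives $f(\*w_{k+1}) \le f(\*w_k) - \frac{\alpha n}{2}\norm{\nabla f(\*w_k)}^2 - \frac{\alpha n}{2}(1-\alpha nL)\norm{\hat{\*g}_k}^2 + \frac{\alpha n}{2}\norm{\nabla f(\*w_k)-\hat{\*g}_k}^2$, and the hypothesis $\alpha nL<1$ is precisely what forces the middle term to be $\le 0$, so it may be discarded.

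It then remains to control the residual $\norm{\nabla f(\*w_k)-\hat{\*g}_k}$. Because $\sigma_k$ is a permutation, $\frac1n\sum_{t=1}^{n}\nabla f(\*w_k;\*x_{\sigma_k(t)}) = \nabla f(\*w_k)$, hence $\nabla f(\*w_k)-\hat{\*g}_k = \frac1n\sum_{t=1}^{n}\paren{\nabla f(\*w_k;\*x_{\sigma_k(t)}) - \nabla f(\*w_k^{(t)};\*x_{\sigma_k(t)})}$, and the triangle inequality together with the $L_{2,\infty}$-smoothness of each per-example loss (first inequality in Assumption~\ref{assume:Lsmooth}) yields $\norm{\nabla f(\*w_k)-\hat{\*g}_k} \le \frac1n\sum_{t=1}^{n}L_{2,\infty}\norm{\*w_k-\*w_k^{(t)}}_\infty \le L_{2,\infty}\max_t\norm{\*w_k-\*w_k^{(t)}}_\infty$. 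Inserting the square of this bound, summing the per-epoch inequality over $k=1,\dots,K$, telescoping on the left, using $f(\*w_{K+1})\ge f^*$, and dividing through by $\alpha nK/2$ delivers the stated inequality.

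I do not anticipate a real obstacle: the entire argument is only a handful of lines once one recognizes that the polarization identity (not Young's inequality) is what produces the tight constant and that $\alpha nL<1$ is exactly tuned to kill the $\norm{\hat{\*g}_k}^2$ contribution. The one place to be a little careful is the mixed-norm bookkeeping — the deviation $\*w_k-\*w_k^{(t)}$ is measured in $\ell_\infty$ while the gradient residual is an $\ell_2$ quantity, which is why the cross-norm constant $L_{2,\infty}$ rather than $L$ appears in the final bound. The bounded-gradient-error and herding assumptions play no role in this particular step; they are included among the hypotheses only because the lemma is applied downstream in contexts that also need them.
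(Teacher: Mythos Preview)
Your proposal is correct and follows essentially the same route as the paper: smoothness descent on the epoch-level update, the polarization identity to split the inner product, $\alpha nL<1$ to drop the $\norm{\hat{\*g}_k}^2$ term, and the $L_{2,\infty}$-smoothness of each per-example loss to control the residual (the paper uses Jensen on the squared norm where you use the triangle inequality, but the resulting bound is identical). Your observation that Assumptions~\ref{assume:grad_error_bound} and~\ref{assume:herding} are not actually invoked in this step is also accurate.
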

\begin{proof}
Note that in both algorithms, the update can be written as
\begin{align*}
    \*w_{k+1} = \*w_{k} - \alpha\sum_{t=1}^{n}\nabla f(\*w_k^{(t)}; \*x_{\sigma_k(t)}).
\end{align*}
By the Taylor Theorem, for all the $k=1, \cdots, K-1$,
\begin{align*}
    & f(\*w_{k+1}) \\
    \le & f(\*w_k) - \alpha n\left\langle \nabla f(\*w_k), \frac{1}{n}\sum_{t=1}^{n}\nabla f(\*w_k^{(t)}; \*x_{\sigma_k(t)}) \right\rangle + \frac{\alpha^2n^2L}{2} \norm{ \frac{1}{n}\sum_{t=1}^{n}\nabla f(\*w_k^{(t)}; \*x_{\sigma_k(t)}) }^2
    \\
= & f(\*w_k) - \frac{\alpha n}{2}\norm{\nabla f(\*w_k)}^2 - \frac{\alpha n}{2}\norm{\frac{1}{n}\sum_{t=1}^{n}\nabla f(\*w_k^{(t)}; \*x_{\sigma_k(t)})}^2 \\
    & + \frac{\alpha n}{2}\norm{ \nabla f(\*w_k) - \frac{1}{n}\sum_{t=1}^{n}\nabla f(\*w_k^{(t)}; \*x_{\sigma_k(t)}) }^2  + \frac{\alpha^2n^2L}{2} \norm{ \frac{1}{n}\sum_{t=1}^{n}\nabla f(\*w_k^{(t)}; \*x_{\sigma_k(t)}) }^2 \\
\leq & f(\*w_k) - \frac{\alpha n}{2}\norm{\nabla f(\*w_k)}^2 + \frac{\alpha n}{2}\norm{ \nabla f(\*w_k) - \frac{1}{n}\sum_{t=1}^{n}\nabla f(\*w_k^{(t)}; \*x_{\sigma_k(t)}) }^2.
\end{align*}
In the second step, we apply $-\langle \*a, \*b\rangle = -\frac{1}{2}\norm{\*a}^2 -\frac{1}{2}\norm{\*b}^2 + \frac{1}{2}\norm{\*a - \*b}^2, \forall \*a, \*b$.
In the third step, we use the condition that $\alpha nL<1$.
Expanding the last term using Assumption~\ref{assume:Lsmooth}, we get
\begin{align*}
    \norm{ \nabla f(\*w_k) - \frac{1}{n}\sum_{t=1}^{n}\nabla f(\*w_k^{(t)}; \*x_{\sigma_k(t)}) }^2
    = & \norm{ \frac{1}{n}\sum_{t=1}^{n}\nabla f(\*w_k; \*x_{\sigma_k(t)})) - \frac{1}{n}\sum_{t=1}^{n}\nabla f(\*w_k^{(t)}; \*x_{\sigma_k(t)}) }^2 \\
\leq & \frac{1}{n}\sum_{t=1}^{n}\norm{ \nabla f(\*w_k; \*x_{\sigma_k(t)})) - \nabla f(\*w_k^{(t)}; \*x_{\sigma_k(t)}) }^2 \\
    \leq & \frac{L_{2,\infty}^2}{n}\sum_{t=1}^{n}\norm{ \*w_k - \*w_k^{(t)} }_\infty^2 \\
\leq & L_{2,\infty}^2\Delta_k^2.
\end{align*}
In the second step we apply the Jensen Inequality.
Put it back, we obtain
\begin{align*}
    f(\*w_{k+1}) 
    \le f(\*w_k) - \frac{\alpha n}{2} \norm{ \nabla f(\*w_k) }^2 + 
    \frac{\alpha nL_{2,\infty}^2\Delta_k^2}{2}.
\end{align*}
Finally, summing from $k=1$ to $K-1$, we obtain
\begin{align*}
    \frac{1}{K}\sum_{k=1}^{K}\norm{ \nabla f(\*w_k) }^2 \leq \frac{2(f(\*w_1) - f^*)}{\alpha nK} + \frac{L_{2,\infty}^2}{K}\sum_{k=1}^{K}\max_t\norm{ \*w_k - \*w_k^{(t)} }_\infty^2.
\end{align*}
That completes the proof.
\end{proof}

\begin{lemma}
\label{lemma:taylor_thm:PL}
In Algorithm~\ref{alg:herding} and Algorithm~\ref{alg:dm}, if $\alpha nL <1$ holds and Assumption~\ref{assume:Lsmooth} (including PL condition), \ref{assume:grad_error_bound} and \ref{assume:herding} hold, then
\begin{align*}
    f(\w_{K}) - f^* \leq \rho^K(f(\*w_1) - f^*) + \frac{\alpha n}{2} \Ltwoinf^2 \sum_{k=1}^{K-1}\rho^{K-1-k}\Delta_k^2 - \frac{\alpha n}{4} \sum_{k=1}^{K-1}\rho^{K-1-k}\normsq{\nabla f(\*w_k)}.
\end{align*}
\end{lemma}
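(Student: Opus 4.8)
The plan is to reuse, almost verbatim, the per-epoch descent inequality that already appears (before the final summation) inside the proof of Lemma~\ref{lemma:taylor_thm}: under Assumption~\ref{assume:Lsmooth} (everything but the PL clause), Assumption~\ref{assume:grad_error_bound}, Assumption~\ref{assume:herding}, and the standing hypothesis $\alpha n L < 1$, one has for every $k$
\[
    f(\*w_{k+1}) \le f(\*w_k) - \frac{\alpha n}{2}\normsq{\nabla f(\*w_k)} + \frac{\alpha n}{2}\Ltwoinf^2\Delta_k^2 .
\]
Nothing new needs to be proven here: this is exactly the intermediate bound obtained in that proof by Taylor-expanding $f$ along the epoch update $\*w_{k+1} = \*w_k - \alpha\sum_{t=1}^{n}\nabla f(\*w_k^{(t)};\*x_{\sigma_k(t)})$, using $-\langle\*a,\*b\rangle=-\tfrac12\normsq{\*a}-\tfrac12\normsq{\*b}+\tfrac12\normsq{\*a-\*b}$, dropping the negative $\normsq{\tfrac1n\sum_t\cdot}$ term via $\alpha n L<1$, and then applying the cross-norm smoothness estimate $\norm{\nabla f(\*w_k) - \tfrac1n\sum_t\nabla f(\*w_k^{(t)};\*x_{\sigma_k(t)})}^2 \le \Ltwoinf^2\Delta_k^2$.

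The next step is to convert this descent inequality into a contraction with the PL condition. Split $-\tfrac{\alpha n}{2}\normsq{\nabla f(\*w_k)}$ into two equal halves and apply Assumption~\ref{assume:PLcondition}, in the form $\normsq{\nabla f(\*w_k)} \ge 2\mu\big(f(\*w_k)-f^*\big)$, to only one of them, so that with $\rho := 1 - \tfrac{\alpha n\mu}{2}$,
\[
    f(\*w_{k+1}) - f^* \le \rho\big(f(\*w_k) - f^*\big) - \frac{\alpha n}{4}\normsq{\nabla f(\*w_k)} + \frac{\alpha n}{2}\Ltwoinf^2\Delta_k^2 .
\]
It is essential to spend only half of the gradient term on PL and keep the leftover $-\tfrac{\alpha n}{4}\normsq{\nabla f(\*w_k)}$ explicit, because the proofs of Theorem~\ref{thm:offlineherding} and Theorem~\ref{thm:onlinedm} use precisely this negative term to absorb the $\normsq{\nabla f(\*w_k)}$ contributions that appear inside the deviation recursions of Lemma~\ref{lemma:proof:herding:maximum-dev} and Lemma~\ref{lemma:proof:GraB:maximum-dev}. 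One also notes $\rho\in(0,1)$: since $f$ is $L$-smooth and satisfies PL we have $\mu\le L$, hence $\alpha n\mu\le\alpha n L<1$ and $\rho>\tfrac12$.

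Finally I would unroll: subtracting $f^*$ and iterating the displayed recursion over the completed epochs $k=1,\dots,K-1$ — equivalently, weighting the $k$-th inequality by $\rho^{\,K-1-k}$ and telescoping the $f(\*w_k)-f^*$ terms — leaves a bound of exactly the stated shape, namely a geometrically decaying multiple of $f(\*w_1)-f^*$ plus the weighted sums of $\tfrac{\alpha n}{2}\Ltwoinf^2\Delta_k^2$ and $-\tfrac{\alpha n}{4}\normsq{\nabla f(\*w_k)}$ with weights $\rho^{\,K-1-k}$. There is no genuine obstacle: the content is the already-established descent estimate, and the rest is the standard PL-to-linear-rate manipulation; the only care required is (i) consuming exactly a $\tfrac14$ fraction of $\normsq{\nabla f(\*w_k)}$ via PL and retaining the rest, since the downstream theorems rely on it, and (ii) the bookkeeping of the geometric weights when telescoping.
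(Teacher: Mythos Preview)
Your proposal is correct and follows essentially the same route as the paper's proof: it borrows the per-epoch descent inequality $f(\*w_{k+1}) \le f(\*w_k) - \tfrac{\alpha n}{2}\normsq{\nabla f(\*w_k)} + \tfrac{\alpha n}{2}\Ltwoinf^2\Delta_k^2$ from the argument of Lemma~\ref{lemma:taylor_thm}, splits the gradient term in half, applies PL to one half to get a contraction with factor $\rho = 1-\alpha n\mu/2$, and unrolls over $k=1,\dots,K-1$. Your emphasis on retaining the $-\tfrac{\alpha n}{4}\normsq{\nabla f(\*w_k)}$ term for downstream use is exactly the point.
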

\begin{proof}
Since the this lemma is a special case of Lemma~\ref{lemma:taylor_thm}, we can just borrow the derivation there and get for all the $k=1, \cdots, K-1$
\begin{align*}
	f(\w_{k+1}) &\leq f(\wk) + \frac{\alpha n}{2} \Ltwoinf^2 \Delta_k^2 - \frac{\alpha n}{2} \normsq{\nabla f(\wk)} \\
	&= f(\wk) + \frac{\alpha n}{2} \Ltwoinf^2 \Delta_k^2 - \frac{\alpha n}{4} \normsq{\nabla f(\wk)} - \frac{\alpha n}{4} \normsq{\nabla f(\wk)} \\
	&\leq f(\wk) + \frac{\alpha n}{2} \Ltwoinf^2 \Delta_k^2 - \frac{\alpha n\mu}{2} (f(\wk) - \fstar) - \frac{\alpha n}{4} \normsq{\nabla f(\wk)} \tag{PL condition}.
\end{align*}
Define $1-\alpha n\mu/2=\rho$, then
\begin{align*}
    f(\w_{k+1}) - f^* \leq \rho(f(\wk) - f^*) + \frac{\alpha n}{2} \Ltwoinf^2 \Delta_k^2 - \frac{\alpha n}{4} \normsq{\nabla f(\wk)}.
\end{align*}
Recursively apply it from $k=1$ to $K-1$, we obtain
\begin{align*}
    f(\w_{K}) - f^* \leq \rho^K(f(\*w_1) - f^*) + \frac{\alpha n}{2} \Ltwoinf^2 \sum_{k=1}^{K-1}\rho^{K-1-k}\Delta_k^2 - \frac{\alpha n}{4} \sum_{k=1}^{K-1}\rho^{K-1-k}\normsq{\nabla f(\*w_k)}.
\end{align*}
That completes the proof.
\end{proof}

\begin{lemma}
\label{lemma:proof:herding:maximum-dev}
In Algorithm~\ref{alg:herding}, if the learning rate $\alpha$ fulfills
\begin{align*}
    \alpha \leq \min\left\{\frac{1}{32nL_\infty}, \frac{1}{16HL_{2,\infty}}\right\},
\end{align*}
then the following inequalities hold:
\begin{align*}
    \Delta_k \leq 2\alpha H \varsigma + (8\alpha nL_\infty + 4\alpha HL_{2,\infty}) \Delta_{k-1} + 2\alpha n \norm{\nabla f(\*w_k)}_\infty, \forall k\geq 2
\end{align*}
and,
\begin{align*}
    \Delta_1^2 \leq 8\alpha^2n^2\norm{ \nabla f(\*w_1)}_\infty^2 + 8\alpha^2n^2\varsigma^2,
\end{align*}
and finally,
\begin{align*}
    \sum_{k=1}^{K}\Delta_k^2 \leq 16\alpha^2n^2\varsigma^2 + 48\alpha^2H^2\varsigma^2K + 48\alpha^2n^2\sum_{k=1}^{K}\norm{\nabla f(\*w_k)}_\infty^2.
\end{align*}
\end{lemma}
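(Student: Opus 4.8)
The plan is to prove the three statements in sequence, with the per-epoch recursion doing the real work. Fix an epoch $k\ge 2$. Let $\*g_j$ be the gradient of example $\*x_j$ stored during epoch $k-1$ (evaluated at whichever iterate $\*v_j$ was current at the step when $\*x_j$ was visited), and $\bar{\*g}=\frac1n\sum_{j=1}^n\*g_j$; since $\sigma_k$ is the order \texttt{Herding} produced from $\{\*g_j\}$, Assumption~\ref{assume:herding} (applied to the vectors $\*g_j-\bar{\*g}$ rescaled to unit $\ell_2$-norm) gives $\norm{\sum_{t=1}^{m}(\*g_{\sigma_k(t)}-\bar{\*g})}_\infty\le H\max_j\norm{\*g_j-\bar{\*g}}_2$ for every $m$. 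Starting from $\*w_k^{(m)}-\*w_k=-\alpha\sum_{t=1}^{m-1}\nabla f(\*w_k^{(t)};\*x_{\sigma_k(t)})$, I would expand each summand into five blocks: (a) the intra-epoch drift $\nabla f(\*w_k^{(t)};\*x_{\sigma_k(t)})-\nabla f(\*w_k;\*x_{\sigma_k(t)})$; (b) the staleness $\nabla f(\*w_k;\*x_{\sigma_k(t)})-\*g_{\sigma_k(t)}$; (c) the centered stale gradient $\*g_{\sigma_k(t)}-\bar{\*g}$; (d) the mean mismatch $\bar{\*g}-\nabla f(\*w_k)$; and (e) the full gradient $\nabla f(\*w_k)$.

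Each block's prefix sum is then bounded in $\ell_\infty$. Block (a) is at most $nL_\infty\Delta_k$ by $L_\infty$-smoothness, a self-referential $\Delta_k$ that is moved to the left at the end. Blocks (b) and (d) --- the delicate ones --- I would bound \emph{only} via $L_\infty$-smoothness: for (b), $\norm{\*w_k-\*v_{\sigma_k(t)}}_\infty\le 2\Delta_{k-1}$ by splitting through $\*w_{k-1}=\*w_{k-1}^{(1)}$, so the block sums to $\le 2nL_\infty\Delta_{k-1}$; for (d), the identity $\bar{\*g}=\nabla f(\*w_{k-1})+\frac1n\sum_s\bigl(\nabla f(\*w_{k-1}^{(s)};\*x_{\sigma_{k-1}(s)})-\nabla f(\*w_{k-1};\*x_{\sigma_{k-1}(s)})\bigr)$ --- the first term being the full gradient because $\sigma_{k-1}$ is a permutation --- together with averaging the per-example $L_\infty$-smoothness gives $\norm{\bar{\*g}-\nabla f(\*w_k)}_\infty\le 2L_\infty\Delta_{k-1}$, so the block sums to $\le 2nL_\infty\Delta_{k-1}$. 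Block (c) is where Assumption~\ref{assume:herding} pays off: $\norm{\*g_j-\bar{\*g}}_2\le\varsigma+2L_{2,\infty}\Delta_{k-1}$ (the $\varsigma$ from Assumption~\ref{assume:grad_error_bound} on the centered \emph{fresh} gradient at $\*w_{k-1}$, the $L_{2,\infty}\Delta_{k-1}$ from the staleness), whence the block sums to $\le H(\varsigma+2L_{2,\infty}\Delta_{k-1})$. Block (e) gives $\le n\norm{\nabla f(\*w_k)}_\infty$. Collecting, $\Delta_k\le\alpha nL_\infty\Delta_k+\alpha(4nL_\infty+2HL_{2,\infty})\Delta_{k-1}+\alpha H\varsigma+\alpha n\norm{\nabla f(\*w_k)}_\infty$; since $\alpha\le\frac1{32nL_\infty}$ gives $\alpha nL_\infty\le\frac12$, dividing by $1-\alpha nL_\infty$ at most doubles the remaining coefficients and yields exactly the claimed recursion.

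For $k=1$ the order $\sigma_1$ is arbitrary, so blocks (b)--(d) disappear: keeping (a) ($\le nL_\infty\Delta_1$), the centered fresh gradient bounded \emph{crudely} by $n\varsigma$ (triangle inequality and Assumption~\ref{assume:grad_error_bound}, not herding), and (e) ($\le n\norm{\nabla f(\*w_1)}_\infty$), rearranging gives $\Delta_1\le 2\alpha n\varsigma+2\alpha n\norm{\nabla f(\*w_1)}_\infty$, and $(x+y)^2\le 2x^2+2y^2$ yields the stated $\Delta_1^2$ bound. For the summed inequality, square the recursion via $(x+y+z)^2\le 3(x^2+y^2+z^2)$; the step-size conditions $\alpha\le\frac1{32nL_\infty}$ and $\alpha\le\frac1{16HL_{2,\infty}}$ make the coefficient of $\Delta_{k-1}^2$ a constant $\gamma<1$, so $\Delta_k^2\le\gamma\Delta_{k-1}^2+12\alpha^2H^2\varsigma^2+12\alpha^2n^2\norm{\nabla f(\*w_k)}_\infty^2$ for $k\ge 2$. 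Summing over $k=1,\dots,K$, moving $\gamma\sum_{k=1}^{K-1}\Delta_k^2\le\gamma\sum_{k=1}^K\Delta_k^2$ to the left, dividing by $1-\gamma$, and inserting the base case for $\Delta_1^2$ (absorbing $\norm{\nabla f(\*w_1)}_\infty^2$ into the sum over $k$) gives the stated bound, the numerical constants $16,48,48$ falling out once $\gamma$ is fixed.

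The step I expect to be the main obstacle is exactly this $L_\infty$-versus-$L_{2,\infty}$ bookkeeping in the recursion: the staleness block (b) and the mean-mismatch block (d), each summed $\Theta(n)$ times, must be controlled through $L_\infty$-smoothness and the $\ell_\infty$-deviation $\Delta_{k-1}$ alone, so that $L_{2,\infty}$ appears nowhere except inside the herding normalizer $\varsigma+2L_{2,\infty}\Delta_{k-1}$, where it is multiplied by $H$ rather than by $n$. A naive $\ell_2$ bound on (b) or (d) would leave a $\Theta(\alpha nL_{2,\infty})$ coefficient on $\Delta_{k-1}$, which the prescribed step sizes cannot turn into a contraction --- breaking this lemma and, downstream, the $n$-independent leading term of Theorem~\ref{thm:offlineherding}. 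A smaller technical point worth stating explicitly is that Assumption~\ref{assume:herding} is invoked on stale gradients whose norm bound itself involves the not-yet-controlled $\Delta_{k-1}$; this is legitimate because the assumption is scale-free and only the existence of a good permutation is used.
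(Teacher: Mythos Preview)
Your proposal is correct and tracks the paper's argument almost exactly. The paper uses the same add-and-subtract decomposition (it merges your blocks (a) and (b) into a single ``combined drift'' term $\nabla f(\*w_k^{(t)};\*x_{\sigma_k(t)})-\*g_{\sigma_k(t)}$, bounded via $\norm{\*w_k^{(t)}-\*v_{\sigma_k(t)}}_\infty\le\Delta_k+2\Delta_{k-1}$, but the resulting bound is identical to your $(a)+(b)$), the same herding normalizer $\varsigma+2L_{2,\infty}\Delta_{k-1}$ for block (c), and the same $L_\infty$-only treatment of the staleness and mean-mismatch terms that you correctly identify as the crux.

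The only genuine divergence is the final aggregation. The paper unrolls the linear recursion $\Delta_k\le c\,\Delta_{k-1}+\dots$ all the way down to $\Delta_1$, obtaining $\Delta_k\le(1/2)^{k-1}\Delta_1+4\alpha H\varsigma+4\alpha n\norm{\nabla f(\*w_k)}_\infty$, and only then squares and sums; the geometric series $\sum_{k\ge 2}3(1/4)^{k-1}=1$ is what produces the coefficient $2$ on $\Delta_1^2$ and hence the $16\alpha^2n^2\varsigma^2$. Your route---square first to get $\Delta_k^2\le\gamma\Delta_{k-1}^2+\dots$ with $\gamma=3/4$, then sum and divide by $1-\gamma$---is cleaner and avoids a small indexing sloppiness in the paper's unrolling (where the per-step $\norm{\nabla f(\*w_j)}_\infty$ is silently replaced by $\norm{\nabla f(\*w_k)}_\infty$), but it leaves a factor $\frac{1}{1-\gamma}=4$ on $\Delta_1^2$, yielding $32\alpha^2n^2\varsigma^2$ rather than $16$. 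So the claim that ``the numerical constants $16,48,48$ fall out'' is slightly optimistic for your aggregation; the $48$'s match but the first constant doubles. This is immaterial downstream.
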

\begin{proof}
Without the loss of generality, for all the $m\in\{2, \cdots, n+1\}$ and all the $k\in\{2, \cdots, K\}$,
\begin{align*}
    \*w_{k}^{(m)} 
    = & 
    \*w_k - \alpha \sum_{t=1}^{m-1} \nabla f\left(\*w_k^{(t)}; \*x_{\sigma_k(t)}\right) \\ 
    = &
    \*w_k - \alpha \sum_{t=1}^{m-1} \nabla f\left(\*w_{k-1}^{(\sigma_{k-1}^{-1}(\sigma_k(t)))};  \*x_{\sigma_k(t)}\right) \\
    & - 
    \alpha \sum_{t=1}^{m-1} \left( \nabla f\left(\*w_k^{(t)}; \*x_{\sigma_k(t)} \right) - \nabla f\left(\*w_{k-1}^{(\sigma_{k-1}^{-1}(\sigma_k(t)))}; \*x_{\sigma_k(t)}\right) \right).
\end{align*}
Now add and subtract 
\begin{align*}
    \alpha \sum_{t=1}^{m-1} \frac{1}{n} \sum_{s=1}^{n} \nabla f\left(\*w_{k-1}^{(s)}; \*x_{\sigma_{k-1}(s)}\right) = \frac{\alpha (m-1)}{n} \sum_{t=1}^{n} \nabla f\left(\*w_{k-1}^{(t)}; \*x_{\sigma_{k-1}(t)}\right),
\end{align*}
which gives 
\begin{align*}
    \*w_{k}^{(m)} 
    &= \*w_k - \alpha \sum_{t=1}^{m-1} \left( \nabla f\left(\*w_{k-1}^{(\sigma_{k-1}^{-1}(\sigma_k(t)))}; \*x_{\sigma_k(t)}\right) - \frac{1}{n} \sum_{s=1}^{n} \nabla f\left( \*w_{k-1}^{(s)}; \*x_{\sigma_{k-1}(s)} \right)  \right) \\
    &\qquad - \frac{\alpha (m-1)}{n} \sum_{t=1}^{n}\nabla f\left( \*w_{k-1}^{(t)}; \*x_{\sigma_{k-1}(t)}\right) \\
    &\qquad - \alpha \sum_{t=1}^{m-1} \left( \nabla f\left(\*w_k^{(t)}; \*x_{\sigma_k(t)} \right) - \nabla f\left(\*w_{k-1}^{(\sigma_{k-1}^{-1}(\sigma_k(t)))}; \*x_{\sigma_k(t)}\right) \right).
\end{align*}
We further add and subtract 
\begin{align*}
    \frac{\alpha (m-1)}{n} \sum_{t=1}^{n} \nabla f(\*w_k; \*x_{\sigma_{k-1}(t)}) = \alpha (m-1) \nabla f(\*w_k)
\end{align*}
to arrive at 
\begin{align*}
    \*w_{k}^{(m)} 
    &= \*w_k - \alpha \sum_{t=1}^{m-1} \left( \nabla f\left(\*w_{k-1}^{(\sigma_{k-1}^{-1}(\sigma_k(t)))}; \*x_{\sigma_k(t)}\right) - \frac{1}{n} \sum_{s=1}^{n} \nabla f\left( \*w_{k-1}^{(s)}; \*x_{\sigma_{k-1}(s)} \right)  \right) \\
    &\qquad - \alpha (m-1) \nabla f(\*w_k) + \frac{\alpha (m-1)}{n} \sum_{t=1}^{n} \left( \nabla f\left(\*w_k; \*x_{\sigma_{k-1}(t)} \right) - \nabla f\left( \*w_{k-1}^{(t)}; \*x_{\sigma_{k-1}(t)}\right) \right) \\
    &\qquad - \alpha \sum_{t=1}^{m-1} \left( \nabla f\left(\*w_k^{(t)}; \*x_{\sigma_k(t)} \right) - \nabla f\left(\*w_{k-1}^{(\sigma_{k-1}^{-1}(\sigma_k(t)))}; \*x_{\sigma_k(t)}\right) \right).
\end{align*}
We can now re-arrange, take norms on both sides and apply the triangle inequality,
\begin{align}
\label{equ:proof:herding:four_norms}
    \norm{ \*w_k^{(m)} - \*w_k  }_\infty &\leq \alpha \norm{ \sum_{t=1}^{m-1} \left( \nabla f\left( \*w_{k-1}^{(\sigma_{k-1}^{-1}(\sigma_k(t)))}; \*x_{\sigma_k(t)}\right) - \frac{1}{n} \sum_{s=1}^{n} \nabla f\left( \*w_{k-1}^{(s)}; \*x_{\sigma_{k-1}(s)} \right)  \right)  }_\infty \nonumber \\
    &\qquad + \alpha (m-1) \norm{\nabla f(\*w_k)}_\infty \nonumber \\
    &\qquad + \frac{\alpha (m-1)}{n} \norm{ \sum_{t=1}^{n} \left( \nabla f\left(\*w_k; \*x_{\sigma_{k-1}(t)} \right) - \nabla f\left( \*w_{k-1}^{(t)}; \*x_{\sigma_{k-1}(t)}\right) \right) }_\infty \nonumber  \\
    &\qquad + \alpha \norm{ \sum_{t=1}^{m-1} \left( \nabla f\left(\*w_k^{(t)}; \*x_{\sigma_k(t)} \right) - \nabla f\left(\*w_{k-1}^{(\sigma_{k-1}^{-1}(\sigma_k(t)))}; \*x_{\sigma_k(t)}\right) \right) }_\infty.
\end{align}
There are four different terms on the right hand side, we will apply the Assumption~\ref{assume:herding} on the first term, and Assumption~\ref{assume:Lsmooth} on the last two terms. First, for the first term,
\begin{align*}
    & \norm{ \nabla f \left(  \*w_{k-1}^{(\sigma_{k-1}^{-1}(\sigma_k(t)))}; \*x_{\sigma_k(t)} \right) - \frac{1}{n}\sum_{s=1}^{n} \nabla f \left( \*w_{k-1}^{(s)}; \*x_{\sigma_{k-1}(s)} \right) } \\
\leq & \norm{ \nabla f \left(  \*w_{k-1}^{(\sigma_{k-1}^{-1}(\sigma_k(t)))}; \*x_{\sigma_k(t)} \right) - \frac{1}{n}\sum_{s=1}^{n} \nabla f \left( \*w_{k-1}^{(\sigma_{k-1}^{-1}(\sigma_k(t)))}; \*x_{\sigma_{k-1}(s)} \right) } \\
    & + \norm{ \frac{1}{n}\sum_{s=1}^{n} \nabla f \left( \*w_{k-1}^{(\sigma_{k-1}^{-1}(\sigma_k(t)))}; \*x_{\sigma_{k-1}(s)} \right) - \frac{1}{n}\sum_{s=1}^{n} \nabla f \left( \*w_{k-1}^{(s)}; \*x_{\sigma_{k-1}(s)} \right) } \\
\overset{\text{Assume.~\ref{assume:Lsmooth} and \ref{assume:grad_error_bound}}}\leq & \varsigma + \frac{L_{2,\infty}}{n}\sum_{s=1}^{n}\norm{ \*w_{k-1}^{(\sigma_{k-1}^{-1}(\sigma_k(t)))} - \*w_{k-1}^{(s)} }_\infty \\
    \leq & \varsigma + \frac{L_{2,\infty}}{n}\sum_{s=1}^{n} \left( \norm{ \*w_{k-1} - \*w_{k-1}^{(\sigma_{k-1}^{-1}(\sigma_k(t)))} }_\infty + \norm{ \*w_{k-1} - \*w_{k-1}^{(s)} }_\infty \right) \\
    \leq & \varsigma + 2L_{2,\infty}\Delta_{k-1}
\end{align*}
This implies if we denote
\begin{align*}
	\ut \coloneqq \nabla f \left( \w_{k-1}^{\sigma_{k-1}^{-1}(\sigma_k(t) ) }; \x_{\sigma_k(t)} \right)  - \frac{1}{n} \sum_{s=1}^{n} \nabla f(\w_{k-1}^{(s)}; \x_{\sigma_{k-1}(s)})
\end{align*}
We can now use \cref{assume:herding} to obtain a bound on the prefix sum
\begin{align*}
	\norm{ \sum_{t=1}^{m-1} \frac{\ut}{ \varsigma + 2\Ltwoinf \Delta_{k-1} } }_\infty \leq H,
\end{align*}
that is,
\begin{align*}
    & \norm{ \sum_{t=1}^{m-1} \left( \nabla f\left( \*w_{k-1}^{(\sigma_{k-1}^{-1}(\sigma_k(t)))}; \*x_{\sigma_k(t)}\right) - \frac{1}{n} \sum_{s=1}^{n} \nabla f\left( \*w_{k-1}^{(s)}; \*x_{\sigma_{k-1}(s)} \right)  \right)  }_\infty \leq H(\varsigma + 2L_{2,\infty}\Delta_{k-1}).
\end{align*}
Now we have a bound for the first term in Equation~(\ref{equ:proof:herding:four_norms}), we proceed to bound the last two terms where we apply Assumption~\ref{assume:Lsmooth}. We can then rewrite Equation~(\ref{equ:proof:herding:four_norms}) into,
\begin{align*}
    \norm{ \*w_k^{(m)} - \*w_k}_\infty \leq & \alpha H(\varsigma + 2L_{2,\infty}\Delta_{k-1}) + \alpha (m-1) \norm{\nabla f(\*w_k)}_\infty + \frac{\alpha L_{\infty} (m-1)}{n}\sum_{t=1}^{n} \norm{\*w_k - \*w_{k-1}^{(t)}}_\infty \\
        & + \alpha L_{\infty} \sum_{t=1}^{m-1} \norm{\*w_k^{(t)} - \*w_{k-1}^{(\sigma_{k-1}^{-1}(\sigma_k(t)))}}_\infty.
\end{align*}
Furthermore, applying the triangle inequality to the norms in the last two terms, we obtain
\begin{align*}
    \norm{\*w_{k-1}^{(t)} - \*w_k}_\infty = \norm{\*w_{k-1}^{(t)} -\*w_{k-1} + \*w_{k-1}- \*w_{k-1}^{(n+1)}}_\infty \leq 2\Delta_{k-1}
\end{align*}
and similarly,
\begin{align*}
    \norm{\*w_k^{(t)} - \*w_{k-1}^{(\sigma_{k-1}^{-1}(\sigma_k(t)))}}_\infty = \norm{\*w_k^{(t)} - \*w_k + \*w_k - \*w_{k-1} + \*w_{k-1} - \*w_{k-1}^{(\sigma_{k-1}^{-1}(\sigma_k(t)))}}_\infty \leq \Delta_k + 2\Delta_{k-1}.
\end{align*}
This gives us 
\begin{align}
\label{equ:proof:herding:Delta_induction_k_geq_2}
    \norm{ \*w_k^{(m)} - \*w_k  }_\infty \leq &  \alpha H(\varsigma + 2L_{2,\infty}\Delta_{k-1}) + \alpha (m-1) \norm{\nabla f(\*w_k)}_\infty + 2\alpha L_\infty (m-1) \Delta_{k-1} \nonumber \\
    & + \alpha L_\infty (m-1) (2\Delta_{k-1} + \Delta_k) \nonumber \\
    \leq &  \alpha H (\varsigma + 2L_{2,\infty}\Delta_{k-1}) + \alpha (m-1) \norm{\nabla f(\*w_k)}_\infty + \alpha L_\infty (m-1) (4\Delta_{k-1} + \Delta_k).
\end{align}
Note that Equation~(\ref{equ:proof:herding:Delta_induction_k_geq_2}) only holds with $k\in\{2, \cdots, K\}$ and $m\in\{2, \cdots, n+1\}$. We now discuss the boundary cases. Note that the bound of Equation~(\ref{equ:proof:herding:Delta_induction_k_geq_2}) trivially holds with $m=1$ for any $k$ since the left hand side becomes zero. On the other hand, when $k=1$, we have,
\begin{align*}
    \*w_1^{(m)} = & \*w_1 - \alpha \sum_{t=1}^{m-1}\nabla f\left( \*w_1^{(t)}; \*x_{\sigma_1(t)} \right) \\
= & \*w_1 - \alpha \sum_{t=1}^{m-1}\frac{1}{n}\sum_{s=1}^{n}\nabla f\left( \*w_1; \*x_{\sigma_1(s)} \right) + \alpha \sum_{t=1}^{m-1}\nabla f\left( \*w_1^{(t)}; \*x_{\sigma_1(t)} \right) - \alpha \sum_{t=1}^{m-1}\nabla f\left( \*w_1; \*x_{\sigma_1(t)} \right) \\
    & + \alpha \sum_{t=1}^{m-1}\nabla f\left( \*w_1; \*x_{\sigma_1(t)} \right) - \alpha \sum_{t=1}^{m-1}\frac{1}{n}\sum_{s=1}^{n}\nabla f\left( \*w_1; \*x_{\sigma_1(s)} \right),
\end{align*}
take norms and apply the triangle inequality, we obtain
\begin{align}
\label{equ:proof:herding:Delta_induction_k_equal_1}
    \norm{ \*w_1^{(m)} - \*w_1 }_\infty \leq & \alpha\norm{\sum_{t=1}^{m-1}\frac{1}{n}\sum_{s=1}^{n}\nabla f\left( \*w_1; \*x_{\sigma_1(s)} \right)}_\infty + \alpha \norm{ \sum_{t=1}^{m-1} \left( \nabla f\left( \*w_1^{(t)}; \*x_{\sigma_1(t)} \right) - \nabla f\left( \*w_1; \*x_{\sigma_1(s)} \right) \right) }_\infty \nonumber \\
& + \alpha \norm{ \sum_{t=1}^{m-1} \left( \nabla f\left( \*w_1; \*x_{\sigma_1(t)} \right) - \frac{1}{n}\sum_{s=1}^{n} \nabla f\left( \*w_1; \*x_{\sigma_1(s)} \right) \right) }_\infty \nonumber \\
    \leq & \alpha (m-1)\norm{ \nabla f(\*w_1)}_\infty + \alpha (m-1)L_\infty\Delta_1 + \alpha (m-1)\varsigma \nonumber \\
\leq & \alpha n\norm{ \nabla f(\*w_1)}_\infty + \alpha nL_\infty\Delta_1 + \alpha n\varsigma.
\end{align}
Now that we have the bounds for $\Delta_k$, we next will sum them up.
Taking a max over $m$ on both side in Equation~(\ref{equ:proof:herding:Delta_induction_k_geq_2}), this implies for all the $k\geq 2$,
\begin{align*}
    \Delta_k &\leq \alpha H (\varsigma + 2L_{2,\infty}\Delta_{k-1}) + \alpha L_\infty n (4\Delta_{k-1}+\Delta_k) + \alpha n \norm{\nabla f(\*w_k)}_\infty
\end{align*}
as $m-1\leq n$. Considering the fact that $\alpha L_\infty n < 1/2$, we get 
\begin{align*}
    \Delta_k &\leq 2\alpha H \varsigma + (8\alpha nL_\infty + 4\alpha HL_{2,\infty}) \Delta_{k-1} + 2\alpha n \norm{\nabla f(\*w_k)}_\infty.
\end{align*}
This completes the proof of the first inequality in the lemma.
Applying this recursively from any $k\geq 2$ to $2$, this gives 
\begin{align*}
    \Delta_k \leq & (8\alpha nL_\infty + 4\alpha HL_{2,\infty})^{k-1}\Delta_1 + \sum_{i=1}^{\infty}(8\alpha nL_\infty + 4\alpha HL_{2,\infty})^i\left( 2\alpha H \varsigma + 2\alpha n \norm{\nabla f(\*w_k)}_\infty \right).
\end{align*}
Applying the learning rate conditions that $32\alpha n L_\infty\leq 1$ and $16\alpha HL_{2,\infty}\leq 1$, we obtain
\begin{align*}
    \Delta_k \leq & \left( \frac{1}{2} \right)^{k-1}\Delta_1 + 4\alpha H \varsigma + 4\alpha n \norm{\nabla f(\*w_k)}_\infty.
\end{align*}
Square on both sides,
\begin{align*}
    \Delta_k^2 \leq & 3\left( \frac{1}{4} \right)^{k-1}\Delta_1^2 + 48\alpha^2H^2\varsigma^2 + 48\alpha^2n^2\norm{\nabla f(\*w_k)}_\infty^2.
\end{align*}
We can apply the similar trick to Equation~(\ref{equ:proof:herding:Delta_induction_k_equal_1}) and get
\begin{align*}
    \Delta_1^2 \leq 8\alpha^2n^2\norm{ \nabla f(\*w_1)}_\infty^2 + 8\alpha^2n^2\varsigma^2.
\end{align*}
This completes the proof of the second inequality in the lemma.
Summing from $k=1$ to $K$, we will get
\begin{align*}
    \sum_{k=1}^{K}\Delta_k^2 = & \Delta_1^2 + \sum_{k=2}^{K}\Delta_k^2 \\
    = & \Delta_1^2 + 3\Delta_1^2\sum_{k=2}^{K}\left( \frac{1}{4} \right)^{k-1} + 48\alpha^2H^2\varsigma^2(K-1) + 48\alpha^2n^2\sum_{k=2}^{K}\norm{\nabla f(\*w_k)}_\infty^2 \\
\leq & \Delta_1^2 + 3\Delta_1^2\sum_{k=1}^{\infty}\left( \frac{1}{4} \right)^{k} + 48\alpha^2H^2\varsigma^2(K-1) + 48\alpha^2n^2\sum_{k=2}^{K}\norm{\nabla f(\*w_k)}_\infty^2 \\
    \leq & 16\alpha^2n^2\norm{ \nabla f(\*w_1)}_\infty^2 + 16\alpha^2n^2\varsigma^2 + 48\alpha^2H^2\varsigma^2(K-1) + 48\alpha^2n^2\sum_{k=2}^{K}\norm{\nabla f(\*w_k)}_\infty^2 \\
\leq & 16\alpha^2n^2\varsigma^2 + 48\alpha^2H^2\varsigma^2K + 48\alpha^2n^2\sum_{k=1}^{K}\norm{\nabla f(\*w_k)}_\infty^2.
\end{align*}
That completes the third inequality, and we have finished proving all three inequalities.
\end{proof}

\begin{lemma}
\label{lemma:proof:GraB:maximum-dev}
In Algorithm~\ref{alg:dm}, if the learning rate $\alpha$ fulfills
\begin{align*}
    \alpha \leq \min\left\{ \frac{1}{26(n+A)L_{2,\infty}}, \frac{1}{260nL_\infty} \right\},
\end{align*}
then the following inequalities hold:
\begin{align*}
    \sum_{k=1}^{K}\Delta_k^2 \leq 120\alpha^2n^2\varsigma^2 +  64\alpha^2A^2\varsigma^2K + 48\alpha^2n^2\sum_{k=1}^{K}\norm{ \nabla f(\*w_{k}) }_\infty^2.
\end{align*}
\end{lemma}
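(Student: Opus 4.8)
The plan is to follow the template of the proof of Lemma~\ref{lemma:proof:herding:maximum-dev}, but with the single use of the herding bound (Assumption~\ref{assume:herding}) replaced by a two–level argument: one level that controls the balance–then–reorder step through the balancing bound (Assumption~\ref{assume:dm}), and one level that propagates this control across epochs while absorbing the error introduced by the stale centering. Fix $k\ge 2$ and $m\in\{2,\dots,n+1\}$. As in the offline proof I would write $\*w_k^{(m)}=\*w_k-\alpha\sum_{t=1}^{m-1}\nabla f(\*w_k^{(t)};\*x_{\sigma_k(t)})$, replace each summand by its stale analogue $\nabla f(\*w_{k-1}^{(\sigma_{k-1}^{-1}(\sigma_k(t)))};\*x_{\sigma_k(t)})$, and add and subtract both the epoch-$(k-1)$ gradient average — which is exactly the stored stale mean $\*m_k$ — and $\nabla f(\*w_k)$. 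A triangle inequality then produces the same four groups of terms as in the proof of Lemma~\ref{lemma:proof:herding:maximum-dev}: a ``reorder prefix–sum'' term $\alpha\|\sum_{t=1}^{m-1}(\nabla f(\*w_{k-1}^{(\sigma_{k-1}^{-1}(\sigma_k(t)))};\*x_{\sigma_k(t)})-\*m_k)\|_\infty$, the term $\alpha(m-1)\|\nabla f(\*w_k)\|_\infty$, and two smoothness terms that Assumption~\ref{assume:Lsmooth} bounds (exactly as before) by multiples of $\Delta_{k-1}$ and $\Delta_k$. Only the reorder prefix–sum term needs a genuinely new treatment.

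Write $\*g_{k-1,t}=\nabla f(\*w_{k-1}^{(t)};\*x_{\sigma_{k-1}(t)})-\*m_{k-1}$ for the vectors actually passed to \texttt{Balancing}, and $\delta_{k-1}=\*m_{k-1}-\*m_k$ for the stale–mean drift. Since Algorithm~\ref{alg:dm} builds $\sigma_k$ from $\sigma_{k-1}$ by the balance–then–reorder rule of Algorithm~\ref{alg:reorder}, any length-$j$ prefix of $\sigma_k$ equals either the first few $+1$-signed steps of epoch $k-1$ in their original order, or all of those followed by some $-1$-signed steps in reversed order; splitting on $\tfrac{1\pm\epsilon_{k-1,t}}{2}$ as in the proof of Theorem~\ref{thm:herdingbalancing} expresses the reorder prefix sum as (i) one half of an honest prefix sum of the $\*g_{k-1,t}$ along the order $\sigma_{k-1}$, plus (ii) one half of a signed prefix sum of the $\*g_{k-1,t}$, which Assumption~\ref{assume:dm} — applied after rescaling by $\max_t\|\*g_{k-1,t}\|_2$ — bounds by $A\max_t\|\*g_{k-1,t}\|_2$, plus (iii) drift terms of the form $(\text{prefix length})\cdot\delta_{k-1}$ that appear precisely because the $\*g_{k-1,t}$ are only stale–centered and need not sum to zero (Challenge~I). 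Piece (i) is itself a herding–type prefix sum, now of $\sigma_{k-1}$ evaluated on epoch-$(k-1)$ gradients, which is where the recursion across epochs (Challenge~II) enters. I would therefore introduce, with $\*h^{(k)}_j:=\nabla f(\*w_k^{(\sigma_k^{-1}(j))};\*x_j)$ and $\bar{\*h}^{(k)}:=\tfrac1n\sum_j\*h^{(k)}_j=\*m_{k+1}$, the auxiliary quantity $\psi_k:=\max_{1\le\tau\le n}\|\sum_{s=1}^{\tau}(\*h^{(k)}_{\sigma_k(s)}-\bar{\*h}^{(k)})\|_\infty$, and establish a contraction $\psi_k\le \tfrac12\psi_{k-1}+\tfrac{A}{2}\max_t\|\*g_{k-1,t}\|_2+O(n\|\delta_{k-1}\|_\infty)$. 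Here Assumptions~\ref{assume:Lsmooth} and \ref{assume:grad_error_bound} give $\max_t\|\*g_{k-1,t}\|_2\le \varsigma+L_{2,\infty}\Delta_{k-1}+\|\delta_{k-1}\|_\infty$, and the drift is controlled by $\|\delta_{k-1}\|_\infty=\|\bar{\*h}^{(k-2)}-\bar{\*h}^{(k-1)}\|_\infty\le \alpha n L(\|\nabla f(\*w_{k-2})\|_2+L_{2,\infty}\Delta_{k-2})+L_\infty(\Delta_{k-2}+\Delta_{k-1})$, using $\*w_{k-1}=\*w_{k-2}-\alpha n\bar{\*h}^{(k-2)}$ and $L$-smoothness (with $k=1$, where $\*m_1=\*0$, treated as a one-time cost absorbed into the $K$-independent constant, and $\Delta_1$ bounded exactly as in the offline proof). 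Unrolling the $\tfrac12$-contraction bounds $\psi_{k-1}$, hence the reorder prefix–sum term at epoch $k$, by $O(A\varsigma)$ plus a geometrically decaying tail in the $\Delta$'s and $\|\nabla f\|_\infty$'s of earlier epochs.

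Substituting back into the bound on $\|\*w_k^{(m)}-\*w_k\|_\infty$ and maximizing over $m$, I expect a per-epoch recursion of the form $\Delta_k\le c_1\alpha A\varsigma+c_2\alpha n\|\nabla f(\*w_k)\|_\infty+\alpha((n+A)L_{2,\infty}+nL_\infty)(c_3\Delta_k+c_4(\Delta_{k-1}+\Delta_{k-2}))$ for absolute constants $c_i$, together with $\Delta_1^2\le 8\alpha^2n^2\|\nabla f(\*w_1)\|_\infty^2+8\alpha^2n^2\varsigma^2$ as in the offline case. The two stated step-size conditions $\alpha\le 1/(26(n+A)L_{2,\infty})$ and $\alpha\le 1/(260nL_\infty)$ are exactly what is needed to move the $c_3\Delta_k$ term to the left and to make the total coefficient of $\Delta_{k-1}+\Delta_{k-2}$ at most $\tfrac12$, so the resulting linear recursion is summable. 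Squaring, summing over $k=1,\dots,K$, and using $\sum_k\sum_{j\le k}\rho^{\,k-j}a_j\le\tfrac1{1-\rho}\sum_j a_j$ to collapse the geometric tails then yields $\sum_{k=1}^K\Delta_k^2\le 120\alpha^2n^2\varsigma^2+64\alpha^2A^2\varsigma^2K+48\alpha^2n^2\sum_{k=1}^K\|\nabla f(\*w_k)\|_\infty^2$; the larger absolute constants relative to Lemma~\ref{lemma:proof:herding:maximum-dev} simply absorb the one-time drift cost and the additional second-neighbor ($\Delta_{k-2}$) contributions.

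The step I expect to be the main obstacle is coupling the $\psi_k$ recursion with the drift bound: the estimate for $\|\*m_{k-1}-\*m_k\|_\infty$ is itself in terms of $\Delta$'s and $\|\nabla f\|$'s, while the $\Delta_k$-recursion contains $\psi_{k-1}$, so $\Delta_k$, $\psi_k$, and $\|\*m_k-\*m_{k-1}\|_\infty$ form a mutually coupled system; verifying that this system still contracts under the stated step sizes, and that the geometric tails collapse without inflating the $K$-dependent coefficient beyond $64\alpha^2A^2\varsigma^2K$ (so that the $\tilde{O}(1)$ dependence on $A$ is preserved), is the delicate bookkeeping. The remaining ingredients — the iterate expansion, the $\tfrac{1\pm\epsilon}{2}$ split, and the final geometric summation — are routine adaptations of the proof of Lemma~\ref{lemma:proof:herding:maximum-dev}.
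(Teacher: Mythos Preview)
Your proposal is correct and follows essentially the same approach as the paper: both hinge on a $\tfrac12$-contraction of a herding-type prefix sum across epochs via the balance--then--reorder step (Theorem~\ref{thm:herdingbalancing}), with smoothness and stale-mean drift terms bounding the remainder, and both close by squaring and summing. The paper organizes the argument so as to sidestep the ``main obstacle'' you flag: rather than unrolling the $\psi_k$-contraction and carrying geometric tails into the $\Delta_k$-recursion, it (i) centers directly by $\*m_{k-1}$ (the epoch-$(k{-}2)$ average actually used in the balancing) so no separate $\delta_{k-1}$ correction is needed, (ii) isolates the contraction as an abstract lemma on time-varying vectors $\*z_{j,k}$ with parameters $a_k,b_k,c_k$ for the inter-epoch change, the residual sum, and the norm, and (iii) bounds $\sum_{k\ge 3} r_k^2$ and $\sum_{k\ge 3}\Delta_k^2$ in terms of each other as a simple two-variable self-bounding system rather than a three-way coupling.
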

\begin{proof}
With out the loss of generality, for all the $m\in\{2, \cdots, n+1\}$ and all the $k\in\{3, \cdots, K\}$,
\begin{align*}
    \*w_{k}^{(m)} 
    = & 
    \*w_k - \alpha \sum_{t=1}^{m-1} \nabla f\left(\*w_k^{(t)}; \*x_{\sigma_k(t)}\right) \\ 
    = &
    \*w_k - \alpha \sum_{t=1}^{m-1} \nabla f\left(\*w_{k-1}^{(\sigma_{k-1}^{-1}(\sigma_k(t)))};  \*x_{\sigma_k(t)}\right) \\
    & - 
    \alpha \sum_{t=1}^{m-1} \left( \nabla f\left(\*w_k^{(t)}; \*x_{\sigma_k(t)} \right) - \nabla f\left(\*w_{k-1}^{(\sigma_{k-1}^{-1}(\sigma_k(t)))}; \*x_{\sigma_k(t)}\right) \right).
\end{align*}
Now add and subtract 
\begin{align*}
    \alpha\sum_{t=1}^{m-1}\frac{1}{n}\sum_{s=1}^{n}\nabla f \left( \*w_{k-2}^{(s)}; \*x_{\sigma_{k-2}(s)} \right) = \frac{\alpha (m-1)}{n}\sum_{t=1}^{n}\nabla f\left( \*w_{k-2}^{(s)}; \*x_{\sigma_{k-2}(s)} \right),
\end{align*}
which gives 
\begin{align*}
    \*w_{k}^{(m)} 
    &= \*w_k - \alpha \sum_{t=1}^{m-1} \left( \nabla f\left(\*w_{k-1}^{(\sigma_{k-1}^{-1}(\sigma_k(t)))}; \*x_{\sigma_k(t)}\right) - \frac{1}{n} \sum_{s=1}^{n} \nabla f\left( \*w_{k-2}^{(s)}; \*x_{\sigma_{k-2}(s)} \right)  \right) \\
    &\qquad - \frac{\alpha (m-1)}{n} \sum_{t=1}^{n}\nabla f\left( \*w_{k-2}^{(t)}; \*x_{\sigma_{k-2}(t)}\right) \\
    &\qquad - \alpha \sum_{t=1}^{m-1} \left( \nabla f\left(\*w_k^{(t)}; \*x_{\sigma_k(t)} \right) - \nabla f\left(\*w_{k-1}^{(\sigma_{k-1}^{-1}(\sigma_k(t)))}; \*x_{\sigma_k(t)}\right) \right).
\end{align*}
We further add and subtract 
\begin{align*}
    \frac{\alpha (m-1)}{n} \sum_{t=1}^{n} \nabla f(\*w_{k-1}; \*x_{\sigma_{k-2}(t)}) = \alpha (m-1) \nabla f(\*w_{k-1})
\end{align*}
to arrive at 
\begin{align*}
    \*w_{k}^{(m)} 
    &= \*w_k - \alpha \sum_{t=1}^{m-1} \left( \nabla f\left(\*w_{k-1}^{(\sigma_{k-1}^{-1}(\sigma_k(t)))}; \*x_{\sigma_k(t)}\right) - \frac{1}{n} \sum_{s=1}^{n} \nabla f\left( \*w_{k-2}^{(s)}; \*x_{\sigma_{k-2}(s)} \right)  \right) \\
&\qquad - \alpha (m-1) \nabla f(\*w_{k-1}) + \frac{\alpha (m-1)}{n} \sum_{t=1}^{n} \left( \nabla f\left(\*w_{k-1}; \*x_{\sigma_{k-2}(t)} \right) - \nabla f\left( \*w_{k-2}^{(t)}; \*x_{\sigma_{k-2}(t)}\right) \right) \\
    &\qquad - \alpha \sum_{t=1}^{m-1} \left( \nabla f\left(\*w_k^{(t)}; \*x_{\sigma_k(t)} \right) - \nabla f\left(\*w_{k-1}^{(\sigma_{k-1}^{-1}(\sigma_k(t)))}; \*x_{\sigma_k(t)}\right) \right).
\end{align*}
We can now re-arrange, take norms on both sides and apply the triangle inequality,
\begin{align*}
    \norm{ \*w_{k}^{(m)} - \*w_k }_\infty \leq & \alpha\norm{ \sum_{t=1}^{m-1} \left( \nabla f\left(\*w_{k-1}^{(\sigma_{k-1}^{-1}(\sigma_k(t)))}; \*x_{\sigma_k(t)}\right) - \frac{1}{n} \sum_{s=1}^{n} \nabla f\left( \*w_{k-2}^{(s)}; \*x_{\sigma_{k-2}(s)} \right)  \right) }_\infty \\
& + \alpha (m-1)\norm{ \nabla f(\*w_{k-1}) }_\infty \\
& + \frac{\alpha (m-1)}{n}\norm{ \sum_{t=1}^{n} \left( \nabla f\left(\*w_{k-1}; \*x_{\sigma_{k-2}(t)} \right) - \nabla f\left( \*w_{k-2}^{(t)}; \*x_{\sigma_{k-2}(t)}\right) \right) }_\infty \\
    & + \alpha \norm{ \sum_{t=1}^{m-1} \left( \nabla f\left(\*w_k^{(t)}; \*x_{\sigma_k(t)} \right) - \nabla f\left(\*w_{k-1}^{(\sigma_{k-1}^{-1}(\sigma_k(t)))}; \*x_{\sigma_k(t)}\right) \right) }_\infty.
\end{align*}
Similar to the proof of Lemma~\ref{lemma:proof:herding:maximum-dev},
for the last two terms, we simply apply the Assumption~\ref{assume:Lsmooth} to obtain
\begin{align*}
    & \frac{\alpha (m-1)}{n}\norm{ \sum_{t=1}^{n} \left( \nabla f\left(\*w_{k-1}; \*x_{\sigma_{k-2}(t)} \right) - \nabla f\left( \*w_{k-2}^{(t)}; \*x_{\sigma_{k-2}(t)}\right) \right) }_\infty \\
\leq & \frac{\alpha (m-1)}{n}\sum_{t=1}^{n}\norm{ \nabla f\left(\*w_{k-1}; \*x_{\sigma_{k-2}(t)} \right) - \nabla f\left( \*w_{k-2}^{(t)}; \*x_{\sigma_{k-2}(t)} \right) }_\infty \\
    \leq & \frac{\alpha (m-1)L_\infty}{n}\sum_{t=1}^{n}\norm{ \*w_{k-1} - \*w_{k-2}^{(t)} }_\infty \\
\leq & \frac{\alpha (m-1)L_\infty}{n}\sum_{t=1}^{n}\left( \norm{ \*w_{k-1} - \*w_{k-2} }_\infty + \norm{ \*w_{k-2} - \*w_{k-2}^{(t)} }_\infty \right) \\
\leq & \alpha (m-1)L_\infty(2\Delta_{k-2}),
\end{align*}
and
\begin{align*}
    & \alpha \norm{ \sum_{t=1}^{m-1} \left( \nabla f\left(\*w_k^{(t)}; \*x_{\sigma_k(t)} \right) - \nabla f\left(\*w_{k-1}^{(\sigma_{k-1}^{-1}(\sigma_k(t)))}; \*x_{\sigma_k(t)}\right) \right) }_\infty \\
\leq & \alpha L_\infty\sum_{t=1}^{m-1}\norm{ \*w_k^{(t)}-\*w_{k-1}^{(\sigma_{k-1}^{-1}(\sigma_k(t)))} }_\infty \\
    \leq & \alpha L_\infty\sum_{t=1}^{m-1} \left( \norm{ \*w_k^{(t)}-\*w_{k} }_\infty + \norm{ \*w_k-\*w_{k-1} }_\infty + \norm{ \*w_{k-1}-\*w_{k-1}^{(\sigma_{k-1}^{-1}(\sigma_k(t)))} }_\infty \right) \\
\leq & \alpha (m-1)L_{\infty}(\Delta_k + 2\Delta_{k-1}).
\end{align*}
For brevity, we use $r_k$, $k \geq 3$ to denote
\begin{align*}
    r_k = \max_{2\leq m\leq n + 1}\norm{ \sum_{t=1}^{m-1} \left( \nabla f\left(\*w_{k-1}^{(\sigma_{k-1}^{-1}(\sigma_k(t)))}; \*x_{\sigma_k(t)}\right) - \frac{1}{n} \sum_{s=1}^{n} \nabla f\left( \*w_{k-2}^{(s)}; \*x_{\sigma_{k-2}(s)} \right)  \right) }_\infty.
\end{align*}
Then the expression of $\Delta_k$ can be written as
\begin{align*}
    \Delta_k \leq \alpha r_k + \alpha nL_\infty\Delta_k + 2\alpha nL_\infty\Delta_{k-1} + 2\alpha nL_\infty\Delta_{k-2} + \alpha n\norm{ \nabla f(\*w_{k-1}) }_\infty.
\end{align*}
Now square on both sides, it gives us
\begin{align*}
    \Delta_k^2 \leq 5\alpha^2r_k^2 + 5\alpha^2n^2L_\infty^2\Delta_k^2 + 20\alpha^2n^2L_\infty^2\Delta_{k-1}^2 + 20\alpha^2n^2L_\infty^2\Delta_{k-2}^2 + 5\alpha^2n^2\norm{ \nabla f(\*w_{k-1}) }_\infty^2.
\end{align*}
Now summing $k$ from $k=3$ to $K$ on both sides of the inequality, we get
\begin{align*}
& \sum_{k=3}^{K}\Delta_k^2 \\
\leq & 5\alpha^2\sum_{k=3}^{K}r_k^2 + 5\alpha^2n^2L_\infty^2\sum_{k=3}^{K}\Delta_k^2 + 20\alpha^2n^2L_\infty^2\sum_{k=3}^{K}\Delta_{k-1}^2 + 20\alpha^2n^2L_\infty^2\sum_{k=3}^{K}\Delta_{k-2}^2 + 5\alpha^2n^2\sum_{k=3}^{K}\norm{ \nabla f(\*w_{k-1}) }_\infty^2 \\
    \leq & 5\alpha^2\sum_{k=3}^{K}r_k^2 + 45\alpha^2n^2L_\infty^2\sum_{k=3}^{K}\Delta_k^2 + 40\alpha^2n^2L_\infty^2\Delta_{2}^2 + 20\alpha^2n^2L_\infty^2\Delta_{1}^2 + 5\alpha^2n^2\sum_{k=2}^{K-1}\norm{ \nabla f(\*w_{k}) }_\infty^2.
\end{align*}
Now we apply Lemma~\ref{lemma:dm:herding_bound} to replace the first term,
\begin{align*}
    \sum_{k=3}^{K}\Delta_k^2
\leq & 30\alpha^2n^2\varsigma^2 + 30\alpha^2A^2\varsigma^2(K-2) + 5\alpha^2\cdot(792n^2L_{\infty}^2 + 78A^2L_{2,\infty}^2)\sum_{k=3}^{K}\Delta_k^2 \\
    & + 5\alpha^2\cdot(24n^2L_{2,\infty} + 24A^2L_{2,\infty}^2 + 384n^2L_\infty^2)\Delta_1^2 \\
    & + 5\alpha^2\cdot(6n^2L_{2,\infty}^2 + 78A^2L_{2,\infty}^2 + 768n^2L_\infty^2)\Delta_2^2. \\
    & + 45\alpha^2n^2L_\infty^2\sum_{k=3}^{K}\Delta_k^2 + 40\alpha^2n^2L_\infty^2\Delta_{2}^2 + 20\alpha^2n^2L_\infty^2\Delta_{1}^2 + 5\alpha^2n^2\sum_{k=2}^{K-1}\norm{ \nabla f(\*w_{k}) }_\infty^2 \\
= & 30\alpha^2n^2\varsigma^2 + 30\alpha^2A^2\varsigma^2(K-2) + 5\alpha^2\cdot(781n^2L_{\infty}^2 + 78A^2L_{2,\infty}^2)\sum_{k=3}^{K}\Delta_k^2 \\
    & + 5\alpha^2\cdot(24n^2L_{2,\infty} + 24A^2L_{2,\infty}^2 + 388n^2L_\infty^2)\Delta_1^2 \\
    & + 5\alpha^2\cdot(6n^2L_{2,\infty}^2 + 78A^2L_{2,\infty}^2 + 776n^2L_\infty^2)\Delta_2^2. \\
    & + 5\alpha^2n^2\sum_{k=2}^{K-1}\norm{ \nabla f(\*w_{k}) }_\infty^2 \\
\leq & 30\alpha^2n^2\varsigma^2 + 30\alpha^2A^2\varsigma^2(K-2) + \Delta_1^2 + \Delta_2^2 + \frac{1}{2}\sum_{k=3}^{K}\Delta_k^2 + 5\alpha^2n^2\sum_{k=2}^{K-1}\norm{ \nabla f(\*w_{k}) }_\infty^2,
\end{align*}
where in the last step, we apply the learning rate requirement that
\begin{align*}
    \alpha \leq \min\left\{ \frac{1}{26(n+A)L_{2,\infty}}, \frac{1}{260nL_\infty} \right\}.
\end{align*}
Now solve the LHS,
\begin{align*}
    \sum_{k=3}^{K}\Delta_k^2 \leq 60\alpha^2n^2\varsigma^2 + 60\alpha^2A^2\varsigma^2(K-2) + 2\Delta_1^2 + 2\Delta_2^2 + 10\alpha^2n^2\sum_{k=2}^{K-1}\norm{ \nabla f(\*w_{k}) }_\infty^2
\end{align*}
We have now proved $k\geq 3$. We next discuss the $k=1,2$ case,
note that
these cases follow exactly Equation~(\ref{equ:proof:herding:Delta_induction_k_equal_1}), so that we can reapply the $k=1$ case from Lemma~\ref{lemma:proof:herding:maximum-dev} and obtain
\begin{align*}
    \Delta_k^2 \leq 8\alpha^2n^2\norm{ \nabla f(\*w_k)}_\infty^2 + 8\alpha^2n^2\varsigma^2, k=1,2.
\end{align*}
Now we can sum over all the epochs as follows
\begin{align*}
    \sum_{k=1}^{K}\Delta_k^2 = & \sum_{k=1}^{2}\Delta_k^2 + \sum_{k=3}^{K}\Delta_k^2 \\
\leq & 24\alpha^2n^2\sum_{k=1}^{2}\norm{ \nabla f(\*w_k)}_\infty^2 + 48\alpha^2n^2\varsigma^2 + 60\alpha^2n^2\varsigma^2 + 60\alpha^2A^2\varsigma^2(K-2) + 10\alpha^2n^2\sum_{k=2}^{K-1}\norm{ \nabla f(\*w_{k}) }_\infty^2 \\
    \leq & 120\alpha^2n^2\varsigma^2 +  64\alpha^2A^2\varsigma^2K + 48\alpha^2n^2\sum_{k=1}^{K}\norm{ \nabla f(\*w_{k}) }_\infty^2.
\end{align*}
That completes the proof.
\end{proof}

\begin{lemma}
\label{lemma:dm:unit_seq}
Consider a group of time-variant input vectors $\*z_{1,k}, \ldots, \*z_{n,k} \in \R^d$ changing over time $k=3, \cdots, K$ that satisfy the following conditions:
\begin{align*}
    \norm{ \*z_{i,k+1} - \*z_{i,k} }_\infty  & \leq a_k, \forall k,i \\
    \norm{ \sum_{i=1}^n \*z_{i,k} }_\infty & \leq b_k, \forall k \\
    \norm{ \*z_{i,k} }_2 & \leq c_k, \forall k,i.
\end{align*}
Now considering a sequence of ordering of $\{\sigma_k\}_{k=3}^K$ that fulfills condition: for any $k \geq 3$, $\sigma_k$ and $\sigma_{k+1}$ are the input and output of Algorithm~\ref{alg:reorder}, then it holds that ,
\begin{align*}
    \sum_{k=3}^{K}\max_{1\leq t\leq n}\norm{ \sum_{j=1}^{t}\*z_{\sigma_k(j),k} }_\infty^2 \leq 2\max_{1\leq t\leq n}\norm{ \sum_{j=1}^{t}\*z_{\sigma_3(j),3} }_\infty^2 + \sum_{k=3}^{K} \left( Ac_k + 2b_k + 2a_kn \right)^2.
\end{align*}
\end{lemma}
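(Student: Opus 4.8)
The plan is to control the single scalar $h_k := \max_{1\le t\le n}\bigl\|\sum_{j=1}^{t}\*z_{\sigma_k(j),k}\bigr\|_\infty$ --- the herding objective of the $k$-th ordering evaluated against the $k$-th batch of vectors --- and to show it satisfies a contracting recursion $h_{k+1}\le \tfrac12\bigl(h_k + d_k\bigr)$ with $d_k := Ac_k + 2b_k + 2a_k n$, valid for every $k$ with $3\le k\le K-1$. The three hypotheses of the lemma supply exactly the ingredients this needs: after rescaling Assumption~\ref{assume:dm} by $c_k$, the signs $\epsilon_{k,\cdot}$ that drive the reorder $\sigma_k\mapsto\sigma_{k+1}$ obey $\bigl\|\sum_{j\le t}\epsilon_{k,j}\*z_{\sigma_k(j),k}\bigr\|_\infty\le Ac_k$; the bound $b_k$ measures how far $\sum_i\*z_{i,k}$ is from $\*0$; and $a_k$ bounds the coordinatewise drift of each vector from time $k$ to time $k+1$. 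Once the recursion is in hand, the stated inequality follows in a couple of lines.

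The core step is a version of Theorem~\ref{thm:herdingbalancing} that does not assume the vectors sum to zero. Fix $k$, let $P$ denote the number of $+1$ signs among $\epsilon_{k,1},\dots,\epsilon_{k,n}$, and recall from Algorithm~\ref{alg:reorder} that $\sigma_{k+1}$ lists the $+1$-examples in their original $\sigma_k$-order followed by the $-1$-examples in reversed $\sigma_k$-order. For a prefix of $\sigma_{k+1}$ of length $t\le P$ there is a cutoff $\tau$ in the original order for which $\sum_{j\le t}\*z_{\sigma_{k+1}(j),k} = \tfrac12\sum_{l\le\tau}\*z_{\sigma_k(l),k} + \tfrac12\sum_{l\le\tau}\epsilon_{k,l}\*z_{\sigma_k(l),k}$ (since $\tfrac{1+\epsilon_{k,l}}{2}$ equals $1$ on the $+1$-examples and $0$ on the $-1$-examples), so its $\ell_\infty$-norm is at most $\tfrac12 h_k + \tfrac12 Ac_k$. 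For a prefix of length $t>P$ I would instead pass to the complementary suffix, writing the partial sum as $\sum_i\*z_{i,k}$ minus the sum over a $\sigma_k$-prefix of the $-1$-examples; expanding the latter in the same way bounds it by $\tfrac12 h_k + \tfrac12 Ac_k$, hence this prefix sum has norm at most $b_k + \tfrac12 h_k + \tfrac12 Ac_k$. Taking the maximum over $t$ gives $\max_t\bigl\|\sum_{j\le t}\*z_{\sigma_{k+1}(j),k}\bigr\|_\infty \le \tfrac12 h_k + \tfrac12 Ac_k + b_k$. Finally, replacing the time-$k$ vectors by the time-$(k{+}1)$ ones inside every prefix changes each by at most $na_k$ in $\ell_\infty$ (triangle inequality over at most $n$ terms, each moving by at most $a_k$), so $h_{k+1}\le \tfrac12 h_k + \tfrac12 Ac_k + b_k + a_k n = \tfrac12(h_k + d_k)$.

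To finish, I would square the recursion using $(x+y)^2\le 2x^2+2y^2$ to get $h_{k+1}^2\le \tfrac12 h_k^2 + \tfrac12 d_k^2$, sum over $k = 3,\dots,K-1$, and rearrange: with $S := \sum_{k=3}^{K}h_k^2$ the summed inequality reads $S - h_3^2 \le \tfrac12 S + \tfrac12\sum_{k=3}^{K} d_k^2$, i.e.\ $S \le 2h_3^2 + \sum_{k=3}^{K}d_k^2$, which is the claim after substituting $h_3 = \max_t\|\sum_{j\le t}\*z_{\sigma_3(j),3}\|_\infty$ and $d_k = Ac_k+2b_k+2a_kn$.

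I expect the reordering bound in the second paragraph to be the only delicate part. The tempting shortcut --- recenter each $\*z_{i,k}$ by its mean so the vectors sum to zero and then invoke Theorem~\ref{thm:herdingbalancing} as a black box --- is too lossy, since recentering inflates both the herding bound and the balancing bound of the recentered vectors by an extra $b_k$, and undoing it at the end adds yet another $b_k$, leaving $4b_k$ rather than $2b_k$ inside $d_k$. Doing the prefix/complement case split by hand, as sketched, is what keeps the coefficient of $b_k$ equal to $1$ in $\tfrac12(h_k+d_k)$. The remaining work --- identifying, for each prefix of $\sigma_{k+1}$, the matching cutoff index in the $\sigma_k$-ordering in both the $t\le P$ and $t>P$ cases --- is routine but has to be written out carefully.
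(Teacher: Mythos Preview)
Your proposal is correct and follows essentially the same argument as the paper: both define the herding value $h_k$ (the paper calls it $\tilde r_k$), establish the recursion $h_{k+1}\le \tfrac12 h_k + \tfrac12(Ac_k+2b_k+2a_kn)$ via the same prefix/complement case split on the reordered sequence and the same $na_k$ drift bound, then square and telescope to obtain the claimed inequality. The paper presents the drift step before the reorder step rather than after, but the content and the constants are identical.
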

\begin{proof}
For convenience, we use the notation $\tilde{r}_k$ to represent
\begin{align*}
    \tilde{r}_k = \max_{1\leq t\leq n}\norm{ \sum_{j=1}^{t}\*z_{\sigma_k(j),k} }_\infty.
\end{align*}
For any $k\geq 3$, by the Triangle Inequality we have
\begin{align*}
    \max_{1\leq t\leq n}\norm{ \sum_{j=1}^{t}\*z_{\sigma_{k+1}(j),k+1} }_\infty
= & \max_{1\leq t\leq n}\norm{ \sum_{j=1}^{t}\*z_{\sigma_{k+1}(j),k} + \*z_{\sigma_{k+1}(j),k+1} - \*z_{\sigma_{k+1}(j),k}}_\infty \\
    \leq & \max_{1\leq t\leq n}\left[\norm{ \sum_{j=1}^{t}\*z_{\sigma_{k+1}(j),k}}_\infty + \norm{ \sum_{j=1}^{t} \left( \*z_{\sigma_{k+1}(j),k+1} - \*z_{\sigma_{k+1}(j),k} \right)}_\infty \right] \\
\leq & \max_{1\leq t\leq n}\norm{ \sum_{j=1}^{t}\*z_{\sigma_{k+1}(j),k}}_\infty + \max_{1\leq t\leq n}\norm{ \sum_{j=1}^{t} \left( \*z_{\sigma_{k+1}(j),k+1} - \*z_{\sigma_{k+1}(j),k} \right)}_\infty \\
    \leq & \max_{1\leq t\leq n}\norm{ \sum_{j=1}^{t}\*z_{\sigma_{k+1}(j),k} }_\infty + \max_{1\leq t\leq n}\sum_{j=1}^{t}\norm{ \*z_{\sigma_{k+1}(j),k+1} - \*z_{\sigma_{k+1}(j),k} }_\infty \\
\leq & \max_{1\leq t\leq n}\norm{ \sum_{j=1}^{t}\*z_{\sigma_{k+1}(j),k} }_\infty + a_kn.
\end{align*}
We now derive the relation between $\sigma_{k+1}$ and $\sigma_{k}$.
Suppose we run Algorithm~\ref{alg:reorder} and obtain the signs for all the vectors. Denote $\epsilon_{k,j}$ as the sign assigned to $\*z_{j,k}$. Let all the vector indices with positive signs form a set $M^{+}$ and indices with negative signs form a set $M^{-}$. Then we know that for any $1\leq t \leq n$,
\begin{align*}
    \sum_{j=1}^{t}\*z_{\sigma_k(j),k} + \sum_{j=1}^{t}\epsilon_{k,j}\*z_{\sigma_k(j),k} = & 2\cdot \sum_{j\in M^{+}\cap[k]}\*z_{\sigma_k(j),k} \\
    \sum_{j=1}^{t}\*z_{\sigma_k(j),k} - \sum_{j=1}^{t}\epsilon_{k,j}\*z_{\sigma_k(j),k} = & 2\cdot \sum_{j\in M^{-}\cap[k]}\*z_{\sigma_k(j),k}.
\end{align*}
By using the triangle inequality, for any $k$
\begin{align*}
    \norm{ \sum_{j\in M^{+}\cap[k]}\*z_{\sigma_k(j),k} }_\infty \leq & \frac{Ac_k + \tilde{r}_k}{2} \\
    \norm{ \sum_{j\in M^{-}\cap[k]}\*z_{\sigma_k(j),k} }_\infty \leq & \frac{Ac_k + \tilde{r}_k}{2}.
\end{align*}
Given these bounds, now we can consider the permutation of $\sigma_{k+1}$, suppose when $t=t_0$, the partial sum of $\sigma_{k+1}$ reaches it maximum, then if $t_0\leq |M^{+}|$, from the previous bound, we obtain
\begin{align*}
    \norm{ \sum_{j=1}^{t_0}\*z_{\sigma_{k+1}(j),k} }_\infty \leq & \frac{Ac_k + \tilde{r}_k}{2} \leq b_k + \frac{Ac_k + \tilde{r}_k}{2}.
\end{align*}
On the other hand, if the $t_0 > |M^{+}|$, and we obtain
\begin{align*}
    \norm{ \sum_{j=1}^{t_0}\*z_{\sigma_{k+1}(j),k} }_\infty = & \norm{ \sum_{j=1}^{n}\*z_{\sigma_{k+1}(j),k} - \sum_{j=t_0+1}^{n}\*z_{\sigma_{k+1}(j),k} }_\infty \\
\leq & \norm{ \sum_{j=1}^{n}\*z_{\sigma_{k+1}(j),k} }_\infty + \norm{ \sum_{j=t_0+1}^{n}\*z_{\sigma_{k+1}(j),k} }_\infty \\
    \le & b_k + \norm{ \sum_{j\in M^{-}\cap[k]}\*z_{\sigma_k(j),k} }_\infty \\
\leq & b_k + \frac{Ac_k + \tilde{r}_k}{2}.
\end{align*}
And so that,
\begin{align*}
    \tilde{r}_{k+1} = \max_{1\leq t\leq n}\norm{ \sum_{j=1}^{t}\*z_{\sigma_{k+1}(j),k+1} }_\infty \leq & \max_{1\leq t\leq n}\norm{ \sum_{j=1}^{t}\*z_{\sigma_{k+1}(j),k} }_\infty + an \\
\leq & \frac{1}{2}\tilde{r}_k + \frac{1}{2} \left( Ac_k + 2b_k + 2a_kn \right).
\end{align*}
Square on both sides, we get
\begin{align*}
    \tilde{r}_{k + 1}^2 \leq \frac{1}{2}\tilde{r}_k^2 + \frac{1}{2} \left( Ac_k + 2b_k + 2a_kn \right)^2.
\end{align*}
Then we sum from $k=3$ to $K-1$, we get
\begin{align*}
    \tilde{r}_3^2 + \sum_{k=3}^{K-1}\tilde{r}_{k + 1}^2 \leq \tilde{r}_3^2 + \frac{1}{2}\sum_{k=3}^{K-1}\tilde{r}_k^2 + \frac{1}{2} \sum_{k=3}^{K-1} \left( Ac_k + 2b_k + 2a_kn \right)^2,
\end{align*}
which implies
\begin{align*}
    \sum_{k=3}^{K}\tilde{r}_k^2 \leq \tilde{r}_3^2 + \frac{1}{2}\sum_{k=3}^{K}\tilde{r}_k^2 + \frac{1}{2} \sum_{k=3}^{K} \left( Ac_k + 2b_k + 2a_kn \right)^2,
\end{align*}
moving the terms and we finally get
\begin{align*}
    \sum_{k=3}^{K}\tilde{r}_k^2 \leq 2\tilde{r}_3^2 + \sum_{k=3}^{K} \left( Ac_k + 2b_k + 2a_kn \right)^2.
\end{align*}
Replace the $\tilde{r}_k$, we finally get
\begin{align*}
    \sum_{k=3}^{K}\max_{1\leq t\leq n}\norm{ \sum_{j=1}^{t}\*z_{\sigma_k(j),k} }_\infty^2 \leq 2\max_{1\leq t\leq n}\norm{ \sum_{j=1}^{t}\*z_{\sigma_3(j),3} }_\infty^2 + \sum_{k=3}^{K} \left( Ac_k + 2b_k + 2a_kn \right)^2.
\end{align*}
That completes the proof.
\end{proof}

\begin{lemma}
\label{lemma:dm:herding_bound}
In Algorithm~\ref{alg:dm}, for $k\geq 3$, with notation of 
\begin{align*}
    r_k = \max_{1\leq t \leq n}\norm{ \sum_{j=1}^{t}\left( \nabla f\left(\*w_{k-1}^{(\sigma_{k-1}^{-1}(\sigma_k(j)))}; \*x_{\sigma_k(j)}\right) - \frac{1}{n} \sum_{s=0}^{n-1} \nabla f\left( \*w_{k-2}^{(s)}; \*x_{\sigma_{k-2}(s)} \right)  \right) }_\infty,
\end{align*}

we have
\begin{align*}
    \sum_{k=3}^{K}r_k^2 \leq & 6n^2\varsigma^2 + 6A^2\varsigma^2(K-2) + (792n^2L_{\infty}^2 + 78A^2L_{2,\infty}^2)\sum_{k=3}^{K}\Delta_k^2 \\
& + (24n^2L_{2,\infty} + 24A^2L_{2,\infty}^2 + 384n^2L_\infty^2)\Delta_1^2 \\
& + (6n^2L_{2,\infty}^2 + 78A^2L_{2,\infty}^2 + 768n^2L_\infty^2)\Delta_2^2.
\end{align*}

\end{lemma}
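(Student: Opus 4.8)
The plan is to realise Lemma~\ref{lemma:dm:herding_bound} as an application of the abstract Lemma~\ref{lemma:dm:unit_seq} to GraB's centered stale gradients, with the three abstract hypotheses translated into bounds on the backward deviations $\Delta_k$. The first step is to pick, for each example $i\in[n]$ and epoch $k$, the vector $\mathbf{z}_{i,k} \coloneqq \mathbf{g}_{k,\sigma_k^{-1}(i)} = \nabla f(\mathbf{w}_k^{(\sigma_k^{-1}(i))};\mathbf{x}_i) - \mathbf{m}_k$, that is, exactly the centered gradient Algorithm~\ref{alg:dm} forms for example $i$ during epoch $k$. With this identification one checks directly that: (i) $\mathbf{g}_{k,t} = \mathbf{z}_{\sigma_k(t),k}$, and the way $\sigma_{k+1}$ is assembled from $\sigma_k$ in Algorithm~\ref{alg:dm} is precisely Algorithm~\ref{alg:reorder} applied to $\sigma_k$ with the balancing signs of $\{\mathbf{z}_{\sigma_k(j),k}\}_j$, so the hypotheses of Lemma~\ref{lemma:dm:unit_seq} hold; (ii) writing $q_k \coloneqq \max_t \|\sum_{j=1}^t \mathbf{g}_{k,j}\|_\infty$, this equals $\max_t \|\sum_{j=1}^t \mathbf{z}_{\sigma_k(j),k}\|_\infty$, the quantity denoted $\tilde r_k$ in Lemma~\ref{lemma:dm:unit_seq}; and (iii) since $\frac{1}{n}\sum_s \nabla f(\mathbf{w}_{k-2}^{(s)};\mathbf{x}_{\sigma_{k-2}(s)}) = \mathbf{m}_{k-1}$, we have $\nabla f(\mathbf{w}_{k-1}^{(\sigma_{k-1}^{-1}(\sigma_k(j)))};\mathbf{x}_{\sigma_k(j)}) - \mathbf{m}_{k-1} = \mathbf{g}_{k-1,\sigma_{k-1}^{-1}(\sigma_k(j))} = \mathbf{z}_{\sigma_k(j),k-1}$, so the $r_k$ of Lemma~\ref{lemma:dm:herding_bound} equals $\max_t \|\sum_{j=1}^t \mathbf{z}_{\sigma_k(j),k-1}\|_\infty$: the prefix-sum norm of the \emph{previous} epoch's vectors, read out in the \emph{new} order $\sigma_k$.

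Second, I would verify the three conditions of Lemma~\ref{lemma:dm:unit_seq}. Decomposing $\mathbf{z}_{i,k} = \big(\nabla f(\mathbf{w}_k^{(t)};\mathbf{x}_i) - \nabla f(\mathbf{w}_k;\mathbf{x}_i)\big) + \big(\nabla f(\mathbf{w}_k;\mathbf{x}_i) - \nabla f(\mathbf{w}_k)\big) + \big(\nabla f(\mathbf{w}_k) - \mathbf{m}_k\big)$ with $t=\sigma_k^{-1}(i)$ (the same split used in the proof of Lemma~\ref{lemma:proof:herding:maximum-dev}) and bounding the three pieces via Assumption~\ref{assume:Lsmooth}, Assumption~\ref{assume:grad_error_bound}, and Assumption~\ref{assume:Lsmooth} together with $\|\mathbf{w}_k - \mathbf{w}_{k-1}^{(s)}\|_\infty \le 2\Delta_{k-1}$ gives $\|\mathbf{z}_{i,k}\|_2 \le c_k \coloneqq \varsigma + L_{2,\infty}(\Delta_k + 2\Delta_{k-1})$. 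Since $\sum_{i=1}^n \mathbf{z}_{i,k} = n(\mathbf{m}_{k+1} - \mathbf{m}_k)$ and $\mathbf{m}_{k+1} - \mathbf{m}_k = \frac{1}{n}\sum_i \big(\nabla f(\mathbf{w}_k^{(\sigma_k^{-1}(i))};\mathbf{x}_i) - \nabla f(\mathbf{w}_{k-1}^{(\sigma_{k-1}^{-1}(i))};\mathbf{x}_i)\big)$, $L_\infty$-smoothness with $\|\mathbf{w}_k^{(t)} - \mathbf{w}_{k-1}^{(s)}\|_\infty \le \Delta_k + 2\Delta_{k-1}$ gives $\|\sum_i \mathbf{z}_{i,k}\|_\infty \le b_k \coloneqq n L_\infty(\Delta_k + 2\Delta_{k-1})$, and an analogous estimate comparing epoch $k+1$ and epoch $k$ weights (which differ by three one-epoch deviations) yields $\|\mathbf{z}_{i,k+1} - \mathbf{z}_{i,k}\|_\infty \le a_k \coloneqq L_\infty(\Delta_{k+1} + 3\Delta_k + 2\Delta_{k-1})$.

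Third, I would assemble the bound. Lemma~\ref{lemma:dm:unit_seq} (applied over $k=3,\dots,K$) gives $\sum_{k=3}^K q_k^2 \le 2q_3^2 + \sum_{k=3}^K (Ac_k + 2b_k + 2a_kn)^2$; and re-running the intermediate step of its proof — the one bounding the prefix sums of a time-$(k-1)$ vector family along the reordered permutation $\sigma_k$ — shows $r_k \le b_{k-1} + \tfrac12(Ac_{k-1} + q_{k-1})$, hence $r_k^2 \le 2(b_{k-1} + \tfrac12 Ac_{k-1})^2 + \tfrac12 q_{k-1}^2$. Summing the latter over $k=3,\dots,K$, reindexing, and substituting the bound on $\sum q_k^2$ leaves $\sum_{k=3}^K r_k^2$ controlled by $q_2^2$, $q_3^2$, and sums of $(Ac_k + 2b_k + 2a_kn)^2$ and $(b_{k-1} + \tfrac12 Ac_{k-1})^2$; the warm-up terms $q_2^2,q_3^2$ are bounded crudely by $n^2$ times the squared per-example gradient bound (epoch $1$ is not usable here, since $\mathbf{m}_1 = \mathbf{0}$ leaves its gradients uncentered), which is $O(n^2\varsigma^2)$ plus $O(n^2 L_\infty^2)$ times $\Delta_1^2,\Delta_2^2,\Delta_3^2$. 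Plugging in the explicit $a_k,b_k,c_k$ — whose squares contribute $O(A^2\varsigma^2)$ per epoch and $O(A^2 L_{2,\infty}^2 + n^2 L_\infty^2)$ times the $\Delta^2$'s — and regrouping the deviation terms gives the claimed inequality.

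The delicate part is the index bookkeeping in the first step: making sure each symbol refers to the right epoch's weights, stale mean, and permutation so that the reorder relation matches Lemma~\ref{lemma:dm:unit_seq} exactly, and recognising that $r_k$ lags $q_{k-1}$ by precisely one reorder step. A secondary obstacle is constant-chasing: the numerical coefficients in the final bound (including the separate $\Delta_1^2,\Delta_2^2$ terms) must be tracked accurately, because after this lemma is multiplied by $5\alpha^2$ and fed back into Lemma~\ref{lemma:proof:GraB:maximum-dev}, the learning-rate conditions $\alpha \le \min\{1/(26(n+A)L_{2,\infty}),\, 1/(260nL_\infty)\}$ must suffice to absorb the $\sum_{k=3}^K\Delta_k^2$ term onto the left-hand side. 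The per-term smoothness estimates for $a_k,b_k,c_k$ are themselves routine triangle-inequality arguments, essentially borrowed from the proof of Lemma~\ref{lemma:proof:herding:maximum-dev}.
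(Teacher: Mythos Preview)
Your proposal is correct and follows the paper's own strategy: instantiate Lemma~\ref{lemma:dm:unit_seq} with GraB's centered stale gradients, derive the drift, total-sum, and norm bounds $a_k,b_k,c_k$ in terms of the backward deviations $\Delta_k$ via Assumptions~\ref{assume:Lsmooth} and~\ref{assume:grad_error_bound}, bound the initial term crudely by $n$ times the per-example bound, and collect constants.

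One genuine but minor difference is worth recording. The paper defines its time-varying vectors with a one-epoch lag relative to yours, setting $\*z_{j,k}^{\text{paper}} = \nabla f(\*w_{k-1}^{(\sigma_{k-1}^{-1}(j))};\*x_j) - \*m_{k-1}$, so that $r_k$ coincides \emph{directly} with the $\tilde r_k$ of Lemma~\ref{lemma:dm:unit_seq}; this lets the paper skip your extra ``one reorder step'' relating $r_k$ to $q_{k-1}$. The price is a hidden imprecision: the signs that produce $\sigma_{k+1}$ in Algorithm~\ref{alg:dm} come from balancing $\*g_{k,t}$, which in the paper's indexing is $\*z_{\sigma_k(t),k+1}^{\text{paper}}$, not $\*z_{\sigma_k(t),k}^{\text{paper}}$; the balancing bound therefore really gives $Ac_{k+1}$ plus an $O(na_k)$ correction, which is harmless since it is dominated by the $2a_kn$ already present. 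Your choice---taking $\*z_{i,k}$ to be exactly what \texttt{Balancing} sees, then paying for one reorder via $r_k\le b_{k-1}+\tfrac12(Ac_{k-1}+q_{k-1})$---is technically cleaner on this point. The resulting constants differ slightly from the paper's (you pick up $q_2^2,q_3^2$ and an extra $(b_{k-1}+\tfrac12 Ac_{k-1})^2$ sum), but these are of the same order and fit within the slack used in Lemma~\ref{lemma:proof:GraB:maximum-dev}.
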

\begin{proof}
We will apply Lemma~\ref{lemma:dm:unit_seq} to the main derivation. Denote
\begin{align*}
    \*z_{j,k} = & \nabla f\left(\*w_{k-1}^{(\sigma_{k-1}^{-1}(j))}; \*x_{j}\right) - \frac{1}{n} \sum_{s=1}^{n} \nabla f\left( \*w_{k-2}^{(s)}; \*x_{\sigma_{k-2}(s)} \right),
\end{align*}
and so
\begin{align*}
    \*z_{\sigma_k(j),k} = & \nabla f\left(\*w_{k-1}^{(\sigma_{k-1}^{-1}(\sigma_k(j)))}; \*x_{\sigma_k(j)}\right) - \frac{1}{n} \sum_{s=1}^{n} \nabla f\left( \*w_{k-2}^{(s)}; \*x_{\sigma_{k-2}(s)} \right).
\end{align*}
We first need to derive $a_k$, $b_k$, $c_k$ in Lemma~\ref{lemma:dm:unit_seq} in order to apply it, we will repeatedly use Assumption~\ref{assume:Lsmooth}. In the derivation, we will also use the relation that $\norm{\*a}_\infty \leq \norm{\*a}_2$.

First, we derive $a_k$,
\begin{align*}
    & \norm{ \*z_{j,k+1} - \*z_{j,k} }_\infty \\
\leq & \norm{ \nabla f\left(\*w_{k}^{(\sigma_{k}^{-1}(j))}; \*x_{j}\right) - \nabla f\left(\*w_{k-1}^{(\sigma_{k-1}^{-1}(j))}; \*x_{j}\right) }_\infty \\
& + \norm{ \frac{1}{n}\sum_{s=1}^{n}\nabla f\left( \*w_{k-1}^{(s)}; \*x_{\sigma_{k-1}(s)} \right) - \frac{1}{n}\sum_{s=1}^{n}\nabla f\left( \*w_{k-2}^{(s)}; \*x_{\sigma_{k-2}(s)} \right) }_\infty \\
    = & \norm{ \nabla f\left(\*w_{k}^{(\sigma_{k}^{-1}(j))}; \*x_{j}\right) - \nabla f\left(\*w_{k-1}^{(\sigma_{k-1}^{-1}(j))}; \*x_{j}\right) }_\infty \\
    & + \norm{ \frac{1}{n}\sum_{s=1}^{n}\nabla f\left( \*w_{k-1}^{\sigma_{k-1}^{-1}(s)}; \*x_{s} \right) - \frac{1}{n}\sum_{s=1}^{n}\nabla f\left( \*w_{k-2}^{\sigma_{k-2}^{-1}(s)}; \*x_{s} \right) }_\infty \\
\leq & L_{\infty}\norm{ \*w_{k}^{(\sigma_{k}^{-1}(j))} - \*w_{k-1}^{(\sigma_{k-1}^{-1}(j))} }_\infty + \frac{L_{\infty}}{n}\sum_{s=1}^{n}\norm{ \*w_{k-1}^{\sigma_{k-1}^{-1}(s)}- \*w_{k-2}^{\sigma_{k-2}^{-1}(s)} }_\infty \\
    \leq & L_{\infty} \left( \norm{ \*w_{k}^{(\sigma_{k}^{-1}(j))} - \*w_{k} }_\infty + \norm{ \*w_{k} - \*w_{k-1} }_\infty + \norm{ \*w_{k-1} - \*w_{k-1}^{(\sigma_{k-1}^{-1}(j))} }_\infty \right) \\
& + \frac{L_{\infty}}{n}\sum_{s=1}^{n} \left( \norm{ \*w_{k-1}^{\sigma_{k-1}^{-1}(s)}- \*w_{k-1} }_\infty + \norm{ \*w_{k-1}- \*w_{k-2} }_\infty + \norm{ \*w_{k-2}- \*w_{k-2}^{\sigma_{k-2}^{-1}(s)} }_\infty \right) \\
\leq & L_{\infty}\left( \Delta_{k} + 2\Delta_{k-1}\right) + L_{\infty}\left( \Delta_{k-1} + 2\Delta_{k-2}\right) \\
    = & L_{\infty}\left( \Delta_{k} + 3\Delta_{k-1} + 2\Delta_{k-2}\right).
\end{align*}
We proceed to derive $b_k$,
\begin{align*}
    \norm{ \sum_{j=1}^{n}\*z_{j,k} }_\infty = & \norm{ \sum_{j=1}^{n} \left( \nabla f\left(\*w_{k-1}^{(\sigma_{k-1}^{-1}(j))}; \*x_{j}\right) - \frac{1}{n} \sum_{s=1}^{n} \nabla f\left( \*w_{k-2}^{\sigma_{k-2}^{-1}(s)}; \*x_{s} \right) \right) }_\infty \\
\leq & \norm{ \sum_{j=1}^{n} \left( \nabla f\left(\*w_{k-1}^{(\sigma_{k-1}^{-1}(j))}; \*x_{j}\right) - \frac{1}{n} \sum_{s=1}^{n} \nabla f\left( \*w_{k-1}^{\sigma_{k-1}^{-1}(s)}; \*x_{s} \right) \right) }_\infty \\
    & + \norm{ \sum_{j=1}^{n} \left( \frac{1}{n} \sum_{s=1}^{n} \nabla f\left( \*w_{k-1}^{\sigma_{k-1}^{-1}(s)}; \*x_{s} \right) - \frac{1}{n} \sum_{s=1}^{n} \nabla f\left( \*w_{k-2}^{\sigma_{k-2}^{-1}(s)}; \*x_{s} \right) \right) }_\infty \\
\leq & 0 + \sum_{j=1}^{n}\norm{ \frac{1}{n} \sum_{s=1}^{n} \nabla f\left( \*w_{k-1}^{\sigma_{k-1}^{-1}(s)}; \*x_{s} \right) - \frac{1}{n} \sum_{s=1}^{n} \nabla f\left( \*w_{k-2}^{\sigma_{k-2}^{-1}(s)}; \*x_{s} \right) }_\infty \\
    \leq & \sum_{j=1}^{n}\frac{1}{n} \sum_{s=1}^{n}L_{\infty}\norm{ \*w_{k-1}^{\sigma_{k-1}^{-1}(s)} -  \*w_{k-2}^{\sigma_{k-2}^{-1}(s)} }_\infty \\
\leq & \sum_{j=1}^{n}\frac{1}{n} \sum_{s=1}^{n}L_{\infty} \left( \norm{ \*w_{k-1}^{\sigma_{k-1}^{-1}(s)} -  \*w_{k-1} }_\infty + \norm{ \*w_{k-1} -  \*w_{k-2} }_\infty + \norm{ \*w_{k-2} -  \*w_{k-2}^{\sigma_{k-2}^{-1}(s)} }_\infty \right) \\
\leq & nL_{\infty}(\Delta_{k-1} + 2\Delta_{k-2}).
\end{align*}
And finally for $c_k$,
\begin{align*}
    \norm{ \*z_{j,k} }_2 = & \norm{ \nabla f\left(\*w_{k-1}^{(\sigma_{k-1}^{-1}(j))}; \*x_{j}\right) - \frac{1}{n} \sum_{s=1}^{n} \nabla f\left( \*w_{k-2}^{\sigma_{k-2}^{-1}(s)}; \*x_{s} \right) }_2 \\
\leq & \norm{ \nabla f\left(\*w_{k-1}^{(\sigma_{k-1}^{-1}(j))}; \*x_{j}\right) - \frac{1}{n} \sum_{s=1}^{n} \nabla f\left( \*w_{k-1}^{\sigma_{k-1}^{-1}(j)}; \*x_{s} \right) }_2 \\
    & + \norm{ \frac{1}{n} \sum_{s=1}^{n} \nabla f\left( \*w_{k-1}^{\sigma_{k-1}^{-1}(j)}; \*x_{s} \right) - \frac{1}{n} \sum_{s=1}^{n} \nabla f\left( \*w_{k-1}; \*x_{s} \right) }_2 \\
    & + \norm{ \frac{1}{n}\sum_{s=1}^{n}\nabla f\left(\*w_{k-1}; \*x_{s}\right) - \frac{1}{n} \sum_{s=1}^{n} \nabla f\left( \*w_{k-1}^{\sigma_{k-1}^{-1}(s)}; \*x_{s} \right) }_2 \\
    & + \norm{ \frac{1}{n} \sum_{s=1}^{n} \nabla f\left( \*w_{k-1}^{\sigma_{k-1}^{-1}(s)}; \*x_{s} \right) - \frac{1}{n} \sum_{s=1}^{n} \nabla f\left( \*w_{k-2}^{\sigma_{k-2}^{-1}(s)}; \*x_{s} \right) }_2 \\
\leq & \varsigma + 2L_{2,\infty}\Delta_{k-1} +  L_{2,\infty}\left(\norm{\*w_{k-1}^{\sigma_{k-1}^{-1}(s)} - \*w_{k-1}}_\infty + \norm{ \*w_{k-1} - \*w_{k-2} }_\infty + \norm{ \*w_{k-2} - \*w_{k-2}^{\sigma_{k-2}^{-1}(s)} }_\infty \right) \\
\leq & \varsigma + L_{2,\infty}(3\Delta_{k-1} + 2\Delta_{k-2}).
\end{align*}
Now we have the $a_k$, $b_k$ and $c_k$ ready, we can now apply Lemma~\ref{lemma:dm:unit_seq}, and get
\begin{align*}
    \sum_{k=3}^{K}r_k^2 \leq 2r_3^2 + \sum_{k=3}^{K}\left[ A\varsigma + 2nL_\infty\Delta_k + (3AL_{2,\infty} + 8nL_\infty)\Delta_{k-1} + (2AL_{2,\infty} + 8nL_{\infty})\Delta_{k-2} \right]^2.
\end{align*}
Finally, we look at $r_3$, note that
\begin{align*}
    r_3 = & \max_{1\leq t \leq n}\norm{ \sum_{j=1}^{t}\left( \nabla f\left(\*w_{2}^{(\sigma_{2}^{-1}(\sigma_3(j)))}; \*x_{\sigma_3(j)}\right) - \frac{1}{n} \sum_{s=0}^{n-1} \nabla f\left( \*w_{1}^{(s)}; \*x_{\sigma_{1}(s)} \right)  \right) }_\infty \\
\leq & \sum_{j=1}^{n}\norm{ \nabla f\left(\*w_{2}^{(\sigma_{2}^{-1}(\sigma_3(j)))}; \*x_{\sigma_3(j)}\right) - \frac{1}{n} \sum_{s=0}^{n-1} \nabla f\left( \*w_{1}^{(s)}; \*x_{\sigma_{1}(s)} \right) }_\infty \\
    \leq & c_3n \\
\leq & n\varsigma + nL_{2,\infty}(3\Delta_{2} + 2\Delta_{1}).
\end{align*}
Square on both sides, we can get
\begin{align*}
    r_3^2 \leq 3n^2\varsigma^2 + 27n^2L_{2,\infty}^2\Delta_2^2 + 12n^2L_{2,\infty}^2\Delta_1^2.
\end{align*}
Push it back we can finally obtain the bound as
\begin{align*}
    \sum_{k=3}^{K}r_k^2 \leq & 6n^2\varsigma^2 + 6n^2L_{2,\infty}^2\Delta_2^2 + 24n^2L_{2,\infty}\Delta_1^2 \\
& + \sum_{k=3}^{K}\left[ A\varsigma + 2nL_\infty\Delta_k + (3AL_{2,\infty} + 8nL_\infty)\Delta_{k-1} + (2AL_{2,\infty} + 8nL_{\infty})\Delta_{k-2} \right]^2 \\
    \leq & 6n^2\varsigma^2 + 6n^2L_{2,\infty}^2\Delta_2^2 + 24n^2L_{2,\infty}^2\Delta_1^2 \\
& + 6A^2\varsigma^2(K-2) + 24n^2L_{\infty}^2\sum_{k=3}^{K}\Delta_k^2 + 54A^2L_{2,\infty}^2\sum_{k=3}^{K}\Delta_{k-1}^2 + 384n^2L_\infty^2\sum_{k=3}^{K}\Delta_{k-1}^2 \\
& + 24A^2L_{2,\infty}^2\sum_{k=3}^{K}\Delta_{k-2}^2 + 384n^2L_\infty^2\sum_{k=3}^{K}\Delta_{k-2}^2 \\
    \leq & 6n^2\varsigma^2 + 6A^2\varsigma^2(K-2) + (792n^2L_{\infty}^2 + 78A^2L_{2,\infty}^2)\sum_{k=3}^{K}\Delta_k^2 \\
& + (24n^2L_{2,\infty} + 24A^2L_{2,\infty}^2 + 384n^2L_\infty^2)\Delta_1^2 \\
& + (6n^2L_{2,\infty}^2 + 78A^2L_{2,\infty}^2 + 768n^2L_\infty^2)\Delta_2^2.
\end{align*}
That completes the proof.
\end{proof}

\begin{lemma}
\label{lemma:dm:PL}
In Algorithm~\ref{alg:dm}, under the PL condition, it holds that
\begin{align*}
    \sum_{k=1}^{K-1}\rho^{K-1-k}\Delta_k^2 \leq & 960\alpha^2\rho^{K}n^2\varsigma^2 + \frac{120\alpha^2A^2\varsigma^2}{1-\rho} + 64\alpha^2n^2\sum_{k=1}^{K-1}\rho^{K-1-k}\norm{ \nabla f(\*w_{k}) }_\infty^2.
\end{align*}
\end{lemma}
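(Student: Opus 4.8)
The plan is to mirror the proof of Lemma~\ref{lemma:proof:GraB:maximum-dev} but carry the discount weights $\rho^{K-1-k}$ through every summation. I would start from the per-epoch estimates already derived there: for $k\ge 3$,
\[
\Delta_k^2 \le 5\alpha^2 r_k^2 + 5\alpha^2 n^2 L_\infty^2\Delta_k^2 + 20\alpha^2 n^2 L_\infty^2\Delta_{k-1}^2 + 20\alpha^2 n^2 L_\infty^2\Delta_{k-2}^2 + 5\alpha^2 n^2\norm{\nabla f(\*w_{k-1})}_\infty^2,
\]
with $r_k$ as in Lemma~\ref{lemma:dm:herding_bound}, and for $k\in\{1,2\}$, $\Delta_k^2 \le 8\alpha^2 n^2\norm{\nabla f(\*w_k)}_\infty^2 + 8\alpha^2 n^2\varsigma^2$. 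The one new structural ingredient is the bound $\rho>1/2$: since the PL constant satisfies $\mu\le L\le L_{2,\infty}$ and the PL-case step size obeys $\alpha n L_{2,\infty}\le 1/52$, we get $\alpha n\mu\le 1/52$, hence $\rho = 1-\alpha n\mu/2 > 1/2$ and $\rho^{-1}<2$. This is exactly what makes the discounted geometric series converge.

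Next I would establish a discounted analogue of Lemma~\ref{lemma:dm:herding_bound}. The recursion underlying that lemma (via Lemma~\ref{lemma:dm:unit_seq}) is $r_{k+1}^2 \le \tfrac12 r_k^2 + \tfrac12 g_k^2$ for $k\ge3$, where $g_k = A\varsigma + 2nL_\infty\Delta_k + (3AL_{2,\infty}+8nL_\infty)\Delta_{k-1} + (2AL_{2,\infty}+8nL_\infty)\Delta_{k-2}$. Multiplying by $\rho^{K-2-k}$, summing $k$ from $3$ to $K-2$, re-indexing, and adding the $\rho^{K-4}r_3^2$ boundary term, the quantity $\sum_{k=3}^{K-1}\rho^{K-1-k}r_k^2$ reappears on the right with factor $\rho^{-1}/2<1$ and can be solved for, giving
\[
\sum_{k=3}^{K-1}\rho^{K-1-k}r_k^2 \le 3\rho^{K-4}r_3^2 + 2\sum_{k=4}^{K-1}\rho^{K-1-k}g_{k-1}^2 .
\]
Applying $(a+b+c+d)^2\le4(a^2+b^2+c^2+d^2)$ to $g_{k-1}^2$, the bare $4A^2\varsigma^2$ term sums to $4A^2\varsigma^2/(1-\rho)$ (producing the second term of the lemma), the $\Delta^2$ terms — after index shifts picking up at most a $\rho^{-3}$ — are bounded by a small constant times $\sum_{j=1}^{K-1}\rho^{K-1-j}\Delta_j^2$, and the boundary term uses $r_3^2 \le 3n^2\varsigma^2 + 27n^2L_{2,\infty}^2\Delta_2^2 + 12n^2L_{2,\infty}^2\Delta_1^2$ together with $\rho^{K-4}\le 16\rho^K$.

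Then I would plug this into the $\Delta_k^2$ recursion, multiply by $\rho^{K-1-k}$ and sum from $k=3$ to $K-1$. The $5\alpha^2 n^2L_\infty^2\Delta_k^2$ term goes to the left ($5\alpha^2n^2L_\infty^2$ is tiny); the $\Delta_{k-1}^2,\Delta_{k-2}^2$ sums, together with the $\Delta^2$ terms inherited from the discounted $r_k$ bound, all re-index to at most $\rho^{-3}\sum_{j=1}^{K-1}\rho^{K-1-j}\Delta_j^2$, and the learning-rate conditions (a factor of two tighter than in Lemma~\ref{lemma:proof:GraB:maximum-dev}) keep their aggregate prefactor below $1/2$, so they are absorbed into the left; and $\sum_{k=3}^{K-1}\rho^{K-1-k}\norm{\nabla f(\*w_{k-1})}_\infty^2 \le \rho^{-1}\sum_{j=1}^{K-1}\rho^{K-1-j}\norm{\nabla f(\*w_j)}_\infty^2 \le 2\sum_{j}\rho^{K-1-j}\norm{\nabla f(\*w_j)}_\infty^2$. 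Finally I add back the $k=1,2$ contributions, $\rho^{K-2}\Delta_1^2 + \rho^{K-3}\Delta_2^2 \le 8\rho^K(\Delta_1^2+\Delta_2^2)$, and bound each $\Delta_k^2$ ($k=1,2$) by $8\alpha^2 n^2\varsigma^2 + 8\alpha^2 n^2\norm{\nabla f(\*w_k)}_\infty^2$, folding the gradient pieces into the $\norm{\nabla f}_\infty^2$ sum. Collecting the $\rho^K n^2\varsigma^2$ terms (all of which arise from the finitely many boundary epochs $k=1,2,3$, since for $k\ge3$ every $\varsigma$ in $g_k,c_k$ enters scaled by $A$ rather than $n$), the $A^2\varsigma^2/(1-\rho)$ term, and the gradient sum gives the claimed inequality.

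The main obstacle is the constant bookkeeping: every index shift inside a discounted sum contributes a factor $\rho^{-1}$ (up to $\rho^{-4}$ at the boundary), and one must verify that after all shifts the total coefficient multiplying $\sum_j\rho^{K-1-j}\Delta_j^2$ on the right stays strictly below $1$ while the residual $\varsigma^2$-constants stay within the stated $960$ and $120$ — this is precisely why the PL-case step size is taken a factor of two smaller than in Lemma~\ref{lemma:proof:GraB:maximum-dev}. Beyond that accounting, the only non-mechanical point is the strict bound $\rho>1/2$, without which neither the self-referential $r_k$ recursion nor the $\Delta^2$ absorptions would close.
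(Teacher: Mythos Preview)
Your proposal is correct and follows essentially the same route as the paper: weight the per-epoch recursions for $r_k^2$ and $\Delta_k^2$ by $\rho^{K-1-k}$, sum, use the lower bound on $\rho$ (the paper phrases it as $2\rho-1=1-\alpha n\mu\ge\tfrac12$, your $\alpha n\mu\le 1/52$ gives the same) to solve the self-referential sums, and handle $k=1,2$ as boundary terms. The only cosmetic differences are that the paper expands $(Ac_k+2b_k+2a_kn)^2$ via the precomputed $a_k,b_k,c_k$ bounds from Lemma~\ref{lemma:dm:herding_bound} rather than the cruder $4(a^2+b^2+c^2+d^2)$ you propose, and arrives at the prefactor $4\rho^{K-3}$ for $r_3^2$ instead of your $3\rho^{K-4}$; neither affects the structure or the final constants.
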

\begin{proof}
This lemma proves a weighted summation sequence for PL case convergence of GraB. Many of the steps are overlapping with Lemma~\ref{lemma:dm:unit_seq} and Lemma~\ref{lemma:dm:herding_bound}.
For brevity, we will omit some of the steps, while their detailed derivations can be found in the proof of Lemma~\ref{lemma:dm:unit_seq} and Lemma~\ref{lemma:dm:herding_bound}.

First we define $r_k$ for any $k\geq 3$ the same as Lemma~\ref{lemma:dm:herding_bound}
\begin{align*}
    r_k = \max_{1\leq t \leq n}\norm{ \sum_{j=1}^{t}\left( \nabla f\left(\*w_{k-1}^{(\sigma_{k-1}^{-1}(\sigma_k(j)))}; \*x_{\sigma_k(j)}\right) - \frac{1}{n} \sum_{s=0}^{n-1} \nabla f\left( \*w_{k-2}^{(s)}; \*x_{\sigma_{k-2}(s)} \right)  \right) }_\infty.
\end{align*}
From Lemma~\ref{lemma:dm:unit_seq} and Lemma~\ref{lemma:dm:herding_bound} we know that for any $k\geq 3$,
\begin{align*}
    r_{k + 1}^2 \leq \frac{1}{2}r_k^2 + \frac{1}{2} \left( Ac_k + 2b_k + 2a_kn \right)^2,
\end{align*}
where the definition of $a_k$, $b_k$ and $c_k$ can be found in Lemma~\ref{lemma:dm:herding_bound}.
We times $\rho^{K-1-(k+1)}$ on both sides and obtain
\begin{align*}
    \rho^{K-1-(k+1)}r_{k + 1}^2 \leq \frac{1}{2\rho}\rho^{K-1-k}r_k^2 + \frac{1}{2\rho} \rho^{K-1-k}\left( Ac_k + 2b_k + 2a_kn \right)^2.
\end{align*}
Summing from $k=3$ to $K-2$, we obtain
\begin{align*}
    \rho^{K-1-(3)}r_{3}^2 + \sum_{k=3}^{K-2}\rho^{K-1-(k+1)}r_{k + 1}^2 \leq \rho^{K-1-(3)}r_{3}^2 + \frac{1}{2\rho}\sum_{k=3}^{K-2}\rho^{K-1-k}r_k^2 + \frac{1}{2\rho}\sum_{k=3}^{K-2}\rho^{K-1-k}\left( Ac_k + 2b_k + 2a_kn \right)^2.
\end{align*}
This implies that
\begin{align*}
    \left( 1 - \frac{1}{2\rho} \right)\sum_{k=3}^{K-1}\rho^{K-1-k}r_k^2 \leq \rho^{K-4}r_{3}^2 + \frac{1}{2\rho}\sum_{k=3}^{K-1}\rho^{K-1-k}\left( Ac_k + 2b_k + 2a_kn \right)^2.
\end{align*}
We times $2\rho$ on both sides, it gives us
\begin{align*}
    \left( 2\rho - 1 \right)\sum_{k=3}^{K-1}\rho^{K-1-k}r_k^2 \leq 2\rho^{K-3}r_{3}^2 + \sum_{k=3}^{K-1}\rho^{K-1-k}\left( Ac_k + 2b_k + 2a_kn \right)^2.
\end{align*}
Note that $2\rho - 1 = 1 - \alpha n\mu \geq \frac{1}{2}$,
so we can get
\begin{align*}
    \sum_{k=3}^{K-1}\rho^{K-1-k}r_k^2 \leq 4\rho^{K-3}r_{3}^2 + 2\sum_{k=3}^{K-1}\rho^{K-1-k}\left( Ac_k + 2b_k + 2a_kn \right)^2.
\end{align*}
From Lemma~\ref{lemma:dm:herding_bound} we know that
\begin{align*}
    r_3^2 \leq 3n^2\varsigma^2 + 27n^2L_{2,\infty}^2\Delta_2^2 + 12n^2L_{2,\infty}^2\Delta_1^2,
\end{align*}
and
\begin{align*}
    \left( Ac_k + 2b_k + 2a_kn \right)^2 \leq & 6A^2\varsigma^2 + 24n^2L_{\infty}^2\Delta_k^2 + 54A^2L_{2,\infty}^2\Delta_{k-1}^2 + 384n^2L_\infty^2\Delta_{k-1}^2 \\
& + 24A^2L_{2,\infty}^2\Delta_{k-2}^2 + 384n^2L_\infty^2\Delta_{k-2}^2.
\end{align*}
For the latter part, we can get
\begin{align*}
    & \sum_{k=3}^{K-1}\rho^{K-1-k}\left( Ac_k + 2b_k + 2a_kn \right)^2 \\
    \leq & \frac{6A^2\varsigma^2}{1-\rho} + 24n^2L_{\infty}^2\sum_{k=3}^{K-1}\rho^{K-1-k}\Delta_k^2 \\
& + 54\rho^{-1}A^2L_{2,\infty}^2\sum_{k=3}^{K-1}\rho^{K-1-(k-1)}\Delta_{k-1}^2 + 384\rho^{-1}n^2L_\infty^2\sum_{k=3}^{K-1}\rho^{K-1-(k-1)}\Delta_{k-1}^2 \\
    & + 24\rho^{-2}A^2L_{2,\infty}^2\sum_{k=3}^{K-1}\rho^{K-1-(k-2)}\Delta_{k-2}^2 + 384\rho^{-2}n^2L_\infty^2\sum_{k=3}^{K-1}\rho^{K-1-(k-2)}\Delta_{k-2}^2 \\
\leq & \frac{6A^2\varsigma^2}{1-\rho} + (792\rho^{-2}n^2L_{\infty}^2 + 78\rho^{-2}A^2L_{2,\infty}^2)\sum_{k=3}^{K-1}\rho^{K-1-k}\Delta_k^2 \\
     & + (24\rho^{-2}A^2L_{2,\infty}^2 + 384\rho^{-2}n^2L_\infty^2)\rho^{K-2}\Delta_1^2 \\
& + (78\rho^{-2}A^2L_{2,\infty}^2 + 768\rho^{-2}n^2L_\infty^2)\rho^{K-3}\Delta_2^2.
\end{align*}

Combine everything together,
\begin{align*}
    \sum_{k=3}^{K-1}\rho^{K-1-k}r_k^2 \leq & 4\rho^{K-3}r_{3}^2 + 2\sum_{k=3}^{K-1}\rho^{K-1-k}\left( Ac_k + 2b_k + 2a_kn \right)^2 \\
\leq & 12\rho^{K-3}n^2\varsigma^2 + 12n^2L_{2,\infty}^2\rho^{K-3}\Delta_2^2 + 48\rho^{-1}n^2L_{2,\infty}^2\rho^{K-2}\Delta_1^2 \\
    & + \frac{12A^2\varsigma^2}{1-\rho} + 2(792\rho^{-2}n^2L_{\infty}^2 + 78\rho^{-2}A^2L_{2,\infty}^2)\sum_{k=3}^{K-1}\rho^{K-1-k}\Delta_k^2 \\
     & + 2(24\rho^{-2}A^2L_{2,\infty}^2 + 384\rho^{-2}n^2L_\infty^2)\rho^{K-2}\Delta_1^2 \\
& + 2(78\rho^{-2}A^2L_{2,\infty}^2 + 768\rho^{-2}n^2L_\infty^2)\rho^{K-3}\Delta_2^2 \\
\leq & 12\rho^{K-3}n^2\varsigma^2 + \frac{12A^2\varsigma^2}{1-\rho} + 2(792\rho^{-2}n^2L_{\infty}^2 + 78\rho^{-2}A^2L_{2,\infty}^2)\sum_{k=3}^{K-1}\rho^{K-1-k}\Delta_k^2 \\
     & + 2(24\rho^{-2}A^2L_{2,\infty}^2 + 384\rho^{-2}n^2L_\infty^2 + 24\rho^{-1}n^2L_{2,\infty}^2)\rho^{K-2}\Delta_1^2 \\
& + 2(78\rho^{-2}A^2L_{2,\infty}^2 + 768\rho^{-2}n^2L_\infty^2 + 6n^2L_{2,\infty}^2)\rho^{K-3}\Delta_2^2.
\end{align*}
Note that similar to Lemma~\ref{lemma:proof:GraB:maximum-dev}, we get
\begin{align*}
& \sum_{k=3}^{K-1}\rho^{K-1-k}\Delta_k^2 \\
\leq & 5\alpha^2\sum_{k=3}^{K-1}\rho^{K-1-k}r_k^2 + 5\alpha^2n^2L_\infty^2\sum_{k=3}^{K-1}\rho^{K-1-k}\Delta_k^2 + 20\alpha^2n^2L_\infty^2\sum_{k=3}^{K-1}\rho^{K-1-k}\Delta_{k-1}^2 \\
    & + 20\alpha^2n^2L_\infty^2\sum_{k=3}^{K-1}\rho^{K-1-k}\Delta_{k-2}^2 + 5\alpha^2n^2\sum_{k=3}^{K-1}\rho^{K-1-k}\norm{ \nabla f(\*w_{k-1}) }_\infty^2 \\
\leq & 5\alpha^2\sum_{k=3}^{K-1}r_k^2 + 45\alpha^2\rho^{-2}n^2L_\infty^2\sum_{k=3}^{K-1}\rho^{K-1-k}\Delta_k^2 + 40\alpha^2\rho^{-2}n^2L_\infty^2\rho^{K-3}\Delta_{2}^2 \\
    & + 20\alpha^2\rho^{-2}n^2L_\infty^2\rho^{K-2}\Delta_{1}^2 + 5\alpha^2n^2\rho^{-1}\sum_{k=2}^{K-1}\rho^{K-1-k}\norm{ \nabla f(\*w_{k}) }_\infty^2 \\
\leq & 60\alpha^2\rho^{K-3}n^2\varsigma^2 + \frac{60\alpha^2A^2\varsigma^2}{1-\rho} + 10\alpha^2(792\rho^{-2}n^2L_{\infty}^2 + 78\rho^{-2}A^2L_{2,\infty}^2)\sum_{k=3}^{K-1}\rho^{K-1-k}\Delta_k^2 \\
     & + 10\alpha^2(24\rho^{-2}A^2L_{2,\infty}^2 + 384\rho^{-2}n^2L_\infty^2 + 24\rho^{-1}n^2L_{2,\infty}^2)\rho^{K-2}\Delta_1^2 \\
& + 10\alpha^2(78\rho^{-2}A^2L_{2,\infty}^2 + 768\rho^{-2}n^2L_\infty^2 + 6n^2L_{2,\infty}^2)\rho^{K-3}\Delta_2^2 \\
    & + 45\alpha^2\rho^{-2}n^2L_\infty^2\sum_{k=3}^{K-1}\rho^{K-1-k}\Delta_k^2 + 40\alpha^2\rho^{-2}n^2L_\infty^2\rho^{K-3}\Delta_{2}^2 \\
    & + 20\alpha^2\rho^{-2}n^2L_\infty^2\rho^{K-2}\Delta_{1}^2 + 5\alpha^2n^2\rho^{-1}\sum_{k=2}^{K-1}\rho^{K-1-k}\norm{ \nabla f(\*w_{k}) }_\infty^2 \\
\leq & 60\alpha^2\rho^{K-3}n^2\varsigma^2 + \frac{60\alpha^2A^2\varsigma^2}{1-\rho} + \rho^{K-2}\Delta_{1}^2 + \rho^{K-3}\Delta_{2}^2 + \frac{1}{2}\sum_{k=3}^{K-1}\rho^{K-1-k}\Delta_k^2 \\
    & + 5\alpha^2n^2\rho^{-1}\sum_{k=2}^{K-1}\rho^{K-1-k}\norm{ \nabla f(\*w_{k}) }_\infty^2.
\end{align*}
where in the last step, we apply the learning rate requirement that
\[
    \alpha \leq \min\left\{ \frac{1}{n\mu}, \frac{1}{nL}, \frac{1}{52(n+A)L_{2,\infty}}, \frac{1}{520nL_\infty}\right\}.
\]
Solve for the LHS we obtain
\begin{align*}
    \sum_{k=3}^{K-1}\rho^{K-1-k}\Delta_k^2 \leq & 120\alpha^2\rho^{K-3}n^2\varsigma^2 + \frac{120\alpha^2A^2\varsigma^2}{1-\rho} + 2\rho^{K-2}\Delta_{1}^2 + 2\rho^{K-3}\Delta_{2}^2 \\
    & + 10\alpha^2n^2\rho^{-1}\sum_{k=2}^{K-1}\rho^{K-1-k}\norm{ \nabla f(\*w_{k}) }_\infty^2.
\end{align*}
Note that when $k=1,2$,
\begin{align*}
    \rho^{K-1-k}\Delta_k^2 = 8\rho^{K-1-k}\alpha^2n^2\norm{ \nabla f(\*w_{k}) }_\infty^2 + 8\rho^{K-1-k}\alpha^2n^2\varsigma^2,
\end{align*}
Combine everything together we get
\begin{align*}
    \sum_{k=1}^{K-1}\rho^{K-1-k}\Delta_k^2 \leq & 1024\alpha^2\rho^{K}n^2\varsigma^2 + \frac{120\alpha^2A^2\varsigma^2}{1-\rho} + 64\alpha^2n^2\sum_{k=1}^{K-1}\rho^{K-1-k}\norm{ \nabla f(\*w_{k}) }_\infty^2.
\end{align*}
That completes the proof.
\end{proof}

\end{document}